\documentclass{article}

\usepackage[preprint]{neurips_2026}

\usepackage[utf8]{inputenc} % allow utf-8 input
\usepackage[T1]{fontenc}    % use 8-bit T1 fonts
\usepackage{hyperref}       % hyperlinks
\usepackage{url}            % simple URL typesetting
\usepackage{booktabs}       % professional-quality tables
\usepackage{amsfonts}       % blackboard math symbols
\usepackage{nicefrac}       % compact symbols for 1/2, etc.
\usepackage{microtype}      % microtypography
\usepackage{xcolor}         % colors

\usepackage{amsmath}
\usepackage{amsthm}
\usepackage[textsize=tiny]{todonotes}
\usepackage{cleveref}
\usepackage{enumitem}
\usepackage{multirow}
\usepackage{graphicx}
\usepackage{caption}
\usepackage{wrapfig}
\usepackage{array}
\usepackage{amssymb}
\usepackage{pifont}
\usepackage{bbm}
\usepackage[table,xcdraw,dvipsnames]{xcolor}
\usepackage{titletoc} 
\usepackage{algorithm}
\usepackage{algpseudocode}

\usepackage{colortbl}
\definecolor{LightGreen}{HTML}{E8F5E9}
\definecolor{LightOrange}{HTML}{FFF3E0} % Delicate orange

\usepackage{listings}

\definecolor{codeblue}{rgb}{0.15, 0.35, 0.75}
\definecolor{codepink}{rgb}{0.85, 0.25, 0.55}
\definecolor{codegreen}{rgb}{0.35, 0.60, 0.55}

\lstdefinelanguage{PseudoPython}{
    language=Python,
    basicstyle=\ttfamily\small,
    commentstyle=\color{codegreen},
    keywordstyle=\color{codepink},
    morekeywords={stopgrad, autograd, dsm_loss, norm, has_csv, get_threshold, where, kthvalue},
    literate=
      {=}{{\textcolor{codeblue}{=}}}{1}
      {+}{{\textcolor{codeblue}{+}}}{1}
      {-}{{\textcolor{codeblue}{-}}}{1}
      {*}{{\textcolor{codeblue}{*}}}{1}
      {/}{{\textcolor{codeblue}{/}}}{1}
      {>}{{\textcolor{codeblue}{>}}}{1}
      {<}{{\textcolor{codeblue}{<}}}{1},
    showstringspaces=false,
    frame=tb, % Top and bottom lines
    rulecolor=\color{black},
}

\newcommand{\btheta}{\boldsymbol{\theta}}
\newcommand{\x}{\mathbf{x}}
\newcommand{\xt}{\mathbf{x}_t}
\newcommand{\diff}{\text{d}}

\newcommand{\ldriftcoef}{\mathbf{F}_t\mathbf{x}_t}

\newcommand{\ldiffcoef}{\mathbf{G}_t}
\newcommand{\fode}{\mathbf{F}_{\text{ODE}}(\xt, t)}
\newcommand{\fodet}{\mathbf{F}_{\text{ODE}}^{\btheta}(\xt, t)}
\newcommand{\generator}{\boldsymbol{\mathcal{G}}_{\btheta}}
\newcommand{\vareps}{\boldsymbol{\epsilon}}
\newcommand{\wiener}{\mathbf{w}_t}
\newcommand{\rwiener}{\overline{\mathbf{w}}_t}
\newcommand{\score}{\nabla_{\mathbf{x}_t}{\log{p(\mathbf{x}_t)}}}
\newcommand{\scorec}{\nabla_{\mathbf{x}_t}{\log{p(\mathbf{x}_t\mid\mathbf{x}_0)}}}

\newcommand{\expval}{\mathbb{E}}
\newcommand{\ie}{\emph{i.e.}}
\newcommand{\eg}{\emph{e.g.}}
\newcommand{\norm}[1]{\left\|#1\right\|}
\newcommand{\Tr}{\operatorname{Tr}}

\newcommand{\tvf}{\operatorname{TVF}}
\newcommand{\cov}{\operatorname{Cov}}
\newcommand{\var}{\operatorname{Var}}
\newcommand{\lmi}{{\operatorname{LMI}}}
\newcommand{\lid}{\operatorname{LID}}
\newcommand{\lidt}{\lid_{\boldsymbol{\theta}}}
\newcommand{\iq}{\operatorname{IQ}}
\newcommand{\pmptheta}{\hat{\x}_0^{\btheta}(\x_t)}

\algnewcommand{\manifoldt}{\boldsymbol{\mathcal{M}}_{\boldsymbol{\theta}}}

\newtheorem{theorem}{Theorem}
\newtheorem{corollary}{Corollary}
\newtheorem{proposition}{Proposition}

\newtheorem{assumption}{Assumption}

\makeatletter
\renewenvironment{proof}[1][\proofname]{\par
  \pushQED{\qed}%
  \normalfont \topsep6\p@\@plus6\p@\relax
  \trivlist
  \item[\hskip\labelsep
        \textbf{#1.}]\ignorespaces
}{%
  \popQED\endtrivlist\@endpefalse
}
\makeatother

\title{Local Intrinsic Dimension Unveils \\Hallucinations in Diffusion Models}

\author{%
  Bartlomiej Sobieski\textsuperscript{1,2,3,4}\thanks{Corresponding author at \href{b.sobieski@uw.edu.pl}{b.sobieski@uw.edu.pl}}, Matthew Tivnan\textsuperscript{3,4}, Dawid Płudowski\textsuperscript{1},\\ \textbf{Michał Jan Włodarczyk\textsuperscript{1}},
  \textbf{Pengfei Jin\textsuperscript{3,4}}, \textbf{Przemyslaw Biecek\textsuperscript{1,2}}, \textbf{Quanzheng Li\textsuperscript{3,4}} \\[0.2cm]
  \textsuperscript{1}Centre for Credible AI, Warsaw University of Technology, 
  \textsuperscript{2}University of Warsaw,\\ \textsuperscript{3}Massachusetts General Hospital, \textsuperscript{4}Harvard Medical School
}

\begin{document}

\maketitle

\begin{abstract}
Diffusion models are prone to generating structural hallucinations - samples that match the statistical properties of the training data yet defy underlying structural rules, resulting in anomalies like hands with more than five fingers. Recent research studied this failure mode from several viewpoints, offering partial explanations to their occurrence, such as mode interpolation. In this work, we propose a complementary perspective that treats hallucinations as instabilities on the model-induced manifold. We begin by showing that a hallucination filter based on such instabilities matches or exceeds the performance of the recently proposed temporal one. By tracing the source of these instabilities, we identify local intrinsic dimension (LID) as their primary driver and propose Intrinsic Quenching (IQ), a direct corrective mechanism that deflates it to alleviate hallucinations. IQ consistently outperforms standard hallucination reduction baselines across a wide array of benchmarks and offers a highly promising solution for enforcing anatomical consistency in downstream medical imaging tasks.
\end{abstract}

\section{Introduction}
\vspace{-1em}
\begin{minipage}[t]{0.48\textwidth}
  \vspace{0pt} 
Diffusion models \citep[DMs;][]{sohl2015deep,songscore,lipmanflow,liu2022,albergo2023building,song2023consistency,geng2025mean} drive modern generative modeling across computer vision and other modalities \citep{dhariwal2021beat,lou2024discrete,kong2021diffwave,kotelnikov2023tabddpm,hoogeboom2022equivariant}. Recent research highlights their critical failure mode termed \emph{hallucinations}: generated samples that successfully capture the statistical properties of the training data, but fundamentally violate its underlying physical, logical, or morphological patterns \citep{ramesh2021zero,rombach2022high}.
\end{minipage}
\hfill
\begin{minipage}[t]{0.49\textwidth}
  \vspace{0pt}
  \centering
  \includegraphics[width=\textwidth]{figures/introduction_teaser.pdf}
  \captionof{figure}{A diffusion model-induced manifold, approximating the data manifold, invents spurious dimensions in unstable regions, leading to structural hallucinations.}
  \label{fig:teaser}
\end{minipage}

In particular, \emph{structural} hallucinations \citep{kim2024tackling} fail to recover correct object forms, generating anomalies like six-fingered hands or misaligned eyes.
Lacking a formal definition, identifying them relies strictly on human perception, restricting research to empirical and correlational studies \citep{oorloff2025mitigating,triaridis2025mitigating,tian2025rods}. Despite initial explanatory hypotheses \citep{aithal2024understanding}, understanding their root causes and prevention remains an open question.

In this work, we propose shifting the analysis of structural hallucinations from temporal generative behavior \citep{aithal2024understanding,tian2025rods} to the local instability of the model-induced manifold. Specifically: \textbf{C1.} We demonstrate that a spatial hallucination filter based on local geometric instability matches or exceeds the recently proposed temporal filter \citep{aithal2024understanding}, identifying the inflation of local intrinsic dimension (\(\lid\)) as the primary indicator. \textbf{C2.} We theoretically formulate \emph{Intrinsic Quenching} (\(\iq\)), a thermodynamically-inspired corrective mechanism that dynamically deflates \(\lid\) of the generated sample on the model-induced manifold throughout the generative process, reducing the hallucinatory behavior.
\textbf{C3.} Through large-scale, human-annotated evaluations, we show that \(\iq\) outperforms all baselines in reducing structural hallucinations and offers a promising solution for enforcing anatomical consistency in downstream medical imaging tasks.

\section{Background}

DMs can be formulated through the framework of stochastic differential equations (SDEs):
\begin{align}
\diff \mathbf{x}_t &= \ldriftcoef \diff t + \ldiffcoef \diff \wiener,\label{eq:linear_forward_sde} \\
\diff \mathbf{x}_t &= [\ldriftcoef - \ldiffcoef \ldiffcoef^\top \score] \diff t + \ldiffcoef \diff \rwiener, \label{eq:linear_reverse_sde}
\end{align}
where \(\ldriftcoef \) is a linear \emph{drift term} with $\x_t\in\mathbb{R}^n$ and time-dependent matrix \(\mathbf{F}_t \in \mathbb{R}^{n\times n}\), \(\ldiffcoef\in \mathbb{R}^{n\times n}\) is a matrix-valued \emph{diffusion} coefficient, \(\wiener\in \mathbb{R}^{n}\) and \(\rwiener\in \mathbb{R}^{n}\) are the Wiener processes running forward and reverse in time respectively, \(\score\) is the \emph{score} function of the time-parameterized distribution $p(\x_t), t\in[0,1]$, where $p(\x_0)$ denotes the data and $p(\x_1)$ a terminal distribution, typically chosen to be Gaussian. A DM, parameterized by \(\btheta\), is trained to approximate the score function.  

\Cref{eq:linear_forward_sde} defines the \emph{forward} process, which gradually corrupts data samples into Gaussian noise, whereas \cref{eq:linear_reverse_sde} defines the \emph{reverse} process, which generates the data through iterative denoising. The Probability Flow ODE (PF-ODE) is the deterministic counterpart of \cref{eq:linear_reverse_sde} given by
% \begin{equation}
    \(\fode \triangleq \frac{\diff \mathbf{x}_t}{\diff t } = [\ldriftcoef - \frac{1}{2}\ldiffcoef \ldiffcoef^\top \score].\)
% \end{equation}
Substituting the \(\btheta\)-parameterized model's approximated score, we define the \emph{generator} as 
\begin{equation}
\generator(\x_1)=\x_1 - \int_0^1 \fodet\diff t,
\end{equation} 
denoting the model-based mapping of a sample \(\x_1\sim p(\x_1)\) to the approximate data distribution \(p_{\btheta}(\x_0)\) given by the DM. At the center of our reasoning, we rely on the following assumption.
\begin{assumption}\label{ass:stratified_manifold}
   We follow the stratified manifold hypothesis \citep{goresky1988stratified}, which posits that high-dimensional data resides on a disjoint collection of low-dimensional submanifolds, and assume that a DM is able to learn a statistical approximation of these strata. 
\end{assumption}
Based on \cref{ass:stratified_manifold}, we define \(
\manifoldt = \left\{ \x_0 \ \middle| \ \exists_{\x_1 \in \mathbb{R}^n} \x_0 = \generator(\x_1) \right\}
\), the set of all possible outputs of the generator, and refer to it as the model-induced manifold for simplicity. This construction is justified by foundational works and recent research on the geometry of DMs \citep{fefferman2016testing,pidstrigach2022score,pmlr-v235-stanczuk24a}.

Training DMs relies on a tractable formula for the forward kernel \(p(\x_t\mid\x_0) =\mathcal{N}(\mathbf{H}_t\x_0, \mathbf{\Sigma}_t)\), where \(\mathbf{H}_t = \mathbf{\Phi}(t, 0)\) with \(\mathbf{\Phi}(t, s) = \exp\left(\int_s^t \mathbf{F}_u \diff u\right)\), assuming $\mathbf{F}_t$ commutes for all $t$, and \(\mathbf{\Sigma}_t = \int_0^t \mathbf{\Phi}(t, \tau) \mathbf{G}_\tau \mathbf{G}_\tau^\top \mathbf{\Phi}(t, \tau)^\top \diff\tau\). To train a DM, one can either use $\scorec$ as target or utilize equivalent objectives such as
\begin{align}
\mathcal{L}_{\text{DSM}}(\x_0, t, \btheta) &\triangleq  \expval_{\vareps \sim \mathcal{N}(\mathbf{0}, \mathbf{I})} \left[ \norm{\vareps - \vareps_{\btheta}(\x_t)}_2^2 \right], \label{eq:l_dsm} \\
\mathcal{L}_{\text{ISM}}(\x_0, t, \btheta) &\triangleq \expval_{\vareps \sim \mathcal{N}(\mathbf{0}, \mathbf{I})} \left[ \Tr{(\boldsymbol{\Sigma}_t \nabla_{\x_t} \mathbf{s}_{\btheta}(\x_t))} + \frac{1}{2} \norm{\mathbf{s}_{\btheta}(\x_t)}_{\boldsymbol{\Sigma}_t}^2 \right]\label{eq:l_ism}
\end{align}
where \(\x_t=\mathbf{H}_t\x_0 + \boldsymbol{\Sigma}_t^{\frac{1}{2}}\vareps\), in each case marginalizing over $\x_0$ and $t$ to learn the underlying score function \citep{JMLR:v6:hyvarinen05a,vincent2011connection,yeats2025connection}. We refer to \(\vareps_{\boldsymbol{\theta}}\) as the noise prediction and \(\mathbf{s}_{\boldsymbol{\theta}}\) as the score prediction. 

Thanks to the Tweedie's formula \citep{efron2011tweedie,alain2014regularized}, one can directly relate the score function to the mean of the posterior \(p(\x_0\mid\x_t)\) through
\begin{equation}
\hat{\x}_0(\x_t) \triangleq \mathbb{E}[\x_0 \mid \x_t] = \mathbf{\Phi}(t, 0)^{-1} \left( \x_t + \mathbf{\Sigma}_t \score \right).\label{eq:tweedie}
\end{equation}
We denote by \(\pmptheta\) the result of replacing the true score function in \cref{eq:tweedie} with \(\mathbf{s}_{\btheta}\).

Arising from the manifold hypothesis, \(\lid\) \citep{levina2004maximum,bengio2013representation} encodes the local tangent space dimensionality, representing the valid degrees of freedom within a data stratum \citep{pope2021the}. Historically a diagnostic tool in early deep learning research \citep{ma2018characterizing,ansuini2019intrinsic}, \(\lid\) can now be natively estimated by DMs. This capability, which justifies \cref{ass:stratified_manifold}, relies on \(\pmptheta\) acting as an approximate orthogonal projector for small \(t\). While early DM-based estimators were computationally heavy \citep{tempczyk2022lidl,pmlr-v235-stanczuk24a,kamkari2024geometric}, \citet{yeats2025connection} demonstrated that standard training losses (\cref{eq:l_dsm,eq:l_ism}) provide efficient upper bounds of \(\lid\).
\section{Empirical Investigation}\label{sec:empirical_investigation}
The initial hypothesis given by \citet{aithal2024understanding} suggested that DMs' hallucinations stem from interpolating between the modes of the data distribution and thus amplifying the regions of low probability. To filter out hallucinations post-generation, the authors proposed the Trajectory Variance Filter (\(\tvf\)) defined as \( 
    \tvf(\x_0) = \int_{t_1}^{t_2} \norm{\pmptheta - \overline{ \hat{\x}^{\btheta}_{0,t_1:t_2}}}_2^2 \diff t,
\)
where \(\overline{ \hat{\x}^{\btheta}_{0,t_1:t_2}}\) is the average predicted posterior mean over the \([t_1,t_2]\) interval. This represents the \emph{temporal} view, which is used to analyze hallucinations using the intermediate properties of the reverse process. We propose to shift that perspective and instead focus on the \emph{spatial} view by analyzing the local geometry of the DM on the induced manifold.

We define the Local Manifold Instability ($\lmi$) of $\generator$ for an initial point $\x_1$ as
\begin{equation}
    \lmi(\x_1)\triangleq \Tr(\cov_{\boldsymbol{\varepsilon}}(\generator(\x_1+\boldsymbol{\varepsilon})))=\var_{\boldsymbol{\varepsilon}}(\generator(\x_1+\boldsymbol{\varepsilon})),\label{eq:lmi}
\end{equation}
where $\boldsymbol{\varepsilon}\sim\mathcal{N}(\mathbf{0}, \beta^2\mathbf{I})$ for some small $\beta>0$, often also referred to as \emph{local sensitivity} \citep{cacuci2005sensitivity}. Intuitively, $\lmi$ measures the total spatial spread of a small spherical region of noise after it is transported to the data space, providing a proxy for exploring the local geometry of a given DM.

\textbf{Geometric perspective.} We begin with a core question: \emph{If structural hallucinations can be partially captured by temporal variance during the reverse process, do these failure modes also manifest as localized spatial instabilities on the model-induced manifold?} To assess it, we compare the performance of \(\tvf\) and \(\lmi\) as hallucination filters on a benchmark of hand images from the \texttt{11kHands} dataset \citep{afifi201911kHands}, where they are easily recognizable as hands with deformed, missing or additional fingers. We use the capable EDM model \citep{karras2022elucidating} trained by \citet{tian2025rods} with a deterministic 40-step Euler solver to generate 128 samples. To decide whether a given sample is correct or represents a structural hallucination, we conduct a user study with independent human annotators to label the samples in a binary manner.
For \(\tvf\), we collect \(\pmptheta\) for each \(t\) throughout generation and then optimize \(t_1\) and \(t_2\) for best performance. For \(\lmi\), we estimate it with 32 perturbations per sample for a chosen \(\beta\). For more details, see \cref{sec:user_study}.

\Cref{fig:filter_performance} depicts densities for both filters divided between correct and hallucinated samples, as well as classification and separability performance metrics. Interestingly, both filters are highly effective in separating correct and hallucinated samples, with a minor advantage of $\lmi$ despite choosing the optimal time interval for $\tvf$. While the practical applicability of \(\lmi\) is limited due to its computational costs based on running the reverse process multiple times, these results provide a novel identification of hallucinations as unstable states on the model-induced manifold. We extend this experiment to other datasets in \cref{sec:filter_performance}, showing that while the performance of these two filters varies, the relationship between them is preserved.

To better understand the predictive ability of \(\lmi\), we provide a proposition that connects it with the \(\lid\) of the generated sample on the model-induced manifold, which we denote as \(\lidt\).

\begin{proposition}\label{prop:lmi_ub}
Let \(\x_0=\generator(\x_1)\) and assume a sufficiently small perturbation scale \(\beta > 0\) such that a first-order linear approximation holds. Let \(\sigma_1 \geq \sigma_2 \geq \dots \geq \sigma_n \geq 0\) denote the singular values of the generator's Jacobian \(\nabla_{\x_1}\generator(\x_1)\), where \(n\) is the ambient dimension. Then it holds that
\begin{equation}
    \lmi(\x_1) \approx \beta^2 \sum_{i=1}^{\lidt(\x_0)} \sigma_i^2.
\end{equation}
\end{proposition}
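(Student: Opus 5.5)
Write $\mathbf{J}\triangleq\nabla_{\x_1}\generator(\x_1)\in\mathbb{R}^{n\times n}$ for the generator's Jacobian. The plan is to linearize the generator around $\x_1$, compute the covariance of the pushed-forward Gaussian exactly, and then use \cref{ass:stratified_manifold} to argue that only $\lidt(\x_0)$ singular values of $\mathbf{J}$ are non-negligible.

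\textbf{Step 1 (linearization).} Under the small-$\beta$ hypothesis, $\generator(\x_1+\boldsymbol{\varepsilon})\approx \x_0+\mathbf{J}\boldsymbol{\varepsilon}$. Since $\boldsymbol{\varepsilon}\sim\mathcal{N}(\mathbf{0},\beta^2\mathbf{I})$, the pushed-forward sample is approximately Gaussian with covariance $\beta^2\mathbf{J}\mathbf{J}^\top$. Substituting into \cref{eq:lmi} gives
\begin{equation*}
\lmi(\x_1)\approx\beta^2\Tr(\mathbf{J}\mathbf{J}^\top)=\beta^2\norm{\mathbf{J}}_F^2=\beta^2\sum_{i=1}^n\sigma_i^2 .
\end{equation*}
The error from the second-order Taylor term is $O(\beta^3)$ in expectation, or $O(\beta^4)$ when third moments vanish. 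It is therefore negligible relative to $\beta^2$, and this is exactly what the first-order assumption in the statement licenses.

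\textbf{Step 2 (rank truncation).} It remains to show that $\sum_{i>\lidt(\x_0)}\sigma_i^2\approx 0$. By \cref{ass:stratified_manifold}, the image of $\generator$ near $\x_1$ lies in $\manifoldt$. Locally around $\x_0$, this set is a stratum of dimension $k=\lidt(\x_0)$. Every curve $\gamma(s)=\generator(\x_1+s\mathbf{v})$ stays on this stratum, so $\gamma'(0)=\mathbf{J}\mathbf{v}$ lies in the tangent space $T_{\x_0}\manifoldt$. Hence $\operatorname{range}(\mathbf{J})\subseteq T_{\x_0}\manifoldt$, which gives $\operatorname{rank}(\mathbf{J})\le k$ and $\sigma_{k+1}=\dots=\sigma_n=0$. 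Truncating the sum from Step 1 then yields the claim.

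\textbf{Main obstacle.} The difficulty is that $\manifoldt$ is only a statistical approximation of the strata. Strictly speaking, a learned flow map from $\mathbb{R}^n$ is a diffeomorphism, so it has full rank. The honest version of Step 2 therefore states that the trailing singular values are small rather than exactly zero, so the "$\approx$" must absorb them. To justify this, I would use the fact noted in the background that $\pmptheta$ acts as an approximate orthogonal projector onto the tangent space for small $t$. The final integration steps of the PF-ODE then contract normal directions strongly, so $\mathbf{J}=\mathbf{P}\tilde{\mathbf{J}}+\mathbf{E}$ with $\mathbf{P}$ the rank-$k$ tangent projector and $\norm{\mathbf{E}}$ small. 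Weyl's inequality for singular values then bounds $\sigma_{k+1},\dots,\sigma_n$ by $\norm{\mathbf{E}}$. I would take this as the working definition of $\lidt$: the number of non-negligible singular directions. This matches how diffusion-based $\lid$ estimators define it, which keeps the argument consistent with the cited estimators.
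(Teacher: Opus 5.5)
Your proposal is correct and follows the paper's proof. You linearize to get $\lmi(\x_1)\approx\beta^2\norm{\mathbf{J}}_F^2=\beta^2\sum_{i=1}^n\sigma_i^2$ and then invoke \cref{ass:stratified_manifold} to discard the trailing $n-\lidt(\x_0)$ singular values; the paper likewise concedes that $\mathbf{J}$ is practically full rank and treats $\lidt(\x_0)$ as the count of significant singular values. Your Weyl-inequality refinement goes beyond the paper, but it only bounds each tail singular value by $\norm{\mathbf{E}}_2$, which controls the tail sum only up to a factor $n-\lidt(\x_0)$. Eckart--Young--Mirsky, applied to the matrix $\mathbf{P}\tilde{\mathbf{J}}$ of rank at most $\lidt(\x_0)$, gives $\sum_{i>\lidt(\x_0)}\sigma_i^2\le\norm{\mathbf{E}}_F^2$ directly, and that bound on the sum is what the truncation actually needs.
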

We provide the proofs and full formal statements of all theoretical results in \cref{sec:theoretical_results}.

\Cref{ass:stratified_manifold} implies that \(\lidt(\x_0)\) measures the \emph{effective} dimensionality (significantly non-zero singular values) of \(\x_0\) at \(\manifoldt\) when considering a theoretically full-rank \(\generator\). \Cref{prop:lmi_ub} hence instantly reveals the reasons for the inflated \(\lmi\): it is caused by \textbf{R1.} over-estimating the true singular values (of the true stratified manifold), thus resulting in more rapid but plausible modifications under perturbation, or \textbf{R2.} the invention of spurious, off-manifold directions that influence the generated sample in an implausible or artifactual manner.

With \cref{prop:lidt_estimators}, we extend the result of \citet{yeats2025connection} to show that the losses in \cref{eq:l_dsm,eq:l_ism}, shown to upper-bound the true \(\lid\), also form unbiased estimators of \(\lidt\).

\begin{proposition}\label{prop:lidt_estimators}
Let \(\x_0=\generator(\x_1)\). Under mild regularity conditions, it is true that
\(
\mathcal{L}_{\text{DSM}}(\x_0, t, \btheta) = \lidt(\x_0),
\)
\(
\mathcal{L}_{\text{ISM}}(\x_0, t, \btheta) = -\frac{1}{2}(n - \lidt(\x_0)),\)
where \(n\) is the dimension of the ambient space and \(t\) is sufficiently small.
\end{proposition}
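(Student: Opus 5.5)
The argument is a small-$t$ linearization in the spirit of \citet{yeats2025connection}, but applied to the model's own score rather than the true data score. \Cref{ass:stratified_manifold} provides the key fact: the learned posterior mean $\pmptheta$ acts, for small $t$, as an approximate orthogonal projector onto the tangent space of $\manifoldt$ at $\x_0$. Write $\mathbf{P}$ for the orthogonal projector onto the $d\triangleq\lidt(\x_0)$-dimensional tangent space $T_{\x_0}\manifoldt$. To keep the computation transparent, I would first treat the isotropic case $\mathbf{H}_t\approx\mathbf{I}$, $\boldsymbol{\Sigma}_t=\sigma_t^2\mathbf{I}$ with $\sigma_t\to0$. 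The general linear SDE then reduces to this case after whitening by $\boldsymbol{\Sigma}_t^{1/2}$ and absorbing $\mathbf{H}_t\to\mathbf{I}$ as $t\to0$.

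\textbf{Step 1: first-order form of the learned score.} Under the "mild regularity conditions", the model is smooth and $\x_0\in\manifoldt$ is a regular point of its stratum. Then for $\x_t=\x_0+\sigma_t\vareps$ I would write
\[
\pmptheta\approx \x_0+\mathbf{P}(\x_t-\x_0).
\]
Inverting Tweedie's formula (\cref{eq:tweedie}) gives
\[
\mathbf{s}_{\btheta}(\x_t)\approx-\sigma_t^{-2}(\mathbf{I}-\mathbf{P})(\x_t-\x_0),
\qquad
\vareps_{\btheta}(\x_t)=-\sigma_t\mathbf{s}_{\btheta}(\x_t)\approx(\mathbf{I}-\mathbf{P})\vareps .
\]
This encodes that the model's denoiser removes exactly the normal component of the noise relative to its \emph{own} manifold. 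That is the point where $\lidt$ replaces the true $\lid$ from Yeats et al.

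\textbf{Step 2: DSM.} Substituting Step 1 into \cref{eq:l_dsm} gives
\[
\mathcal{L}_{\text{DSM}}=\expval\|\mathbf{P}\vareps\|^2=\Tr(\mathbf{P})=d=\lidt(\x_0).
\]

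\textbf{Step 3: ISM.} First, $\nabla_{\x_t}\mathbf{s}_{\btheta}\approx-\sigma_t^{-2}(\mathbf{I}-\mathbf{P})$, so
\[
\Tr(\boldsymbol{\Sigma}_t\nabla\mathbf{s}_{\btheta})=-(n-d).
\]
Second,
\[
\tfrac12\|\mathbf{s}_{\btheta}\|^2_{\boldsymbol{\Sigma}_t}=\tfrac12\sigma_t^{2}\,\sigma_t^{-4}\,\sigma_t^2\|(\mathbf{I}-\mathbf{P})\vareps\|^2,
\]
whose expectation is $\tfrac12(n-d)$. Summing the two terms yields $-\tfrac12(n-\lidt(\x_0))$, as claimed. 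For general $\boldsymbol{\Sigma}_t$, I would repeat the computation with $\mathbf{P}$ replaced by the projector in the $\boldsymbol{\Sigma}_t^{-1/2}$-whitened coordinates; both traces are invariant under this change.

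\textbf{Main obstacle.} The hard part is justifying Step 1 rigorously, specifically that the learned denoiser linearizes to a projector of rank exactly $\lidt(\x_0)$. This is where the regularity assumptions are needed: smoothness of $\mathbf{s}_{\btheta}$ near $\manifoldt$, and that $\x_0$ is not near a stratum boundary so the local dimension is constant. The equalities also hold only up to $o(1)$ terms as $t\to0$. Curvature contributes $O(\sigma_t)$ corrections, which I would bound with a second-order Taylor remainder and then send $t\to0$. I would state the result in that limiting sense, consistent with the "sufficiently small $t$" phrasing.
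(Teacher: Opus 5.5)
Your argument is correct, and the DSM half is the paper's argument. The paper also replaces \(\pmptheta\) by a projection of the noisy state onto the tangent space of \(\manifoldt\). For general \(\mathbf{H}_t,\boldsymbol{\Sigma}_t\) it uses the \(\boldsymbol{\Sigma}_t^{-1}\)-self-adjoint projector \(\mathbf{P}\) onto \(\mathbf{H}_t\mathcal{T}_{\x_0}\manifoldt\), and checks that \(\boldsymbol{\Sigma}_t^{-1/2}\mathbf{P}\boldsymbol{\Sigma}_t^{1/2}\) is a Euclidean orthogonal projector of rank \(\lidt(\x_0)\). That is your whitening step. The paper keeps \(\mathbf{H}_t\) arbitrary rather than sending it to \(\mathbf{I}\), since an invertible \(\mathbf{H}_t\) does not change the rank.

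The genuine difference is the ISM half.
\begin{itemize}
\item \textbf{Your route.} You evaluate both ISM terms directly from the linearized score. The trace term forces you to differentiate that approximation, i.e.\ to assume the network's Jacobian \(\nabla_{\x_t}\pmptheta\) converges to \(\mathbf{P}\). That is a first-order statement about the learned denoiser, which Step 2 does not need.
\item \textbf{The paper's route.} It uses the Stein (Gaussian integration-by-parts) identity \(\mathcal{L}_{\text{DSM}} = n + 2\,\mathcal{L}_{\text{ISM}}\). This holds exactly for any sufficiently regular \(\mathbf{s}_{\btheta}\), because \(\expval[\Tr(\boldsymbol{\Sigma}_t\nabla_{\x_t}\mathbf{s}_{\btheta})] = -\expval[\vareps^\top\vareps_{\btheta}(\x_t)]\). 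The ISM value is then inherited from the DSM value. Only the value-level linearization \(\pmptheta-\x_0\approx\mathbf{P}(\x_t-\x_0)\), up to \(o(\sigma_t)\), is ever needed.
\end{itemize}

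What your route buys is transparency: the \(n-\lidt(\x_0)\) contracted normal directions contribute \(-(n-\lidt(\x_0))\) through the trace term and \(+\tfrac12(n-\lidt(\x_0))\) through the norm term. It also serves as a consistency check on the linearization. The paper's route is leaner in assumptions. Replacing your Step 3 with the Stein identity would remove the Jacobian part of the obstacle you flagged, leaving only the justification Step 2 already needs.
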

Hence, \cref{prop:lidt_estimators} provides a tool to investigate the individual influences of \textbf{R1.} and \textbf{R2.} on the inflation of \(\lmi\) by estimating \(\lidt\) for both correct and hallucinated samples. Crucially, it also results in a continuous relaxation of the problem of estimating \(\lidt\), a theoretically discrete quantity.

Under the same experimental setup, we compare the performance of \(\lidt\) as a hallucination filter with \(\tvf\) and \(\lmi\) using \(\mathcal{L}_{\text{DSM}}\)-based estimator (\cref{eq:l_dsm}) with 32 noise samples for approximating the expectation. \Cref{fig:filter_performance} (right) first shows how this performance depends on \(t\), revealing large variability and the importance of choosing a small enough timestep, following the assumption of \cref{prop:lidt_estimators}. Crucially, \cref{fig:filter_performance} (middle) shows that \(\lidt\) significantly outperforms both \(\lmi\) and \(\tvf\) using the optimal \(t\), providing the best separability and classification performance with higher values indicating hallucinations. 

From a geometric perspective, this result reveals the primary origin of the local instabilities; hallucinations occur when the DM overestimates the complexity of the generated samples by inflating their \(\lidt\). This result suggests that hallucinations appear when the model-induced manifold possesses unnecessary directions of expansion, which represent illogical degrees of freedom, such as a varying number of fingers or the possibility of more than one thumb (\textbf{R2.}). This also explains the relatively worse performance of \(\lmi\), which is simultaneously inflated when the variability along entirely valid directions increases (\textbf{R1.}).

\begin{figure}
    \centering
    \includegraphics[width=0.99\linewidth]{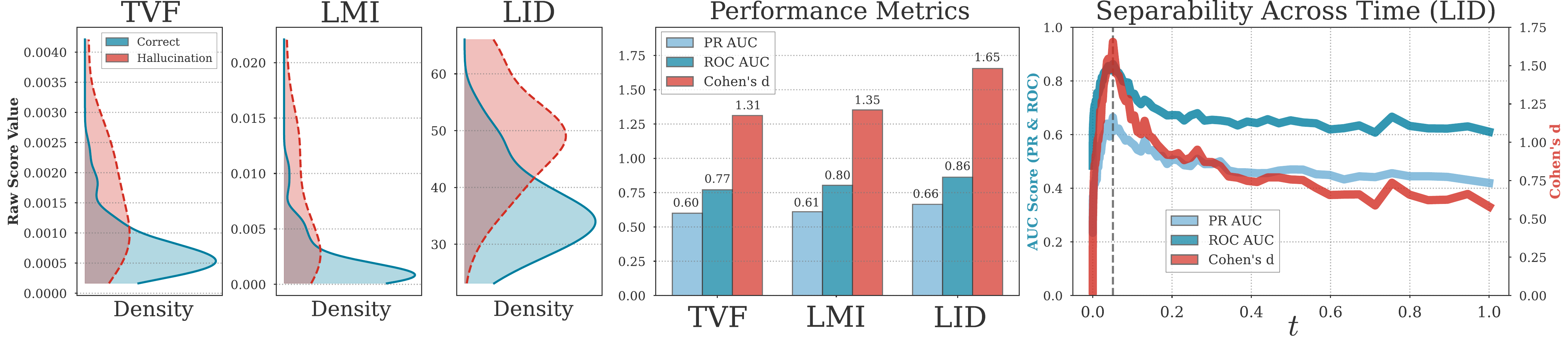}
    \caption{\textbf{Left.} Estimated densities for the values of each filter across correct and hallucinated samples. \textbf{Middle.} Separability and classification performance metrics for each filter. \textbf{Right.} Evolution of \(\lid\) filter performance across the time interval. Vertical line indicates the optimal timestep. All results are reported on \texttt{11kHands} dataset with EDM.}
    \label{fig:filter_performance}
\vspace{-1em}
\end{figure}
\vspace{-0.5em}
\section{Theoretical Foundation}\label{sec:theoretical_foundation}
\vspace{-0.5em}
While the empirical evidence justifies a hallucination filter based on \(\lidt\), filters themselves remain only a partial solution to the hallucination problem, as they inevitably require generating multiple invalid samples. In what follows, we propose a corrective mechanism for the sampling process inspired by the evidence that one should deflate \(\lidt\) for particular samples to avoid hallucinations.

\begin{theorem}\label{th:lmi_ub_tau}
Let \(\x_0=\generator(\x_1)\). Assume that \(\generator(\x_1)\) is decomposed as \(\generator = \generator^{\leq\tau} \circ \generator^{>\tau}\) for sufficiently small time \(\tau > 0\), where \(\generator^{>\tau}: \x_1 \mapsto \x_\tau\) and \(\generator^{\leq\tau}: \x_\tau \mapsto \x_0\). Let the singular values of the macroscopic flow Jacobian \(\nabla_{\x_1}\generator^{>\tau}(\x_1)\) be monotonically ordered as \(\sigma_1^{>\tau} \geq \sigma_2^{>\tau} \geq \dots \geq \sigma_n^{>\tau} \geq 0\). For sufficiently small \(\beta>0\), it is true that
\begin{equation}
    \lmi(\x_1) \lessapprox \beta^2 \sum_{i=1}^{\lidt(\hat{\x}_0^{\boldsymbol{\theta}}(\x_\tau))} (\sigma_i^{>\tau})^2.
\end{equation}
\end{theorem}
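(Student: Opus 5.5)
We argue by first-order linearization and the chain rule, reusing the structure of the proof of \cref{prop:lmi_ub}. For small \(\beta\), \(\generator(\x_1+\boldsymbol{\varepsilon})\approx \x_0 + \mathbf{J}\boldsymbol{\varepsilon}\) with \(\mathbf{J}=\nabla_{\x_1}\generator(\x_1)\). Hence
\begin{equation*}
\lmi(\x_1)\approx \beta^2\Tr(\mathbf{J}\mathbf{J}^\top)=\beta^2\norm{\mathbf{J}}_F^2 .
\end{equation*}
By the decomposition \(\generator=\generator^{\leq\tau}\circ\generator^{>\tau}\), we can factor \(\mathbf{J}=\mathbf{A}\mathbf{B}\), where \(\mathbf{A}=\nabla_{\x_\tau}\generator^{\leq\tau}(\x_\tau)\) and \(\mathbf{B}=\nabla_{\x_1}\generator^{>\tau}(\x_1)\). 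The singular values of \(\mathbf{B}\) are the \(\sigma_i^{>\tau}\).

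The key step is to control the microscopic map \(\generator^{\leq\tau}\) for small \(\tau\). As recalled in the background, \(\hat{\x}_0^{\btheta}(\x_\tau)\) acts as an approximate orthogonal projector onto the tangent space of \(\manifoldt\) for small \(\tau\). We therefore approximate \(\generator^{\leq\tau}(\x_\tau)\approx \hat{\x}_0^{\btheta}(\x_\tau)\), since the remaining flow over \([0,\tau]\) is a vanishing correction. It follows that
\begin{equation*}
\mathbf{A}\approx \mathbf{P},
\end{equation*}
where \(\mathbf{P}\) is an orthogonal projector of rank \(k=\lidt(\hat{\x}_0^{\btheta}(\x_\tau))\). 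This reuses the same mechanism that made the losses in \cref{prop:lidt_estimators} count tangent directions: the trace of the denoiser Jacobian equals the local dimension.

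With \(\mathbf{A}\approx\mathbf{P}\), we obtain \(\norm{\mathbf{J}}_F^2\approx\norm{\mathbf{P}\mathbf{B}}_F^2=\Tr(\mathbf{P}\mathbf{B}\mathbf{B}^\top\mathbf{P})\). We then apply Ky Fan's maximum principle: for any rank-\(k\) orthogonal projector \(\mathbf{P}\) and symmetric PSD \(\mathbf{M}=\mathbf{B}\mathbf{B}^\top\),
\begin{equation*}
\Tr(\mathbf{P}\mathbf{M}\mathbf{P})\le\sum_{i=1}^k\lambda_i(\mathbf{M})=\sum_{i=1}^k(\sigma_i^{>\tau})^2 .
\end{equation*}
Multiplying by \(\beta^2\) yields the claimed \(\lessapprox\). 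The inequality, rather than the equality of \cref{prop:lmi_ub}, arises because the tangent space need not align with the top singular directions of the macroscopic flow.

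The main obstacle is justifying \(\mathbf{A}\approx\mathbf{P}\) rigorously, namely that the residual flow on \([0,\tau]\) and the deviation of the learned denoiser Jacobian from an exact projector contribute only higher-order terms. We would handle this by noting that \(\mathbf{A}\) has \(k\) singular values near \(1\) and the rest near \(0\) as \(\tau\to 0\). A Weyl-type perturbation argument then gives \(\norm{\mathbf{A}\mathbf{B}}_F^2\le\sum_{i=1}^k(\sigma_i^{>\tau})^2+o(1)\cdot\norm{\mathbf{B}}_F^2\). This is exactly what the approximate inequality \(\lessapprox\) absorbs.
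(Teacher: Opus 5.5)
Your proposal is correct and follows essentially the paper's argument. The shared steps are the chain-rule factorization \(\mathbf{J}=\nabla_{\x_\tau}\generator^{\leq\tau}(\x_\tau)\,\nabla_{\x_1}\generator^{>\tau}(\x_1)\), the linearization \(\lmi(\x_1)\approx\beta^2\norm{\mathbf{J}}_F^2\), treating the terminal Jacobian as a projector whose rank is the local intrinsic dimension, and a trace inequality bounding the product's Frobenius norm by the top squared singular values of the macroscopic Jacobian. Your Ky Fan step is exactly the projector case of the von Neumann bound \(\norm{\mathbf{A}\mathbf{B}}_F^2\le\sum_i\sigma_i(\mathbf{A})^2\sigma_i(\mathbf{B})^2\) that the paper uses.

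The differences are minor. The paper's appendix version allows an oblique terminal projector and so carries a factor \(K=\norm{\nabla_{\x_\tau}\generator^{\leq\tau}(\x_\tau)}_2^2\geq 1\); this equals \(1\) in your orthogonal setting and gives the stated bound. The paper also first obtains the rank as \(\lidt(\x_0)\) and then transfers it to \(\lidt(\hat{\x}_0^{\btheta}(\x_\tau))\) by local constancy, whereas you read it off directly by identifying \(\generator^{\leq\tau}\) with the Tweedie denoiser. Finally, your perturbation treatment of the only approximately rank-deficient terminal Jacobian is slightly more careful than the paper's claim that its trailing singular values vanish exactly.
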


The contribution of \cref{th:lmi_ub_tau} is two-fold. First, it preserves the connection between local instability and the sample's \(\lidt\), while only requiring the posterior mean \(\hat{\x}_0^{\boldsymbol{\theta}}(\x_\tau)\) for some small \(\tau>0\) instead of the final \(\x_0\). Second, it replaces the global singular values from \cref{prop:lmi_ub} with a spectral bottleneck: a summation where the macroscopic variance (\(t>\tau\)) is strictly truncated by the intrinsic dimensionality of the terminal projection (\(t\leq\tau\)). Therefore, actively steering the trajectory to minimize \(\lidt(\hat{\x}_0^{\boldsymbol{\theta}}(\x_\tau))\) dynamically drops variance terms from this summation, provably decreasing the upper bound of \(\lmi\) for the resulting sample. We formalize the implied sampling process below.

\begin{theorem}\label{th:boltzmann}
Assume $t \leq \tau$, where $\tau>0$ is sufficiently small. Replacing the standard trained score function $\mathbf{s}_{\btheta}(\mathbf{x}_t)$ in the reverse process (\cref{eq:linear_reverse_sde}) with
\begin{equation}\label{eq:modified_score}
    \tilde{\mathbf{s}}_{\btheta}(\mathbf{x}_t) = \mathbf{s}_{\btheta}(\mathbf{x}_t) - \lambda_t \nabla_{\mathbf{x}_t} \mathcal{E}(\mathbf{x}_t),
\end{equation}
where the energy \(\mathcal{E}(\mathbf{x}_t)=\lidt(\pmptheta)\), is equivalent to sampling from the ideal Boltzmann distribution over the true terminal states, $p_t^{\btheta, \lambda_t}(\mathbf{x}_t) \propto p_t^{\btheta}(\mathbf{x}_t) \mathbb{E}_{p^{\btheta}(\mathbf{x}_0 | \mathbf{x}_t)}[\exp(-\lambda_t \lidt(\mathbf{x}_0))]$.
\end{theorem}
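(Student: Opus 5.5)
The plan is to identify the score of the tilted distribution $p_t^{\btheta,\lambda_t}$ and show that, for $t\le\tau$ small, it coincides (to leading order) with $\tilde{\mathbf{s}}_{\btheta}$ in \cref{eq:modified_score}. Once the scores agree, the reverse SDE (\cref{eq:linear_reverse_sde}) driven by $\tilde{\mathbf{s}}_{\btheta}$ is by definition the reverse process whose marginals are $p_t^{\btheta,\lambda_t}$, which gives the claimed equivalence. This is the standard guidance/tilting argument familiar from classifier guidance.

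\textbf{Step 1: compute the score of the tilted density.} Define $Z_t(\x_t)=\mathbb{E}_{p^{\btheta}(\x_0\mid\x_t)}[\exp(-\lambda_t\lidt(\x_0))]$. Then
\begin{equation*}
\nabla_{\x_t}\log p_t^{\btheta,\lambda_t}(\x_t)=\mathbf{s}_{\btheta}(\x_t)+\nabla_{\x_t}\log Z_t(\x_t).
\end{equation*}
The normalizing constant is independent of $\x_t$ and so drops out. It therefore remains to show that $\nabla_{\x_t}\log Z_t(\x_t)\approx-\lambda_t\nabla_{\x_t}\lidt(\pmptheta)$.

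\textbf{Step 2: posterior concentration for small $t$.} This is the main obstacle. I would use the fact already invoked for \cref{prop:lidt_estimators}: for small $t$, $\pmptheta$ acts as an approximate orthogonal projector onto $\manifoldt$. Consequently the model posterior $p^{\btheta}(\x_0\mid\x_t)$ concentrates around $\pmptheta$, with covariance of order $\mathbf{\Sigma}_t\to 0$; this follows from the second-order Tweedie formula, $\cov[\x_0\mid\x_t]=\mathbf{H}_t^{-1}(\mathbf{\Sigma}_t+\mathbf{\Sigma}_t\nabla\mathbf{s}_{\btheta}\mathbf{\Sigma}_t)\mathbf{H}_t^{-\top}$. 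Under \cref{ass:stratified_manifold}, $\lidt$ is locally constant on a stratum. I would use the continuous relaxation from \cref{prop:lidt_estimators} to treat it as a smooth function near $\pmptheta$.

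\textbf{Step 3: replace the expectation by its value at the posterior mean.} A Laplace/delta-method argument then gives
\begin{equation*}
Z_t(\x_t)=\exp\bigl(-\lambda_t\lidt(\pmptheta)\bigr)\bigl(1+O(\Tr\mathbf{\Sigma}_t)\bigr).
\end{equation*}
This is Jensen's inequality becoming tight as the posterior variance vanishes. The mild regularity needed here is a bounded Hessian of the relaxed $\lidt$ near the stratum.

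\textbf{Step 4: take the logarithm and differentiate.} Taking logs yields $\log Z_t=-\lambda_t\lidt(\pmptheta)+o(1)$. Differentiating requires the $o(1)$ correction to have vanishing gradient as well. I would check this by differentiating under the expectation, which gives
\begin{equation*}
\nabla\log Z_t=-\lambda_t\,\frac{\mathbb{E}\bigl[e^{-\lambda_t\lidt}\,\nabla_{\x_t}\lidt\bigr]}{Z_t}.
\end{equation*}
Here the posterior derivative is handled via $\nabla_{\x_t}p(\x_0\mid\x_t)=p(\x_0\mid\x_t)\bigl(\nabla\log p(\x_t\mid\x_0)-\mathbf{s}_{\btheta}\bigr)$. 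As the posterior concentrates, this covariance-type expression collapses to $-\lambda_t\nabla_{\x_t}\lidt(\pmptheta)$. The chain rule through $\pmptheta$ is justified by Tweedie (\cref{eq:tweedie}).

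\textbf{Step 5: conclude.} Combining Steps 1 and 4 gives $\nabla\log p_t^{\btheta,\lambda_t}=\tilde{\mathbf{s}}_{\btheta}$ with $\mathcal{E}(\x_t)=\lidt(\pmptheta)$, which is exactly the modified score in the theorem.
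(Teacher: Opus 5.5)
Your proposal is essentially the paper's proof: the tilted score splits as $\mathbf{s}_{\btheta}(\x_t)+\nabla_{\x_t}\log\mathbb{E}_{p^{\btheta}(\x_0\mid\x_t)}[e^{-\lambda_t\lidt(\x_0)}]$, the posterior collapses to a Dirac mass at $\pmptheta$ so the log-expectation becomes $-\lambda_t\lidt(\pmptheta)$, and the DSM relaxation of $\lidt$ supplies a nonzero gradient; your Step~4 goes further by justifying the limit--gradient interchange, which the paper does without comment. One fix in Step~4: your display puts the $\x_t$-dependence on $\lidt(\x_0)$, which has none, so the whole gradient is the posterior-derivative term you mention, $\nabla_{\x_t}\log Z_t=\boldsymbol{\Sigma}_t^{-1}\mathbf{H}_t\,\mathbb{E}[(\x_0-\pmptheta)e^{-\lambda_t\lidt(\x_0)}\mid\x_t]/Z_t\approx-\lambda_t\boldsymbol{\Sigma}_t^{-1}\mathbf{H}_t\cov[\x_0\mid\x_t](\nabla\lidt)(\pmptheta)$, and your second-order Tweedie formula, in the form $\boldsymbol{\Sigma}_t^{-1}\mathbf{H}_t\cov[\x_0\mid\x_t]=(\nabla_{\x_t}\pmptheta)^\top$, is exactly what turns this into $-\lambda_t\nabla_{\x_t}\lidt(\pmptheta)$.
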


\Cref{th:boltzmann} guarantees that, for small enough \(t\), correcting the reverse process with the gradient of \(\lidt\) for the posterior mean \(\pmptheta\) moves the resulting sample along \(\manifoldt\) towards terminal states of lower \(\lidt\), \ie, a lower-dimensional stratum. We obtain the gradient through \(\nabla_{\mathbf{x}_t} \lidt(\pmptheta) \approx \nabla_{\mathbf{x}_t}\mathcal{L}_{\text{DSM}}(\pmptheta, t, \btheta)\), where the approximation comes from estimating the DSM expectation with \(k\) i.i.d noise samples.

\textbf{Probabilistic perspective.} The resulting correction represents a projected gradient descent scheme on \(\manifoldt\). However, utilizing the \(\mathcal{L}_{\text{DSM}}\) estimator does not immediately reveal the probabilistic effect of that correction on the generated sample \(\x_0\). To address that, we provide \cref{cor:collinearity} showing that one may interchangeably use either the \(\mathcal{L}_{\text{DSM}}\)- or \(\mathcal{L}_{\text{ISM}}\)-based estimator.

\begin{corollary}\label{cor:collinearity}
    The gradients \(\nabla_{\mathbf{x}_t}\mathcal{L}_{\text{DSM}}(\pmptheta, t, \btheta)\) and \(\nabla_{\mathbf{x}_t}\mathcal{L}_{\text{ISM}}(\pmptheta, t, \btheta)\) are exactly collinear and \(\nabla_{\mathbf{x}_t}\mathcal{L}_{\text{DSM}}(\pmptheta, t, \btheta)=2\nabla_{\mathbf{x}_t}\mathcal{L}_{\text{ISM}}(\pmptheta, t, \btheta)\).
\end{corollary}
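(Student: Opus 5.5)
The plan is to derive the corollary directly from \cref{prop:lidt_estimators}, evaluated at the point \(\pmptheta\) instead of at \(\x_0\). For \(t\) sufficiently small, and under the same regularity conditions, \cref{prop:lidt_estimators} gives two identities as functions of the argument \(\x_0\) near \(\manifoldt\):
\[
\mathcal{L}_{\text{DSM}}(\x_0,t,\btheta)=\lidt(\x_0), \qquad \mathcal{L}_{\text{ISM}}(\x_0,t,\btheta)=-\tfrac{1}{2}\bigl(n-\lidt(\x_0)\bigr).
\]
Eliminating \(\lidt\) between them yields the affine relation
\[
\mathcal{L}_{\text{DSM}}(\x_0,t,\btheta)=2\,\mathcal{L}_{\text{ISM}}(\x_0,t,\btheta)+n,
\]
valid on a neighbourhood of the relevant points. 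I will substitute \(\x_0=\pmptheta\); for small \(t\) this lies (approximately) on \(\manifoldt\), because \(\pmptheta\) acts as an approximate orthogonal projector. Differentiating with respect to \(\x_t\) through the chain rule, the constant \(n\) drops out, and we obtain \(\nabla_{\x_t}\mathcal{L}_{\text{DSM}}=2\nabla_{\x_t}\mathcal{L}_{\text{ISM}}\). Exact collinearity then follows immediately.

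A more self-contained route avoids relying on the identities holding on an open set, which is what legitimises differentiating them. Here I would establish the relation \(\mathcal{L}_{\text{DSM}}=2\mathcal{L}_{\text{ISM}}+n\) directly as an identity in \(\x_0\), using the standard DSM--ISM equivalence. The steps are as follows.
\begin{enumerate}
\item Write \(\vareps_{\btheta}(\x_t)=-\boldsymbol{\Sigma}_t^{1/2}\mathbf{s}_{\btheta}(\x_t)\), the parameterisation linking noise and score predictions.
\item Expand the square: \(\norm{\vareps-\vareps_{\btheta}}^2=\norm{\vareps}^2+\norm{\mathbf{s}_{\btheta}}_{\boldsymbol{\Sigma}_t}^2+2\vareps^\top\boldsymbol{\Sigma}_t^{1/2}\mathbf{s}_{\btheta}\).
\item Take expectations, using \(\expval\norm{\vareps}^2=n\).
\item Apply Stein's lemma (Gaussian integration by parts) to the cross term, which gives \(\expval[\vareps^\top\boldsymbol{\Sigma}_t^{1/2}\mathbf{s}_{\btheta}(\x_t)]=\expval[\Tr(\boldsymbol{\Sigma}_t\nabla_{\x_t}\mathbf{s}_{\btheta})]\).
\end{enumerate}
Collecting terms gives exactly \(n+2\,\mathcal{L}_{\text{ISM}}\) for every fixed \(\x_0\), with no appeal to small \(t\).

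This identity holds pointwise in \(\x_0\), so it can be composed with the smooth map \(\x_t\mapsto\pmptheta\) and differentiated. The gradients then differ by the factor \(2\), while the additive \(n\) has zero gradient. This second route also explains why the proportionality is exact rather than approximate.

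The main obstacle is the Stein's lemma step together with the interchange of gradient and expectation. This requires regularity conditions: \(\mathbf{s}_{\btheta}\) differentiable with polynomially bounded Jacobian, so that the boundary terms vanish and dominated convergence applies. I also need care that the noise parameterisation matches the convention implicit in \cref{eq:l_dsm,eq:l_ism}, in particular the sign and the symmetric square root of \(\boldsymbol{\Sigma}_t\). If the conventions differ by a scaling, I would first normalise them so that the constant factor is exactly \(2\).
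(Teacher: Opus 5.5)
Your second route is correct and is essentially the paper's argument: the Stein's-lemma identity \(\mathcal{L}_{\text{DSM}}(\x_0,t,\btheta)=n+2\,\mathcal{L}_{\text{ISM}}(\x_0,t,\btheta)\), which the paper invokes at the end of its proof of \cref{prop:lidt_estimators}, holds for every fixed first argument, so composing with \(\x_t\mapsto\pmptheta\) and differentiating gives the factor \(2\). You do not need any interchange of gradient and expectation, since one function is an affine image of the other and the gradient relation holds wherever either gradient exists. Treat your first route only as motivation: \cref{prop:lidt_estimators} gives its two formulas only asymptotically in \(t\) and only at points of \(\manifoldt\), where \(\lidt\) is locally constant, so on its own it cannot justify an exact gradient identity.
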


This collinearity allows us to reason about the probabilistic effect of the stable and computationally cheap \(\mathcal{L}_{\text{DSM}}\) gradient through the theoretical lens of \(\mathcal{L}_{\text{ISM}}\).

\begin{corollary}\label{cor:mode_seeking}
    Assuming $\mathbf{s}_{\btheta}(\x_t) = \nabla_{\x_t} \log p_t^{\btheta}(\x_t)$, \cref{eq:l_ism} can be equivalently reformulated as \(\mathcal{L}_{\text{ISM}}(\x_0, t, \btheta) = \expval_{\vareps \sim \mathcal{N}(\mathbf{0}, \mathbf{I})} \left[ \Tr{(\boldsymbol{\Sigma}_t \nabla_{\x_t}^2 \log p_t^{\btheta}(\x_t))} + \frac{1}{2} \norm{\mathbf{s}_{\btheta}(\x_t)}_{\boldsymbol{\Sigma}_t}^2 \right]\). Thus, as \(t \to 0\), minimizing the energy from \cref{eq:modified_score} steers towards stationary points (\(\nabla_{\x_t} \log p_t^{\btheta}(\x_t) \to \mathbf{0}\)) with maximal negative curvature (\(\Tr{(\boldsymbol{\Sigma}_t \nabla_{\x_t}^2 \log p_t^{\btheta}(\x_t))}\ll 0\)) of the model-induced probability distribution \(p_t^{\btheta}(\x_t)\), inducing a mode-seeking behavior.
\end{corollary}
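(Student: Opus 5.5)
The reformulation rests on one substitution. Under the stated assumption \(\mathbf{s}_{\btheta}(\x_t) = \nabla_{\x_t}\log p_t^{\btheta}(\x_t)\), the Jacobian of the score prediction is the Hessian of the model log-density:
\[
\nabla_{\x_t}\mathbf{s}_{\btheta}(\x_t) = \nabla_{\x_t}^2 \log p_t^{\btheta}(\x_t).
\]
Inserting this into the first term of \cref{eq:l_ism}, with the same reparameterization \(\x_t = \mathbf{H}_t\x_0 + \boldsymbol{\Sigma}_t^{\frac{1}{2}}\vareps\), gives the displayed identity. No integration by parts is needed, since the trace term already carries the Jacobian. The only regularity required is that \(\log p_t^{\btheta}\) be twice differentiable. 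This holds because \(p_t^{\btheta}\) is the Gaussian-smoothed pushforward of the model terminal distribution, so it is smooth for \(t>0\).

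For the mode-seeking interpretation, I combine \cref{th:boltzmann} with \cref{cor:collinearity}.
\begin{itemize}
\item By \cref{th:boltzmann}, the corrected score adds the drift \(-\lambda_t\nabla_{\x_t}\mathcal{E}\) with \(\mathcal{E}(\x_t)=\lidt(\pmptheta)\). The sampler therefore performs gradient descent on this energy.
\item By \cref{cor:collinearity}, \(\nabla_{\x_t}\mathcal{L}_{\text{DSM}} = 2\nabla_{\x_t}\mathcal{L}_{\text{ISM}}\). Descending the DSM-based estimator is therefore the same as descending \(\mathcal{L}_{\text{ISM}}\), up to a positive rescaling of \(\lambda_t\).
\item In the reformulated \(\mathcal{L}_{\text{ISM}}\), both summands can be read off directly. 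The second summand, \(\frac{1}{2}\norm{\mathbf{s}_{\btheta}}_{\boldsymbol{\Sigma}_t}^2\), is nonnegative because \(\boldsymbol{\Sigma}_t\) is positive semidefinite. It vanishes exactly when \(\boldsymbol{\Sigma}_t^{1/2}\nabla\log p_t^{\btheta}=\mathbf{0}\), which is the stationary-point condition.
\item The first summand equals \(\Tr(\boldsymbol{\Sigma}_t^{1/2}\nabla^2\log p_t^{\btheta}\,\boldsymbol{\Sigma}_t^{1/2})\). It is a \(\boldsymbol{\Sigma}_t\)-weighted sum of curvatures, and minimizing it pushes it as negative as possible.
\end{itemize}
Taken together, the minimizers are points where the gradient vanishes and the Hessian is strongly negative definite in the weighted sense, that is, sharp local maxima (modes) of \(p_t^{\btheta}\).

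To tie this to the \(t\to0\) limit, I invoke \cref{prop:lidt_estimators}: \(\mathcal{L}_{\text{ISM}} = -\frac{1}{2}(n-\lidt)\). Lowering \(\lidt\) thus corresponds to making \(\mathcal{L}_{\text{ISM}}\) more negative. Near a stratum, \(\boldsymbol{\Sigma}_t\nabla^2\log p_t^{\btheta}\) behaves like \(-\)(the projector onto the normal space). The trace therefore counts normal directions with curvature \(\approx -1\) each, which explains the \(-\frac{1}{2}(n-\lidt)\) scaling. Maximizing the negative curvature is then the same as adding normal (collapsing) directions, which is exactly deflating \(\lidt\).

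The main obstacle is the rigorous justification of the limiting argument. Specifically, the Hessian of the log-density must approach \(-\boldsymbol{\Sigma}_t^{-1}\) on the normal space and stay bounded on the tangent space uniformly as \(t\to0\). I would not prove this from scratch. I would rely on the small-\(t\) projector property of \(\pmptheta\) already used in \cref{prop:lidt_estimators}, differentiate Tweedie's formula (\cref{eq:tweedie}) to write
\[
\nabla\pmptheta = \mathbf{H}_t^{-1}(\mathbf{I} + \boldsymbol{\Sigma}_t\nabla^2\log p_t^{\btheta}),
\]
and read off the curvature from the projector.
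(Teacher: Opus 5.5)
Your proposal is correct at the paper's own level of rigor and follows essentially the same route. The corollary rests on substituting $\nabla_{\x_t}\mathbf{s}_{\btheta}(\x_t)=\nabla_{\x_t}^2\log p_t^{\btheta}(\x_t)$ into \cref{eq:l_ism} and reading off the two summands, with \cref{cor:collinearity} transferring the $\mathcal{L}_{\text{DSM}}$-based energy of \cref{th:boltzmann} to $\mathcal{L}_{\text{ISM}}$; the paper offers no further argument, so your \cref{prop:lidt_estimators} tie-in is extra.

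There is one slip in that extra aside. Near a stratum the trace term alone gives $-(n-\lidt)$, and the factor $\frac{1}{2}$ only appears after adding the expected norm term $\frac{1}{2}\expval_{\vareps}\norm{\mathbf{s}_{\btheta}}_{\boldsymbol{\Sigma}_t}^2\approx\frac{1}{2}(n-\lidt)$. This also shows that the two summands trade off rather than being minimized separately, since the norm term grows as $\lidt$ is deflated. The ``stationary point'' part is therefore a pointwise reading of the integrand, not a property of the $\vareps$-averaged objective.
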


By mitigating mode interpolations, \cref{cor:mode_seeking} directly connects our optimization scheme with the work of \citet{aithal2024understanding}. As the proposed correction essentially aims at eliminating unstable, low-probability states by `cooling' them and shifting towards stable local maxima, we term our approach as \emph{Intrinsic Quenching} (\(\iq\)).
\begin{figure}
    \centering
    \includegraphics[width=0.99\linewidth]{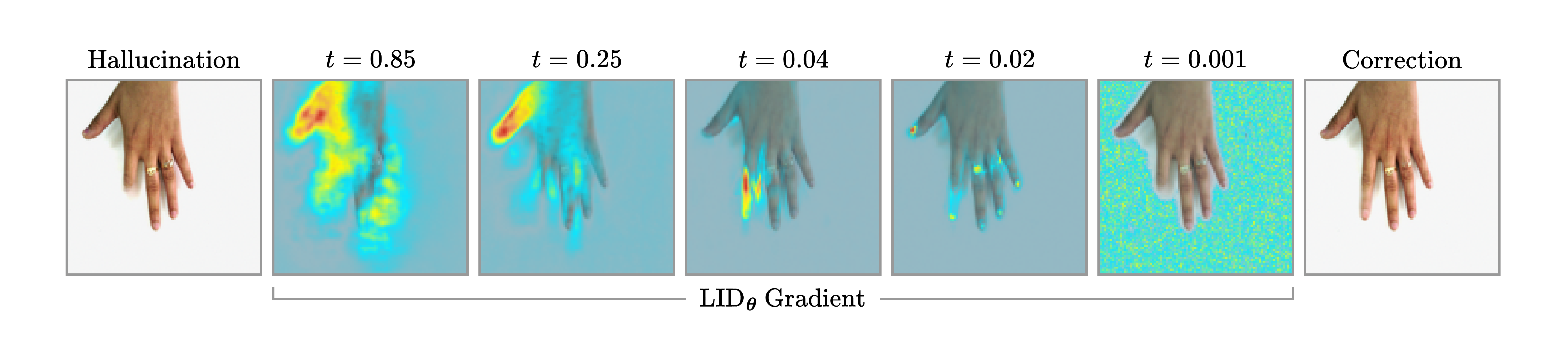}
    \caption{\emph{Hallucination} depicts the initial unconditionally generated sample by EDM on \texttt{11kHands}. For a set of manually selected timesteps, \(\lidt\) gradients are visualized to highlight their coarse-to-fine transition and perceptual connection to hallucinated content. \emph{Correction} displays the sample generated with \(\iq\), the proposed corrective mechanism applied in the \([0.02, 0.05]\) interval.}
    \label{fig:gradient_evolution}
\vspace{-1.5em}
\end{figure}

\textbf{From theory to practice.} Both the theoretical assumptions and the separability performance from \cref{fig:filter_performance} indicate that \(\iq\) should be only used for small times \(t<\tau\). We thus apply it solely within a narrow interval \([t_1,t_2]\) of the reverse process, where \(t_2\) is small. This focused intervention is also empirically justified when inspecting the energy gradients \(\nabla_{\x_t} \mathcal{E}(\x_t)\) (\cref{fig:gradient_evolution}). By overlaying them over the final generated image, we discover a smooth transition from global, structural attributes through localized, highly-specific features correlated with the hallucinatory content, up to dispersed but cohesive states. This observation naturally aligns with multiple prior studies that decompose the diffusion process into semantically interpretable phases \citep{park2023understanding,sclocchi2025phase,wanganalytical}. In this case, however, \(\lidt\) provides an inherent attribution map \citep{10.5555/3295222.3295230,sundararajan2017axiomatic} that allows quantifying such effects, as it indicates pixels with the highest contribution to the model's error at time \(t\).

To ensure that the modified updates behave stably, we compute the \(\lambda_t\) scale dynamically for each \(t\) by ensuring that the magnitude of the energy term projected into data space (\(\norm{\lambda_t \mathbf{\Phi}(t, 0)^{-1} \mathbf{\Sigma}_t \nabla_{\x_t} \mathcal{E}(\x_t)}_2\)) always equates to a fixed ratio \(\lambda\) of the natural update. For example, in the EDM framework, where the network is trained directly to predict \(\pmptheta\) (and \(\mathbf{\Phi}(t, 0) = \mathbf{I}\)), we use \(\lambda_t = \lambda \frac{\norm{\pmptheta - \x_t}_2}{\norm{\mathbf{\Sigma}_t \nabla_{\x_t} \mathcal{E}(\x_t)}_2 + \epsilon}\), where \(\epsilon\) ensures numerical stability.

The DM's score function naturally interacts with the energy gradient from \(\iq\); the former pushes the sample towards \(\manifoldt\) with decreasing \(t\), while the latter shifts the sample along the manifold to regions of lower \(\lidt\), suppressing the most volatile directions. To ensure that this intervention selectively targets hallucinations rather than uniformly restricting all samples, we employ \(\lidt\) as a dynamic, mid-generation filter. Specifically, we disable the correction (\(\lambda_t=0\)) for stable samples satisfying \(\mathcal{E}(\xt) < q_t\), where \(q_t\) is a time-dependent threshold. To calibrate \(q_t\), we unconditionally generate a reference set of samples from the DM, precompute \(\mathcal{E}(\xt)\) for each \(\xt\) across the active interval \([t_1, t_2]\), and set \(q_t\) to the \(q\)-th percentile of these empirical energy values at each timestep. We provide the pseudocode of \(\iq\) in \cref{alg:iq}.
\vspace{-0.5em}
\section{Related Works}
\textbf{Hallucination reduction.} Observed since early natural image DMs \citep{ramesh2021zero,rombach2022high}, hallucinations were initially hypothesized by \citet{aithal2024understanding} as effects of mode interpolations, exacerbating errors during iterative DM training. Subsequent mitigations \citep{fu2025counting,cho2025tag,bhosale2026varianceguided} include Adaptive Attention Modulation \citep[AAM;][]{oorloff2025mitigating}, which optimizes attention temperatures \citep{vaswani2017attention} via an anomaly detector \citep{roth2022towards}, requiring the original training data. Dynamic Guidance \citep[DG;][]{triaridis2025mitigating} applies Classifier Guidance \citep[CG;][]{dhariwal2021beat} toward the most probable class at each step, necessitating an independently trained noisy classifier. More recently, \citet{tian2025rods} (RODS) formulate DMs as a continuation method \citep{allgower2012numerical}, mitigating hallucinations by intervening intrinsically upon detecting vector field instabilities.

\textbf{Task-specific methods.} Because deformed hands are a prominent failure mode, several methods aim to correct or filter them specifically \citep{narasimhaswamy2024handiffuser,lu2024handrefiner,Shi_Guo_Shui_Chen_Shen_2026}. Similarly, \citet{lu2025towards} identify local generation bias causing visual text failures. Hallucinations pose critical risks in image translation and reconstruction by affecting real-world decisions. Addressing this, \citet{kim2024tackling} propose local diffusion for conditional translation, partitioning in- and out-of-distribution regions. In reconstruction, several works introduce quantitative hallucination scores \citep{tivnan2024hallucination,ren2025hallucination,10981211}. Distinctively, \citet{cao2025temporal} employ temporal score analysis, akin to \citet{aithal2024understanding} and \citet{tian2025rods}, to eliminate related out-of-distribution artifacts.

\section{Experiments}\label{sec:experiments}
\begin{table}[htbp]
  \caption{Quantitative comparison of all sampling methods across six datasets. We highlight the primary performance indicators, based on human evaluation, regarding perceived image quality (\colorbox{LightOrange}{UP}) and hallucination ratio (\colorbox{LightGreen}{HR}), reported as averages across annotators. For 95\% CIs, see \cref{tab:appendix_ci_metrics}.}
  \label{tab:main_results}
  \centering
  
  \setlength{\tabcolsep}{4pt} % Base padding
  
  % Measure the top table to force the bottom table to match its exact width
  \newsavebox{\tophalftable}
  \sbox{\tophalftable}{%
    \begin{tabular}{l ccc >{\columncolor{LightOrange}}c >{\columncolor{LightGreen}}c 
                          ccc >{\columncolor{LightOrange}}c >{\columncolor{LightGreen}}c 
                          ccc >{\columncolor{LightOrange}}c >{\columncolor{LightGreen}}c}
      \toprule
      & \multicolumn{5}{c}{\texttt{11kHands}} & \multicolumn{5}{c}{\texttt{FFHQ}} & \multicolumn{5}{c}{\texttt{AFHQV2}} \\
      \cmidrule(lr){2-6} \cmidrule(lr){7-11} \cmidrule(lr){12-16}
      Method & FID $\downarrow$ & IV $\uparrow$ & DSV $\uparrow$ & \cellcolor{LightOrange}UP $\uparrow$ & \cellcolor{LightGreen}HR $\downarrow$ & FID $\downarrow$ & IV $\uparrow$ & DSV $\uparrow$ & \cellcolor{LightOrange}UP $\uparrow$ & \cellcolor{LightGreen}HR $\downarrow$ & FID $\downarrow$ & IV $\uparrow$ & DSV $\uparrow$ & \cellcolor{LightOrange}UP $\uparrow$ & \cellcolor{LightGreen}HR $\downarrow$ \\
      \midrule
      Baseline                  & 16.3 & 0.032 & 0.15 & 39.8 & 29.3 & 13.5 & 0.067 & 0.40 & 45.3 & 8.2  & 16.7 & 0.054 & 0.48 & 41.8 & 6.9 \\
      DG                        & 16.2 & 0.033 & 0.15 & 39.5 & 29.7 & 13.8 & 0.070 & 0.40 & 44.1 & 9.8  & 16.7 & 0.054 & 0.48 & 40.2 & 8.6 \\
      AAM                       & 16.4 & 0.033 & 0.15 & 40.6 & 29.3 & 13.5 & 0.068 & 0.40 & 45.7 & 10.2 & 17.6 & 0.055 & 0.49 & 42.2 & 14.3 \\
      RODS\textsuperscript{CAS} & 15.8 & 0.033 & 0.15 & 40.2 & 25.8 & 13.6 & 0.066 & 0.40 & 45.3 & 7.7  & 16.9 & 0.054 & 0.48 & 42.2 & 6.9 \\
      RODS\textsuperscript{SAS} & 16.2 & 0.033 & 0.15 & 41.0 & 29.7 & 13.6 & 0.066 & 0.40 & 45.3 & 8.2  & 16.8 & 0.054 & 0.48 & 42.2 & 6.6 \\
      \(\iq\)                       & 16.6 & 0.030 & 0.13 & 68.0 & 9.0  & 13.9 & 0.065 & 0.39 & 46.1 & 4.2  & 17.0 & 0.054 & 0.48 & 42.6 & 5.9 \\
      \bottomrule
    \end{tabular}%
  }

  % Resize the unified block to the text width
\resizebox{\textwidth}{!}{%
    \begin{tabular}{@{}c@{}}
  
    % --- TOP HALF ---
    \usebox{\tophalftable} \\
    
    % --- SPACING BETWEEN HALVES ---
    \\[1ex]
    
    % --- BOTTOM HALF (Expanded to top half's width) ---
    \begin{tabular*}{\wd\tophalftable}{l @{\extracolsep{\fill}} ccc >{\columncolor{LightGreen}}c ccc >{\columncolor{LightGreen}}c c >{\columncolor{LightGreen}}c}
      \toprule
      & \multicolumn{4}{c}{\texttt{MNIST}} & \multicolumn{4}{c}{\texttt{SimpleShapes}} & \multicolumn{2}{c}{\texttt{GaussianGrid}} \\
      \cmidrule(lr){2-5} \cmidrule(lr){6-9} \cmidrule(lr){10-11}
      Method & FID $\downarrow$ & IV $\uparrow$ & DSV $\uparrow$ & \cellcolor{LightGreen}HR $\downarrow$ & FID $\downarrow$ & IV $\uparrow$ & DSV $\uparrow$ & \cellcolor{LightGreen}HR $\downarrow$ & MMD $\downarrow$ & \cellcolor{LightGreen}HR $\downarrow$ \\
      \midrule
      Baseline                  & 32.3 & 0.050 & 0.13 & 37.3 & 27.5 & 0.031 & 0.11 & 25.8 & 0.010 & 20.2 \\
      DG                        & 32.1 & 0.050 & 0.14 & 23.6 & 27.8 & 0.031 & 0.11 & 27.3 & 0.080 & 11.1 \\
      AAM                       & 32.3 & 0.049 & 0.13 & 34.8 & 27.2 & 0.025 & 0.06 & 55.5 & -     & - \\
      RODS\textsuperscript{CAS} & 35.4 & 0.049 & 0.13 & 41.8 & 28.1 & 0.031 & 0.10 & 30.5 & 0.016 & 10.4 \\
      RODS\textsuperscript{SAS} & 34.8 & 0.047 & 0.13 & 55.3 & 27.8 & 0.031 & 0.11 & 28.9 & 0.016 & 19.4 \\
      \(\iq\)                       & 31.8 & 0.051 & 0.14 & 10.2 & 23.3 & 0.033 & 0.13 & 9.4  & 0.016 & 8.9 \\
      \bottomrule
    \end{tabular*}
    
  \end{tabular}%
  }
\vspace{-1em}
\end{table}
\textbf{Datasets.} We perform a large-scale quantitative comparison of \(\iq\) with recent methods for hallucination reduction by unifying several existing benchmarks. As a tractable toy problem, we use the \texttt{GaussianGrid} dataset proposed by \citet{aithal2024understanding}, comprising a 2D mixture of uniformly spaced Gaussians in a form of rotated grid with 25 modes. For semantic structural hallucinations, we evaluate on \texttt{SimpleShapes} and \texttt{MNIST} for synthetic images following \citet{aithal2024understanding,triaridis2025mitigating,oorloff2025mitigating}, and animal faces \citep[\texttt{AFHQV2}, \(64\times64\) resolution;][]{choi2020starganv2}, human faces \citep[\texttt{FFHQ}, \(64\times64\) resolution][]{karras2019style} and hand images \citep[\texttt{11kHands}, \(128\times128\) resolution;][]{afifi201911kHands} entirely following \citet{tian2025rods} for natural images.

\textbf{Models.} For \texttt{GaussianGrid}, we train a 3-layer MLP with hidden dimension and time embedding of size 256 based on DDPM \citep{ho2020denoising} with a linear schedule and 1000 steps. For \texttt{SimpleShapes}, we use a UNet \citep{ronneberger2015u} with 3 convolutional blocks and an attention layer for both encoder and decoder, train it using 1000-step DDPM with squared cosine scheduler \citep{nichol2021improved} and use 250-step sampling for inference. For \texttt{MNIST}, we use a pretrained UNet\footnote{\href{https://huggingface.co/1aurent/ddpm-mnist}{https://huggingface.co/1aurent/ddpm-mnist}} from HuggingFace \citep{von-platen-etal-2022-diffusers} also based on 1000-step DDPM and 250-step inference. For \texttt{AFHQV2} and \texttt{FFHQ}, we use unconditional VE EDM checkpoints from the original paper \citep{karras2022elucidating} and sample with a deterministic 40-step Euler solver, following \citet{tian2025rods}. For \texttt{11kHands}, we reuse the VE EDM network pretrained by \citet{tian2025rods}, also with 40-step sampling for inference.

\textbf{Baselines.} We compare \(\iq\) against recent methods for hallucination reduction: DG \citep{triaridis2025mitigating}, AAM \citep{oorloff2025mitigating} and RODS \citep{tian2025rods}. Importantly, DG requires labeled training data of the DM of interest and assumes access to a pretrained noisy classifier for CG. Similarly, AAM also assumes access to training data to obtain the anomaly detection model and is limited to architectures with attention layers. In this context, RODS and \(\iq\) remain the only methods with no additional assumptions about the DM of interest. Our comparisons also include the standard unmodified sampling as a default baseline. For details regarding the unification of methods and adaptation to our codebase, see \cref{sec:experimental_setup}.

\textbf{Metrics.} On \texttt{GaussianGrid}, we measure hallucination ratio (HR) as the percentage of samples exceeding 3 standard deviations from the closest mode \citep{aithal2024understanding}, alongside Maximum Mean Discrepancy (MMD) for distributional similarity. For images, we evaluate 2048 samples per method using proxy metrics: Fréchet Inception Distance \citep[FID;][]{heusel2017gans} for quality, Inception Variance \citep[IV;][]{miao2024training} and DreamSim Variance \citep[DSV;][]{domingo-enrich2025adjoint} for diversity \citep{tian2025rods}.

Because automated evaluations struggle with semantic perception, we quantify HR on image datasets via a blinded user study (\cref{sec:user_study}). Annotators first complete a \emph{calibration phase} viewing 128 true images to grasp dataset variability. In the subsequent \emph{labeling phase}, they identify structural hallucinations in a blinded manner outputs generated by competing methods from identical latent points \(\x_1\). Furthermore, feature-based metrics notoriously misalign with human perception, often favoring texture over structure \citep{geirhos2018imagenet}, a flaw amplified in DMs where human preference remains the ultimate ground truth \citep{stein2023exposing,jayasumana2024rethinking}. Addressing this, we expand our study to evaluate User Preference (UP) on natural images. For each latent point, annotators select the highest-quality generations, with UP reported as the overall selection frequency per method.

\begin{figure}
    \centering
    \includegraphics[width=0.99\linewidth]{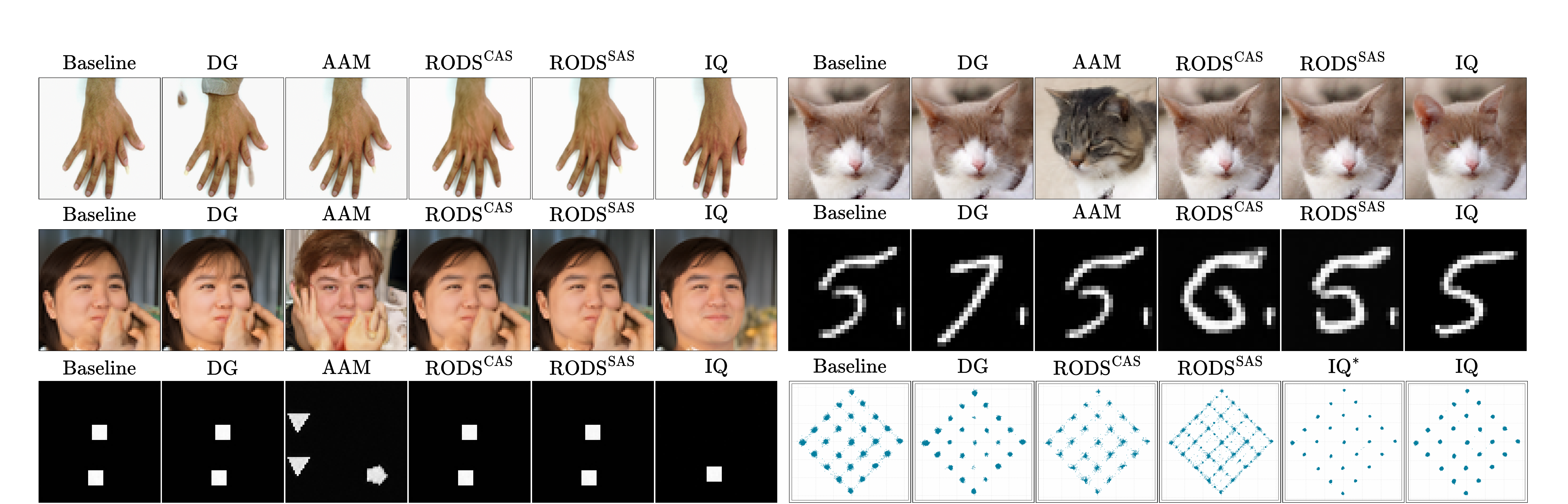}
    \caption{Qualitative comparison of all sampling methods across six datasets. AAM is omitted on \texttt{GaussianGrid} due to incompatible network architecture. \(\iq^*\) denotes a variant of \(\iq\) without filtering (\(q=0\)). For high-resolution results on \texttt{GaussianGrid}, see \cref{fig:hq_gauss_vis}.}
    \label{fig:qualitative_comparison}
\vspace{-1em}
\end{figure}

\textbf{Results.} Quantitative and qualitative results are summarized in \cref{tab:main_results} and \cref{fig:qualitative_comparison}. On \texttt{GaussianGrid}, \(\iq\) confirms its theoretical validity: it minimally deviates from the baseline while maximally reducing hallucinations by eliminating mode interpolations. \Cref{fig:qualitative_comparison} also highlights the role of dynamic filtering; without it, \(\iq\) over-squeezes samples into modes, whereas filtering halts correction appropriately. Crucially, \(\iq\) succeeds even when true data \(\lid\) equals the ambient dimension, confirming our probabilistic interpretation. Conversely, DG achieves comparable HR but samples unevenly due to classifier bias, while RODS\textsuperscript{SAS} fails to meaningfully reduce hallucinations despite matching RODS\textsuperscript{CAS} in MMD.

On \texttt{SimpleShapes}, \(\iq\) reduces over 60\% of hallucinations. We hypothesize the concurrent improvement in feature-based metrics stems from the specific nature of these failures: generating extra shapes creates natural outliers that inflate FID. Since these anomalies dominate baseline generation, their removal naturally increases valid diversity. This is corroborated on \texttt{MNIST}, where FID and diversity strongly correlate with HR. Despite extensive tuning, baselines struggle on these simple sets. This validates our \(\lid\) formulation: an optimal DM must bound its factors of variation, and while \(\iq\) directly suppresses unnecessary degrees of freedom, other methods cannot effectively enforce this constraint.

On natural images, where human-centric UP and HR are the primary performance indicators, differences in feature-based metrics shrink. Crucially, on \texttt{11kHands}, \(\iq\) reduces hallucinations by almost 70\% and achieves almost 70\% UP, vastly outperforming baselines. The accompanying minor regressions in FID and diversity corroborate findings \citep{stein2023exposing,jayasumana2024rethinking} that such metrics fail to accurately capture human preference. On \texttt{FFHQ} and \texttt{AFHQV2}, performance gaps narrow. While users preferred \(\iq\) slightly more, overlapping confidence intervals (CIs) indicate no statistically significant UP difference, which annotators also attributed to the datasets' lower resolutions. Nonetheless, \(\iq\) successfully cuts hallucinations by around 50\% on \texttt{FFHQ} and 15\% on the highly challenging \texttt{AFHQV2}, where \(\iq\)'s HR CI overlaps with RODS and baseline sampling despite a better average.

\textbf{Hallucinations in medical diagnosis.} Generative hallucinations pose severe risks in real-world decision-making, particularly in medical imaging where models restore high-frequency details to enable low-dose radiation \citep{wang2025generative}. To demonstrate \(\iq\)'s practical capabilities, we evaluate hallucination reduction in low-dose computed tomography (LDCT) reconstruction.

We formulate the inverse problem \(\mathbf{y} = \mathbf{A}\x + \sigma \boldsymbol{\epsilon}\), where \(\x\) represents ground truth \texttt{RSNA} brain CT scans \citep{rsna-intracranial-hemorrhage-detection}, \(\mathbf{A}\) is the forward projection matrix (X-ray line integrals), \(\boldsymbol{\epsilon}\sim\mathcal{N}(\mathbf{0}, \mathbf{I})\), and \(\mathbf{y}\) is the measurement sinogram. By zeroing the singular values of \(\mathbf{A}\) below a threshold \(\xi\) and adding noise, we simulate low-dose sparse-view CT. We employ System-Embedded Diffusion Bridges \citep[SDB;][]{sobieski2026systemembedded}, which solve linear Gaussian problems by embedding measurement parameters into the diffusion coefficients: \(\mathbf{H}_t\x=\mathbf{A}^+\mathbf{A}\mathbf{x} + \alpha_t(\mathbf{I} - \mathbf{A}^+\mathbf{A})\mathbf{x}\) and \(\boldsymbol{\Sigma}_t = \gamma_t\mathbf{A}^+\boldsymbol{\Sigma}{\mathbf{A}^{+}}^\top + \beta_t(\mathbf{I} - \mathbf{A}^+\mathbf{A})\), utilizing the Moore-Penrose pseudoinverse \(\mathbf{A}^+\). Because SDB maps measurements \(\mathbf{y}\) to reconstructions \(\mathbf{x}\) via a matrix-valued diffusion process, it allows for direct integration of the evaluated hallucination reduction methods. Throughout this experiment, we strictly adhere to the original SDB experimental setup on \texttt{RSNA} \citep{sobieski2026systemembedded}.

\begin{table}[t]
  \tiny
  \caption{Quantitative comparison of all sampling methods applied to SDB on the \texttt{RSNA} dataset. We include standard reconstruction metrics regarding perception and distortion. We highlight \colorbox{LightGreen}{mAP} and \colorbox{LightGreen}{mROC}, performance measures of the ResNet50 observer, as indicators of reconstruction errors and hallucinations, where higher values suggest improved anatomical consistency.}
  \label{tab:rsna_results}
  \centering
  \resizebox{\textwidth}{!}{%
  \begin{tabular}{l cccccc >{\columncolor{LightGreen}}c >{\columncolor{LightGreen}}c}
    \toprule
    Method & FID $\downarrow$ & IV $\uparrow$ & DSV $\uparrow$ & PSNR $\uparrow$ & SSIM $\uparrow$ & LPIPS $\downarrow$ & \cellcolor{LightGreen}mAP $\uparrow$ & \cellcolor{LightGreen}mROC $\uparrow$ \\
    \midrule
    Baseline & 33.3 & 0.066 & 0.264 & 33.54 & 0.898 & 0.0387 & 0.27 & 0.85 \\
    DG       & 35.6 & 0.066 & 0.264 & 33.55 & 0.899 & 0.0388 & 0.31 & 0.86 \\
    AAM      & 35.6 & 0.066 & 0.264 & 33.54 & 0.898 & 0.0388 & 0.29 & 0.86 \\
    RODS\textsuperscript{CAS} & 33.3 & 0.067 & 0.264 & 33.54 & 0.898 & 0.0388 & 0.29 & 0.86 \\
    RODS\textsuperscript{SAS} & 33.3 & 0.066 & 0.263 & 33.55 & 0.898 & 0.0387 & 0.29 & 0.86 \\
    \(\iq\)      & 33.3 & 0.066 & 0.264 & 33.54 & 0.901 & 0.0388 & 0.31 & 0.86 \\
    \bottomrule
  \end{tabular}%
  }
\vspace{-2em}
\end{table}

\textbf{Reducing misdiagnosis.} In this context, structural hallucinations manifest as mischaracterizations of the patient condition, \eg, fabricated synthetic pathology. To quantify diagnostic safety, we trained a ResNet50 observer to detect five hemorrhage types using 17,948 ground-truth \texttt{RSNA} slices. On a 512-slice validation split, the observer achieves 0.38 macro-averaged multi-label Average Precision (mAP) and 0.91 macro-averaged multi-label Area Under the ROC (mROC). When evaluated on baseline SDB reconstructions \citep{sobieski2026systemembedded}, performance drops to 0.27 mAP and 0.85 mROC, reflecting quality degradation and generative hallucinations.

We adapt all baseline mitigations to the SDB framework, tracking image quality and diversity (FID. IV, DSV), distortion (PSNR, SSIM, LPIPS) \citep{zhang2018unreasonable,blau2018perception}, and diagnostic safety (mAP, mROC) (\cref{tab:rsna_results}). Because SDB strictly preserves the measurement range space, image and reconstruction quality metric variations are heavily constrained to the null space and remain mostly flat, although DG and AAM noticeably degrade FID. Diagnostic performance, however, reveals a sharp distinction. While all methods marginally raise mROC to 0.86, \(\iq\) and DG increase mAP by roughly 15\% (reaching 0.31). Crucially, DG requires an independently trained classifier with direct access to ground-truth hemorrhage labels, granting it an inherent advantage. Despite operating entirely zero-shot, \(\iq\) matches DG's diagnostic gains while strictly preserving reconstruction quality. This underscores \(\iq\)'s universal applicability and viability for broader inverse problem frameworks \citep{chung2023diffusion,liu20232,luo2023image,Garber_2024_CVPR,yue2024image,zhoudenoising}.

\textbf{Ablation studies.} Due to limited space, we continue the experimental evaluation in \cref{sec:extended_experiments}, providing ablation studies for each component of \(\iq\) (\cref{sec:ablation_studies}), runtime comparison (\cref{sec:runtime_comparison}), qualitative examples (\cref{sec:qualitative_examples}) and more.
\vspace{-1em}
\section{Discussion and Limitations}

Our work serves as a direct counterpart to \citet{ross2025a}, who demonstrate how the collapse of \(\lid\) values provides a direct indicator of \emph{memorized} samples. These complementary findings offer empirical evidence for an intuitive understanding of \(\lid\): it serves as a proxy for the \emph{creativity} of a DM, where hallucinations (the model being overly creative) and memorization (a lack of creativity) span the full generative spectrum.

Evaluation persists as the primary limitation of \(\iq\) and other hallucination reduction methods. While human annotators generally agree when identifying structural failures, disagreements remain common, highlighting the subjective nature of the task. Constructing fully objective, large-scale benchmarks with automated evaluation would streamline the development of more effective methods. Currently, deciding whether the observed drops in DSV and IV, proxy metrics for diversity, in datasets like \texttt{11kHands} and \texttt{FFHQ} are strictly caused by the removal of hallucinated samples remains highly probable but difficult to definitively isolate.

Furthermore, methods \emph{intrinsic} to the model, such as RODS and \(\iq\), still induce a moderate computational overhead. Discovering cheaper estimators for \(\lid\) would lead to immediate practical improvements for \(\iq\). Finally, while unconditional generation is the cornerstone of generative modeling, hallucinations pose the highest risk in conditional settings involving real-world decision-making and diagnosis \citep{antun2020instabilities}, highlighting a critical direction for future investigation.

\begin{ack}

Work on this project is financially supported by the Polish National Science Centre PRELUDIUM BIS grant No. \texttt{2023/50/O/ST6/00301}, the Foundation for Polish Science (FNP) grant ‘Centre for Credible AI’ No. \texttt{FENG.02.01-IP.05-0058/24}.

The computational resources for this work were provided by the Laboratory of Bioinformatics and Computational Genomics and the High Performance Computing Center of the Faculty of Mathematics and Information Science, Warsaw University of Technology. We also gratefully acknowledge Poland's High-performance Infrastructure PLGrid ACC Cyfronet AGH for providing computer facilities and support within computational grant no. PLG/2025/018330.
\end{ack}

\bibliographystyle{plainnat}
\bibliography{main}

@inproceedings{ho2020denoising,
  title={Denoising diffusion probabilistic models},
  author={Ho, Jonathan and Jain, Ajay and Abbeel, Pieter},
  booktitle={Advances in neural information processing systems},
  year={2020}
}

@inproceedings{sohl2015deep,
  title={Deep unsupervised learning using nonequilibrium thermodynamics},
  author={Sohl-Dickstein, Jascha and Weiss, Eric and Maheswaranathan, Niru and Ganguli, Surya},
  booktitle={International Conference on Machine Learning},
  year={2015}
}

@inproceedings{songscore,
  title={Score-Based Generative Modeling through Stochastic Differential Equations},
  author={Song, Yang and Sohl-Dickstein, Jascha and Kingma, Diederik P and Kumar, Abhishek and Ermon, Stefano and Poole, Ben},
  booktitle={International Conference on Learning Representations},
year={2021}
}

@inproceedings{liu20232,
  title={{I2SB: Image-to-Image Schr{\"o}dinger Bridge}},
  author={Liu, Guan-Horng and Vahdat, Arash and Huang, De-An and Theodorou, Evangelos and Nie, Weili and Anandkumar, Anima},
  booktitle={International Conference on Machine Learning},
  year={2023},
}

@inproceedings{dhariwal2021beat,
  title={Diffusion Models Beat GANs on Image Synthesis},
  author={Dhariwal, Prafulla and Nichol, Alex},
  booktitle={Advances in neural information processing systems},
  year={2021}
}

@inproceedings{rombach2022high,
  title={High-resolution image synthesis with latent diffusion models},
  author={Rombach, Robin and Blattmann, Andreas and Lorenz, Dominik and Esser, Patrick and Ommer, Bj{\"o}rn},
  booktitle={Proceedings of the IEEE/CVF conference on computer vision and pattern recognition},
  pages={10684--10695},
  year={2022}
}

@inproceedings{zhang2018unreasonable,
  title={The unreasonable effectiveness of deep features as a perceptual metric},
  author={Zhang, Richard and Isola, Phillip and Efros, Alexei A and Shechtman, Eli and Wang, Oliver},
  booktitle={Proceedings of the IEEE conference on computer vision and pattern recognition},
  year={2018}
}

@inproceedings{sundararajan2017axiomatic,
  title={Axiomatic attribution for deep networks},
  author={Sundararajan, Mukund and Taly, Ankur and Yan, Qiqi},
  booktitle={International Conference on Machine Learning},
  pages={3319--3328},
  year={2017},
  organization={PMLR}
}

@inproceedings{heusel2017gans,
  title={Gans trained by a two time-scale update rule converge to a local nash equilibrium},
  author={Heusel, Martin and Ramsauer, Hubert and Unterthiner, Thomas and Nessler, Bernhard and Hochreiter, Sepp},
  booktitle={Advances in neural information processing systems},
  year={2017}
}

@inproceedings{
    chung2023diffusion,
    title={Diffusion Posterior Sampling for General Noisy Inverse Problems},
    author={Hyungjin Chung and Jeongsol Kim and Michael Thompson Mccann and Marc Louis Klasky and Jong Chul Ye},
    booktitle={The Eleventh International Conference on Learning Representations},
    year={2023}
}

@inproceedings{10.5555/3295222.3295230,
author = {Lundberg, Scott M. and Lee, Su-In},
title = {A unified approach to interpreting model predictions},
year = {2017},
booktitle = {Proceedings of the 31st International Conference on Neural Information Processing Systems},
numpages = {10},
location = {Long Beach, California, USA},
}

@inproceedings{ronneberger2015u,
  title={U-net: Convolutional networks for biomedical image segmentation},
  author={Ronneberger, Olaf and Fischer, Philipp and Brox, Thomas},
  booktitle={Medical image computing and computer-assisted intervention--MICCAI 2015: 18th international conference, Munich, Germany, October 5-9, 2015, proceedings, part III 18},
  pages={234--241},
  year={2015},
  organization={Springer}
}

@inproceedings{zhoudenoising,
  title={Denoising Diffusion Bridge Models},
  author={Zhou, Linqi and Lou, Aaron and Khanna, Samar and Ermon, Stefano},
  booktitle={International Conference on Learning Representations},
    year={2024}
}

@inproceedings{luo2023image,
  title={Image restoration with mean-reverting stochastic differential equations},
  author={Luo, Ziwei and Gustafsson, Fredrik K and Zhao, Zheng and Sj{\"o}lund, Jens and Sch{\"o}n, Thomas B},
  booktitle={International Conference on Machine Learning},
  year={2023}
}

@inproceedings{yue2024image,
  title={Image restoration through generalized ornstein-uhlenbeck bridge},
  author={Yue, Conghan and Peng, Zhengwei and Ma, Junlong and Du, Shiyan and Wei, Pengxu and Zhang, Dongyu},
  booktitle={International Conference on Machine Learning},
  year={2024}
}

@InProceedings{Garber_2024_CVPR,
    author    = {Garber, Tomer and Tirer, Tom},
    title     = {Image Restoration by Denoising Diffusion Models with Iteratively Preconditioned Guidance},
    booktitle = {Proceedings of the IEEE/CVF Conference on Computer Vision and Pattern Recognition (CVPR)},
    year      = {2024},
}

@inproceedings{song2023consistency,
  title={Consistency Models},
  author={Song, Yang and Dhariwal, Prafulla and Chen, Mark and Sutskever, Ilya},
  booktitle={International Conference on Machine Learning},
  year={2023}
}

@misc{rsna-intracranial-hemorrhage-detection,
    author = {Anouk Stein, MD and Carol Wu and Chris Carr and George Shih and Jayashree Kalpathy-Cramer and Julia Elliott and kalpathy and Luciano Prevedello and Marc Kohli, MD and Matt Lungren and Phil Culliton and Robyn Ball and Safwan Halabi MD},
    title = {RSNA Intracranial Hemorrhage Detection},
    year = {2019},
    howpublished = {\url{https://kaggle.com/competitions/rsna-intracranial-hemorrhage-detection}},
    note = {Kaggle}
}

@inproceedings{blau2018perception,
  title={The perception-distortion tradeoff},
  author={Blau, Yochai and Michaeli, Tomer},
  booktitle={Proceedings of the IEEE conference on computer vision and pattern recognition},
  pages={6228--6237},
  year={2018}
}

@inproceedings{lipmanflow,
  title={Flow Matching for Generative Modeling},
  author={Lipman, Yaron and Chen, Ricky TQ and Ben-Hamu, Heli and Nickel, Maximilian and Le, Matthew},
  booktitle={International Conference on Learning Representations},
    year={2023}
}

@inproceedings{aithal2024understanding,
  title={Understanding hallucinations in diffusion models through mode interpolation},
  author={Aithal, Sumukh K and Maini, Pratyush and Lipton, Zachary and Kolter, J Zico},
  booktitle={Advances in neural information processing systems},
  year={2024}
}

@inproceedings{yeats2025connection,
  title={A Connection Between Score Matching and Local Intrinsic Dimension},
  author={Yeats, Eric and Jacobson, Aaron and Hannan, Darryl and Jia, Yiran and Doster, Timothy and Kvinge, Henry and Mahan, Scott},
  booktitle={NeurIPS 2025 Workshop on Structured Probabilistic Inference $\&$ Generative Modeling},
  year={2025}
}

@inproceedings{
    tian2025rods,
    title={{RODS}: Robust Optimization Inspired Diffusion Sampling for Detecting and Reducing Hallucination in Generative Models},
    author={Yiqi Tian and Pengfei Jin and Mingze Yuan and Na Li and Bo Zeng and Quanzheng Li},
    booktitle={The Thirty-ninth Annual Conference on Neural Information Processing Systems},
    year={2025},
    url={https://openreview.net/forum?id=fhuqIxoPcr}
}

@inproceedings{triaridis2025mitigating,
  title={Mitigating Diffusion Model Hallucinations with Dynamic Guidance},
  author={Triaridis, Kostas and Graikos, Alexandros and Chatziagapi, Aggelina and Chrysos, Grigorios G and Samaras, Dimitris},
  booktitle={arXiv},
  year={2025}
}

@inproceedings{kim2024tackling,
  title={Tackling structural hallucination in image translation with local diffusion},
  author={Kim, Seunghoi and Jin, Chen and Diethe, Tom and Figini, Matteo and Tregidgo, Henry FJ and Mullokandov, Asher and Teare, Philip and Alexander, Daniel C},
  booktitle={European Conference on Computer Vision},
  year={2024},
  organization={Springer}
}

@inproceedings{oorloff2025mitigating,
  title={Mitigating hallucinations in diffusion models through adaptive attention modulation},
  author={Oorloff, Trevine and Yacoob, Yaser and Shrivastava, Abhinav},
  booktitle={arXiv},
  year={2025}
}

@misc{
bhosale2026varianceguided,
title={Variance-Guided Score Regularization for Hallucination Mitigation in Diffusion Models},
author={Mahesh Bhosale and Naresh Kumar Devulapally and Abdul Wasi and Chau Pham and Vishnu Suresh Lokhande and David Doermann},
year={2026},
url={https://openreview.net/forum?id=nY4nULFzDP}
}

@inproceedings{ren2025hallucination,
  title={Hallucination Score: Towards Mitigating Hallucinations in Generative Image Super-Resolution},
  author={Ren, Weiming and Goyal, Raghav and Hu, Zhiming and Aumentado-Armstrong, Tristan Ty and Mohomed, Iqbal and Levinshtein, Alex},
  booktitle={arXiv},
  year={2025}
}

@inproceedings{cao2025temporal,
  title={Temporal score analysis for understanding and correcting diffusion artifacts},
  author={Cao, Yu and Zhao, Zengqun and Patras, Ioannis and Gong, Shaogang},
  booktitle={Proceedings of the Computer Vision and Pattern Recognition Conference},
  pages={7707--7716},
  year={2025}
}

@inproceedings{tivnan2024hallucination,
  title={Hallucination index: An image quality metric for generative reconstruction models},
  author={Tivnan, Matthew and Yoon, Siyeop and Chen, Zhennong and Li, Xiang and Wu, Dufan and Li, Quanzheng},
  booktitle={International Conference on Medical Image Computing and Computer-Assisted Intervention},
  pages={449--458},
  year={2024},
  organization={Springer}
}

@INPROCEEDINGS{10981211,
  author={Ku, Alice and Tivnan, Matthew and Wu, Dufan},
  booktitle={2025 IEEE 22nd International Symposium on Biomedical Imaging (ISBI)}, 
  title={Hallucination Analysis of Score-Based Diffusion Models for CT Denoising Through Spatial Frequency Decomposition}, 
  year={2025},
  volume={},
  number={},
  pages={1-5},
  doi={10.1109/ISBI60581.2025.10981211}}

@inproceedings{
ross2025a,
title={A Geometric Framework for Understanding Memorization in Generative Models},
author={Brendan Leigh Ross and Hamidreza Kamkari and Tongzi Wu and Rasa Hosseinzadeh and Zhaoyan Liu and George Stein and Jesse C. Cresswell and Gabriel Loaiza-Ganem},
booktitle={The Thirteenth International Conference on Learning Representations},
year={2025},
url={https://openreview.net/forum?id=aZ1gNJu8wO}
}

@inproceedings{
lu2025towards,
title={Towards Understanding Text Hallucination of Diffusion Models via Local Generation Bias},
author={Rui Lu and Runzhe Wang and Kaifeng Lyu and Xitai Jiang and Gao Huang and Mengdi Wang},
booktitle={The Thirteenth International Conference on Learning Representations},
year={2025},
url={https://openreview.net/forum?id=SKW10XJlAI}
}

@inproceedings{narasimhaswamy2024handiffuser,
  title={Handiffuser: Text-to-image generation with realistic hand appearances},
  author={Narasimhaswamy, Supreeth and Bhattacharya, Uttaran and Chen, Xiang and Dasgupta, Ishita and Mitra, Saayan and Hoai, Minh},
  booktitle={Proceedings of the IEEE/CVF Conference on Computer Vision and Pattern Recognition},
  pages={2468--2479},
  year={2024}
}

@inproceedings{lu2024handrefiner,
  title={Handrefiner: Refining malformed hands in generated images by diffusion-based conditional inpainting},
  author={Lu, Wenquan and Xu, Yufei and Zhang, Jing and Wang, Chaoyue and Tao, Dacheng},
  booktitle={Proceedings of the 32nd ACM International Conference on Multimedia},
  pages={7085--7093},
  year={2024}
}

@inproceedings{Shi_Guo_Shui_Chen_Shen_2026, 
    title={SGMHand: Structure-Guided Modulation for Structure-Aware Hand Inpainting}, 
    booktitle={Proceedings of the AAAI Conference on Artificial Intelligence}, 
    author={Shi, Chuancheng and Guo, Shiming and Shui, Ke and Chen, Yixiang and Shen, Fei}, 
    year={2026}
}

@inproceedings{kamkari2024geometric,
  title={A geometric view of data complexity: Efficient local intrinsic dimension estimation with diffusion models},
  author={Kamkari, Hamidreza and Ross, Brendan L and Hosseinzadeh, Rasa and Cresswell, Jesse C and Loaiza-Ganem, Gabriel},
  booktitle={Advances in Neural Information Processing Systems},
  year={2024}
}

@InProceedings{pmlr-v235-stanczuk24a,
  title = 	 {Diffusion Models Encode the Intrinsic Dimension of Data Manifolds},
  author =       {Stanczuk, Jan Pawel and Batzolis, Georgios and Deveney, Teo and Sch\"{o}nlieb, Carola-Bibiane},
  booktitle = 	 {Proceedings of the 41st International Conference on Machine Learning},
  year = 	 {2024}
}

@inproceedings{tempczyk2022lidl,
  title={Lidl: Local intrinsic dimension estimation using approximate likelihood},
  author={Tempczyk, Piotr and Michaluk, Rafa{\l} and Garncarek, Lukasz and Spurek, Przemys{\l}aw and Tabor, Jacek and Golinski, Adam},
  booktitle={International Conference on Machine Learning},
  pages={21205--21231},
  year={2022},
  organization={PMLR}
}

@inproceedings{
liu2022,
title={Flow Straight and Fast: Learning to Generate and Transfer Data with Rectified Flow},
author={Xingchao Liu and Chengyue Gong and Qiang Liu},
booktitle={The Eleventh International Conference on Learning Representations },
year={2023}
}

@inproceedings{
albergo2023building,
title={Building Normalizing Flows with Stochastic Interpolants},
author={Michael Samuel Albergo and Eric Vanden-Eijnden},
booktitle={The Eleventh International Conference on Learning Representations },
year={2023}
}

@inproceedings{
geng2025mean,
title={Mean Flows for One-step Generative Modeling},
author={Zhengyang Geng and Mingyang Deng and Xingjian Bai and J Zico Kolter and Kaiming He},
booktitle={The Thirty-ninth Annual Conference on Neural Information Processing Systems},
year={2025}
}

@inproceedings{
lou2024discrete,
title={Discrete Diffusion Modeling by Estimating the Ratios of the Data Distribution},
author={Aaron Lou and Chenlin Meng and Stefano Ermon},
booktitle={Forty-first International Conference on Machine Learning},
year={2024}
}

@inproceedings{
kong2021diffwave,
title={DiffWave: A Versatile Diffusion Model for Audio Synthesis},
author={Zhifeng Kong and Wei Ping and Jiaji Huang and Kexin Zhao and Bryan Catanzaro},
booktitle={International Conference on Learning Representations},
year={2021},
url={https://openreview.net/forum?id=a-xFK8Ymz5J}
}

@inproceedings{kotelnikov2023tabddpm,
  title={Tabddpm: Modelling tabular data with diffusion models},
  author={Kotelnikov, Akim and Baranchuk, Dmitry and Rubachev, Ivan and Babenko, Artem},
  booktitle={International conference on machine learning},
  year={2023},
  organization={PMLR}
}

@inproceedings{hoogeboom2022equivariant,
  title={Equivariant diffusion for molecule generation in 3d},
  author={Hoogeboom, Emiel and Satorras, V{\i}ctor Garcia and Vignac, Cl{\'e}ment and Welling, Max},
  booktitle={International conference on machine learning},
  pages={8867--8887},
  year={2022},
  organization={PMLR}
}

@inproceedings{park2023understanding,
  title={Understanding the latent space of diffusion models through the lens of riemannian geometry},
  author={Park, Yong-Hyun and Kwon, Mingi and Choi, Jaewoong and Jo, Junghyo and Uh, Youngjung},
  booktitle={Advances in Neural Information Processing Systems},
  volume={36},
  pages={24129--24142},
  year={2023}
}

@inproceedings{karras2022elucidating,
  title={Elucidating the design space of diffusion-based generative models},
  author={Karras, Tero and Aittala, Miika and Aila, Timo and Laine, Samuli},
  booktitle={Advances in neural information processing systems},
  year={2022}
}

@inproceedings{sclocchi2025phase,
  title={A phase transition in diffusion models reveals the hierarchical nature of data},
  author={Sclocchi, Antonio and Favero, Alessandro and Wyart, Matthieu},
  booktitle={Proceedings of the National Academy of Sciences},
  year={2025},
  publisher={National Academy of Sciences}
}

@inproceedings{ramesh2021zero,
  title={Zero-shot text-to-image generation},
  author={Ramesh, Aditya and Pavlov, Mikhail and Goh, Gabriel and Gray, Scott and Voss, Chelsea and Radford, Alec and Chen, Mark and Sutskever, Ilya},
  booktitle={International conference on machine learning},
  year={2021},
  organization={PMLR}
}

@inproceedings{vincent2011connection,
  title={A connection between score matching and denoising autoencoders},
  author={Vincent, Pascal},
  booktitle={Neural computation},
  year={2011},
  publisher={MIT Press}
}

@inproceedings{efron2011tweedie,
  title={Tweedie’s formula and selection bias},
  author={Efron, Bradley},
  booktitle={Journal of the American Statistical Association},
  year={2011},
  publisher={Taylor \& Francis}
}

@inproceedings{alain2014regularized,
  title={What regularized auto-encoders learn from the data-generating distribution},
  author={Alain, Guillaume and Bengio, Yoshua},
  booktitle={The Journal of Machine Learning Research},
  year={2014},
  publisher={JMLR}
}

@inproceedings{fefferman2016testing,
  title={Testing the manifold hypothesis},
  author={Fefferman, Charles and Mitter, Sanjoy and Narayanan, Hariharan},
  booktitle={Journal of the American Mathematical Society},
  year={2016}
}

@inproceedings{pidstrigach2022score,
  title={Score-based generative models detect manifolds},
  author={Pidstrigach, Jakiw},
  booktitle={Advances in Neural Information Processing Systems},
  year={2022}
}

@inproceedings{afifi201911kHands,
  title = {11K Hands: gender recognition and biometric identification using a large dataset of hand images},
  author = {Afifi, Mahmoud},
  booktitle = {Multimedia Tools and Applications},
  year={2019}
}

@inproceedings{vaswani2017attention,
  title={Attention is all you need},
  author={Vaswani, Ashish and Shazeer, Noam and Parmar, Niki and Uszkoreit, Jakob and Jones, Llion and Gomez, Aidan N and Kaiser, {\L}ukasz and Polosukhin, Illia},
  booktitle={Advances in neural information processing systems},
  year={2017}
}

@inproceedings{roth2022towards,
  title={Towards total recall in industrial anomaly detection},
  author={Roth, Karsten and Pemula, Latha and Zepeda, Joaquin and Sch{\"o}lkopf, Bernhard and Brox, Thomas and Gehler, Peter},
  booktitle={Proceedings of the IEEE/CVF conference on computer vision and pattern recognition},
  pages={14318--14328},
  year={2022}
}

@book{allgower2012numerical,
  title={Numerical continuation methods: an introduction},
  author={Allgower, Eugene L and Georg, Kurt},
  year={2012},
  publisher={Springer Science \& Business Media}
}

@inproceedings{choi2020starganv2,
  title={StarGAN v2: Diverse Image Synthesis for Multiple Domains},
  author={Yunjey Choi and Youngjung Uh and Jaejun Yoo and Jung-Woo Ha},
  booktitle={Proceedings of the IEEE Conference on Computer Vision and Pattern Recognition},
  year={2020}
}

@inproceedings{karras2019style,
  title={A style-based generator architecture for generative adversarial networks},
  author={Karras, Tero and Laine, Samuli and Aila, Timo},
  booktitle={Proceedings of the IEEE/CVF conference on computer vision and pattern recognition},
  pages={4401--4410},
  year={2019}
}

@inproceedings{nichol2021improved,
  title={Improved denoising diffusion probabilistic models},
  author={Nichol, Alexander Quinn and Dhariwal, Prafulla},
  booktitle={International conference on machine learning},
  pages={8162--8171},
  year={2021},
  organization={PMLR}
}

@misc{von-platen-etal-2022-diffusers,
  author = {Patrick von Platen and Suraj Patil and Anton Lozhkov and Pedro Cuenca and Nathan Lambert and Kashif Rasul and Mishig Davaadorj and Dhruv Nair and Sayak Paul and William Berman and Yiyi Xu and Steven Liu and Thomas Wolf},
  title = {Diffusers: State-of-the-art diffusion models},
  year = {2022},
  publisher = {GitHub},
  journal = {GitHub repository},
  howpublished = {\url{https://github.com/huggingface/diffusers}}
}

@inproceedings{miao2024training,
  title={Training diffusion models towards diverse image generation with reinforcement learning},
  author={Miao, Zichen and Wang, Jiang and Wang, Ze and Yang, Zhengyuan and Wang, Lijuan and Qiu, Qiang and Liu, Zicheng},
  booktitle={Proceedings of the IEEE/CVF Conference on Computer Vision and Pattern Recognition},
  pages={10844--10853},
  year={2024}
}

@inproceedings{
domingo-enrich2025adjoint,
title={Adjoint Matching: Fine-tuning Flow and Diffusion Generative Models with Memoryless Stochastic Optimal Control},
author={Carles Domingo-Enrich and Michal Drozdzal and Brian Karrer and Ricky T. Q. Chen},
booktitle={The Thirteenth International Conference on Learning Representations},
year={2025},
url={https://openreview.net/forum?id=xQBRrtQM8u}
}

@inproceedings{geirhos2018imagenet,
  title={ImageNet-trained CNNs are biased towards texture; increasing shape bias improves accuracy and robustness},
  author={Geirhos, Robert and Rubisch, Patricia and Michaelis, Claudio and Bethge, Matthias and Wichmann, Felix A and Brendel, Wieland},
  booktitle={International conference on learning representations},
  year={2018}
}

@inproceedings{stein2023exposing,
  title={Exposing flaws of generative model evaluation metrics and their unfair treatment of diffusion models},
  author={Stein, George and Cresswell, Jesse and Hosseinzadeh, Rasa and Sui, Yi and Ross, Brendan and Villecroze, Valentin and Liu, Zhaoyan and Caterini, Anthony L and Taylor, Eric and Loaiza-Ganem, Gabriel},
  booktitle={Advances in Neural Information Processing Systems},
  year={2023}
}

@inproceedings{jayasumana2024rethinking,
  title={Rethinking fid: Towards a better evaluation metric for image generation},
  author={Jayasumana, Sadeep and Ramalingam, Srikumar and Veit, Andreas and Glasner, Daniel and Chakrabarti, Ayan and Kumar, Sanjiv},
  booktitle={Proceedings of the IEEE/CVF conference on computer vision and pattern recognition},
  pages={9307--9315},
  year={2024}
}

@inproceedings{wang2025generative,
  title={Generative Artificial Intelligence in Medical Imaging: Foundations, Progress, and Clinical Translation},
  author={Wang, Shanshan and Zhou, Xuanru and Li, Cheng and Wang, Shuqiang and Li, Ye and Tan, Tao and Zheng, Hairong},
  booktitle={Research},
  volume={8},
  pages={1029},
  year={2025},
  publisher={AAAS}
}

@inproceedings{
sobieski2026systemembedded,
title={System-Embedded Diffusion Bridge Models},
author={Bartlomiej Sobieski and Matthew Tivnan and Yuang Wang and Siyeop yoon and Pengfei Jin and Dufan Wu and Quanzheng Li and Przemyslaw Biecek},
booktitle={The Thirty-ninth Annual Conference on Neural Information Processing Systems},
year={2026},
url={https://openreview.net/forum?id=cipx3rwfWp}
}

@inproceedings{antun2020instabilities,
  title={On instabilities of deep learning in image reconstruction and the potential costs of AI},
  author={Antun, Vegard and Renna, Francesco and Poon, Clarice and Adcock, Ben and Hansen, Anders C},
  booktitle={Proceedings of the National Academy of Sciences},
  year={2020},
  publisher={National Academy of Sciences}
}

@inproceedings{JMLR:v6:hyvarinen05a,
  author  = {Aapo Hyv{{\"a}}rinen},
  title   = {Estimation of Non-Normalized Statistical Models by Score Matching},
  booktitle = {Journal of Machine Learning Research},
  year    = {2005}
}

@inproceedings{goresky1988stratified,
  title={Stratified morse theory},
  author={Goresky, Mark and MacPherson, Robert},
  booktitle={Stratified Morse Theory},
  year={1988},
  publisher={Springer}
}

@inproceedings{wanganalytical,
  title={An Analytical Theory of Spectral Bias in the Learning Dynamics of Diffusion Models},
  author={Wang, Binxu and Pehlevan, Cengiz},
  booktitle={The Thirty-ninth Annual Conference on Neural Information Processing Systems},
  year={2025}
}

@inproceedings{bengio2013representation,
  title={Representation learning: A review and new perspectives},
  author={Bengio, Yoshua and Courville, Aaron and Vincent, Pascal},
  booktitle={IEEE transactions on pattern analysis and machine intelligence},
  year={2013},
  publisher={IEEE}
}

@inproceedings{levina2004maximum,
  title={Maximum likelihood estimation of intrinsic dimension},
  author={Levina, Elizaveta and Bickel, Peter},
  booktitle={Advances in neural information processing systems},
  year={2004}
}

@inproceedings{
pope2021the,
title={The Intrinsic Dimension of Images and Its Impact on Learning},
author={Phil Pope and Chen Zhu and Ahmed Abdelkader and Micah Goldblum and Tom Goldstein},
booktitle={International Conference on Learning Representations},
year={2021}
}

@book{cacuci2005sensitivity,
  title={Sensitivity and uncertainty analysis, volume II: applications to large-scale systems},
  author={Cacuci, Dan G and Ionescu-Bujor, Mihaela and Navon, Ionel Michael},
  year={2005},
  publisher={CRC press}
}

@inproceedings{ma2018characterizing,
  title={Characterizing Adversarial Subspaces Using Local Intrinsic Dimensionality},
  author={Ma, Xingjun and Li, Bo and Wang, Yisen and Erfani, Sarah M and Wijewickrema, Sudanthi and Schoenebeck, Grant and Song, Dawn and Houle, Michael E and Bailey, James},
  booktitle={International Conference on Learning Representations},
  year={2018}
}

@inproceedings{ansuini2019intrinsic,
  title={Intrinsic dimension of data representations in deep neural networks},
  author={Ansuini, Alessio and Laio, Alessandro and Macke, Jakob H and Zoccolan, Davide},
  booktitle={Advances in Neural Information Processing Systems},
  year={2019}
}

@inproceedings{fu2025counting,
  title={Counting Hallucinations in Diffusion Models},
  author={Fu, Shuai and Zhou, Jian and Chen, Qi and Jing, Huang and Nguyen, Huy Anh and Liu, Xiaohan and Zeng, Zhixiong and Ma, Lin and Zhang, Quanshi and Wu, Qi},
  booktitle={arXiv},
  year={2025}
}

@inproceedings{cho2025tag,
  title={TAG: Tangential Amplifying Guidance for Hallucination-Resistant Diffusion Sampling},
  author={Cho, Hyunmin and Ahn, Donghoon and Hong, Susung and Kim, Jee Eun and Kim, Seungryong and Jin, Kyong Hwan},
  booktitle={arXiv},
  year={2025}
}

%%%%%%%%%%%%%%%%%%%%%%%%%%%%%%%%%%%%%%%%%%%%%%%%%%%%%%%%%%%%
\newpage
\appendix

\section*{Appendix}
\startcontents[sections]
\printcontents[sections]{l}{1}{\setcounter{tocdepth}{2}}
\setcounter{proposition}{0}
\setcounter{theorem}{0}
\setcounter{corollary}{0}

\section{Theoretical results}\label{sec:theoretical_results}

\subsection{\Cref{prop:lmi_ub}}
\begin{proposition}\label{prop:lmi_ub}
Let \(\generator\) be the deterministic generative mapping of a diffusion model from the latent noise space to the induced data manifold \(\manifoldt\), evaluated at an initial state \(\x_1\). Assume the perturbation scale \(\beta > 0\) is sufficiently small such that a first-order approximation holds. Let \(\sigma_1 \geq \sigma_2 \geq \dots \geq \sigma_n \geq 0\) denote the singular values of the generator's Jacobian \(\nabla_{\x_1}\generator(\x_1)\). Let \(\lidt(\x_0)\) denote the local intrinsic dimensionality of the generated data point \(\x_0 = \generator(\x_1)\) with respect to the induced manifold \(\manifoldt\). The Local Manifold Instability (\(\lmi\)) is approximately equal to the scaled sum of its top \(\left\lfloor\lidt(\x_0)\right\rfloor\) squared singular values:
\begin{equation}
    \lmi(\x_1) \approx \beta^2 \sum_{i=1}^{\left\lfloor\lidt(\x_0)\right\rfloor} \sigma_i^2.
\end{equation}
\end{proposition}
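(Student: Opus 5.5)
We linearize the generator and reduce the statement to a trace identity for the Jacobian. Write \(\mathbf{J} \triangleq \nabla_{\x_1}\generator(\x_1)\). Under the first-order assumption on \(\beta\), we have \(\generator(\x_1+\boldsymbol{\varepsilon}) \approx \generator(\x_1) + \mathbf{J}\boldsymbol{\varepsilon}\). Since \(\boldsymbol{\varepsilon}\sim\mathcal{N}(\mathbf{0},\beta^2\mathbf{I})\), the pushforward is approximately Gaussian with covariance \(\beta^2\mathbf{J}\mathbf{J}^\top\). Hence, by \cref{eq:lmi},
\begin{equation*}
\lmi(\x_1) \approx \beta^2 \Tr(\mathbf{J}\mathbf{J}^\top) = \beta^2 \norm{\mathbf{J}}_F^2 = \beta^2 \sum_{i=1}^{n} \sigma_i^2 .
\end{equation*}
This step is routine; the only care needed is to note that the constant term \(\generator(\x_1)\) drops out of the covariance and that the second-order remainder contributes \(O(\beta^3)\) or smaller to the variance.

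The remaining task is to truncate this full sum at \(\lfloor\lidt(\x_0)\rfloor\), and this is where the geometric content enters. We argue that the image of \(\mathbf{J}\) lies, to leading order, in the tangent space \(T_{\x_0}\manifoldt\). Every output of \(\generator\) lies on \(\manifoldt\) by construction, so for any curve \(\x_1 + s\mathbf{v}\) the curve \(\generator(\x_1+s\mathbf{v})\) stays in \(\manifoldt\). Its derivative \(\mathbf{J}\mathbf{v}\) therefore lies in \(T_{\x_0}\manifoldt\), at least when \(\x_0\) is a regular point of a stratum, as in \cref{ass:stratified_manifold}.

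Consequently, \(\operatorname{rank}\mathbf{J} \le \dim T_{\x_0}\manifoldt = \lidt(\x_0)\). All singular values \(\sigma_i\) with \(i > \lidt(\x_0)\) then vanish exactly, or are negligibly small in the statistical-approximation sense of \cref{ass:stratified_manifold}, where the learned \(\generator\) is full rank but only approximately so. Because the \(\sigma_i\) are ordered, the surviving terms are exactly the top ones, and \(\sum_{i=1}^n\sigma_i^2 \approx \sum_{i=1}^{\lfloor\lidt(\x_0)\rfloor}\sigma_i^2\).

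The floor is introduced because \(\lidt\) is in practice a continuous relaxation, as estimated via \cref{prop:lidt_estimators}. It should be read as the integer number of significantly non-zero singular values. I would state this explicitly as the interpretation of \(\lidt\) for a full-rank \(\generator\), as the paper does after the main-text version.

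The main obstacle is making the rank-truncation rigorous. The model \(\generator\) is a diffeomorphism-like flow and is not literally rank-deficient, so the claim "\(\sigma_i\approx 0\) for \(i>\lidt\)" is essentially definitional. My plan is to define \(\lidt(\x_0)\) as the numerical rank of \(\mathbf{J}\) at a threshold \(\delta\). I would then bound the discarded mass by \(\beta^2 (n-\lidt)\delta^2\), which shows the approximation error is controlled by the spectral gap.
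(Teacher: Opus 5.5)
Your proposal follows the paper's proof step for step: linearize to get \(\lmi(\x_1)\approx\beta^2\norm{\mathbf{J}}_F^2=\beta^2\sum_{i=1}^n\sigma_i^2\), then drop the tail singular values \(\sigma_i\), \(i>\lfloor\lidt(\x_0)\rfloor\), by appeal to \cref{ass:stratified_manifold}, which the paper also does essentially by fiat (``treating these tail singular values as negligible''), so your tangent-space argument and numerical-rank reformulation are a more explicit version of the same step. As you note, the tangent-space argument has no force for the literal \(\manifoldt\), since the image of a full-rank flow is full-dimensional; also, your bound \(\beta^2(n-\lidt)\delta^2\) makes the truncation meaningful only when \((n-\lidt)\delta^2\ll\sum_{i\le\lidt}\sigma_i^2\), which in image dimensions needs a very small threshold \(\delta\), not merely a spectral gap.
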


\begin{proof}
By definition, the Local Manifold Instability (\(\lmi\)) at an initial state \(\x_1\) is given by \(\lmi(\x_1) \triangleq \Tr(\cov_{\vareps}(\generator(\x_1+\vareps)))\), where \(\vareps \sim \mathcal{N}(\mathbf{0}, \beta^2\mathbf{I})\) for a perturbation scale \(\beta > 0\). Applying a first-order Taylor expansion to the generative mapping \(\generator\) around \(\x_1\) yields:
\begin{equation}
    \generator(\x_1 + \vareps) = \generator(\x_1) + \nabla_{\x_1}\generator(\x_1) \vareps + \mathcal{O}(\norm{\vareps}_2^2).
\end{equation}
Substituting this expansion into the covariance expression and recognizing that \(\generator(\x_1)\) is a deterministic constant with respect to \(\vareps\), it vanishes from the computation. Letting \(\mathbf{J} = \nabla_{\x_1}\generator(\x_1)\), we obtain:
\begin{equation}
    \cov_{\vareps}(\generator(\x_1+\vareps)) = \mathbf{J} \cov_{\vareps}(\vareps) \mathbf{J}^\top + \mathcal{O}(\beta^3) = \beta^2 \mathbf{J} \mathbf{J}^\top + \mathcal{O}(\beta^3).
\end{equation}
Applying the trace operator to calculate the total variance (\(\lmi\)):
\begin{equation}
    \lmi(\x_1) = \Tr(\beta^2 \mathbf{J} \mathbf{J}^\top) + \mathcal{O}(\beta^3) = \beta^2 \norm{\mathbf{J}}_F^2 + \mathcal{O}(\beta^3).
\end{equation}
For a sufficiently small noise scale \(\beta\), the linear regime dominates and the higher-order remainder \(\mathcal{O}(\beta^3)\) can be safely neglected, yielding the first-order approximation:
\begin{equation}\label{eq:lmi_frob_approx}
    \lmi(\x_1) \approx \beta^2 \norm{\mathbf{J}}_F^2.
\end{equation}

Our next objective is to relate this squared Frobenius norm to the effective spectral properties of the mapping. The squared Frobenius norm of any matrix is algebraically equivalent to the sum of all its squared singular values:
\begin{equation}
    \norm{\mathbf{J}}_F^2 = \sum_{i=1}^n \sigma_i^2.
\end{equation}

We address the geometric dimensionality of the generation process. The generator \(\generator\) continuously maps the \(n\)-dimensional ambient noise space into a lower-dimensional induced data manifold \(\manifoldt\). The local intrinsic dimensionality at the specific generated data point \(\x_0\), denoted as \(\lidt(\x_0)\), represents the effective number of independent, structurally significant directions in which the model's output varies.

While deep neural networks practically possess full-rank Jacobians (meaning all \(\sigma_i > 0\)), \cref{ass:stratified_manifold} implies that the mapping highly contracts the space along off-manifold directions. Consequently, the singular values corresponding to the remaining \(n - \left\lfloor\lidt(\x_0)\right\rfloor\) directions are infinitesimally small, representing negligible off-manifold numerical noise rather than structurally meaningful degrees of freedom. By treating these tail singular values as negligible (\(\sigma_i \approx 0\) for \(i > \left\lfloor\lidt(\x_0)\right\rfloor\)), we can approximate the full spectral sum by truncating it at the effective dimensionality:
\begin{equation}
    \sum_{i=1}^n \sigma_i^2 \approx \sum_{i=1}^{\left\lfloor\lidt(\x_0)\right\rfloor} \sigma_i^2.
\end{equation}

Finally, substituting this spectral truncation back into \cref{eq:lmi_frob_approx} yields the completed theoretical approximation:
\begin{equation}
    \lmi(\x_1) \approx \beta^2 \sum_{i=1}^{\left\lfloor\lidt(\x_0)\right\rfloor} \sigma_i^2.
\end{equation}
This demonstrates that, under a localized linear regime, the spatial instability of the generator is closely approximated by the variance of the injected noise scaled by the structurally significant spectral mass of the local manifold.
\end{proof}
\subsection{\Cref{prop:lidt_estimators}}
\begin{proposition}\label{prop:lidt_estimators}
Let \(\manifoldt\) be the \(\lidt(\x_0)\)-dimensional manifold induced by a fully trained diffusion model parameterized by \(\btheta\), embedded in an \(n\)-dimensional ambient space, with probability density \(p_{\btheta}(\x_0)\). Assume \(p_{\btheta}(\x_0)\) is locally uniform on \(\manifoldt\) with negligible curvature at the limit \(t \to 0\). Let the forward process be defined by an arbitrary drift matrix \(\mathbf{H}_t\) and a strictly positive-definite covariance matrix \(\boldsymbol{\Sigma}_t\). As \(t \to 0\), the expected DSM loss evaluates exactly to the local intrinsic dimension \(\lidt(\x_0)\), and the expected ISM loss evaluates exactly to \(-\frac{1}{2}(n - \lidt(\x_0))\).
\end{proposition}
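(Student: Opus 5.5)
The plan is to reduce everything to the structure of the optimal noise/score predictor near a locally flat, uniformly-weighted manifold, following the argument of \citet{yeats2025connection} but with \(\manifoldt\) in place of the data manifold. Under the assumptions (locally uniform \(p_{\btheta}\), negligible curvature as \(t\to0\)), locally \(\manifoldt\) is an affine subspace through \(\x_0\) of dimension \(d=\lidt(\x_0)\). I will work in the \(\boldsymbol{\Sigma}_t\)-whitened coordinates, \(\mathbf{z}=\boldsymbol{\Sigma}_t^{-1/2}\x_t\), so that the forward kernel becomes isotropic. The image \(\boldsymbol{\Sigma}_t^{-1/2}\mathbf{H}_t\manifoldt\) is again locally an affine \(d\)-dimensional subspace, since \(\mathbf{H}_t\) is invertible and \(\boldsymbol{\Sigma}_t\) is positive definite. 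Let \(\mathbf{P}\) denote the orthogonal projector onto its tangent space in whitened coordinates.

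\textbf{Step 1: the model's noise prediction on its own manifold.} Since \(\mathbf{s}_{\btheta}\) is the score of \(p^{\btheta}_t\), the marginal of \(p_{\btheta}(\x_0)\) pushed through the forward kernel, \(\vareps_{\btheta}(\x_t)=-\boldsymbol{\Sigma}_t^{1/2}\mathbf{s}_{\btheta}(\x_t)=\expval[\vareps\mid\x_t]\) under the model. For a uniform density on a flat \(d\)-plane, convolving with the kernel leaves the tangential component uniform and the normal component Gaussian. Hence, as \(t\to0\), the posterior of the tangential noise component is flat: it is non-identifiable, so its conditional mean is \(\mathbf{0}\). The normal component is determined exactly by \(\x_t\). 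This gives \(\vareps_{\btheta}(\x_t)=(\mathbf{I}-\mathbf{P})\vareps\). This is the crux, and also the step I expect to be the main obstacle. Making it rigorous requires the local-uniformity and flatness limits to commute with \(t\to0\): the tangential posterior must be approximately uniform on a neighbourhood that is large compared with \(\boldsymbol{\Sigma}_t^{1/2}\) but small compared with the curvature radius. I would state this as the ``mild regularity condition'' and control the boundary and curvature contributions as \(o(1)\).

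\textbf{Step 2: DSM.} Substituting into \cref{eq:l_dsm} gives \(\mathcal{L}_{\text{DSM}}=\expval\norm{\mathbf{P}\vareps}_2^2=\Tr(\mathbf{P})=d=\lidt(\x_0)\).

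\textbf{Step 3: ISM.} Write \(\mathbf{s}_{\btheta}(\x_t)=-\boldsymbol{\Sigma}_t^{-1/2}(\mathbf{I}-\mathbf{P})\boldsymbol{\Sigma}_t^{-1/2}(\x_t-\mathbf{H}_t\x_0)\), valid locally as an affine function of \(\x_t\). Its Jacobian is \(-\boldsymbol{\Sigma}_t^{-1/2}(\mathbf{I}-\mathbf{P})\boldsymbol{\Sigma}_t^{-1/2}\), so by cyclicity
\[
\Tr(\boldsymbol{\Sigma}_t\nabla_{\x_t}\mathbf{s}_{\btheta})=-\Tr(\mathbf{I}-\mathbf{P})=-(n-d).
\]
For the quadratic term, interpreting \(\norm{\mathbf{v}}_{\boldsymbol{\Sigma}_t}^2=\mathbf{v}^\top\boldsymbol{\Sigma}_t\mathbf{v}\) gives \(\tfrac12\expval\norm{(\mathbf{I}-\mathbf{P})\vareps}_2^2=\tfrac12(n-d)\). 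Summing the two terms yields \(\mathcal{L}_{\text{ISM}}=-\tfrac12(n-\lidt(\x_0))\).

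As a consistency check, this agrees with the classical identity \(\mathcal{L}_{\text{DSM}}=2\mathcal{L}_{\text{ISM}}+n\), obtained by integration by parts (Stein's lemma) with \(\boldsymbol{\Sigma}_t\)-weighting. I would actually derive ISM from DSM this way, since it avoids differentiating the score near the manifold and only requires the Step 1 identity in expectation.
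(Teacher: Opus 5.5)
Your proposal is correct and follows the paper's proof in substance. The paper also reduces the DSM residual to the tangential projection of the noise in whitened coordinates: it phrases this via a $\boldsymbol{\Sigma}_t^{-1}$-self-adjoint oblique projector $\mathbf{P}_{\mathcal{T}}^{\boldsymbol{\Sigma}_t,\mathbf{H}_t}$, whose conjugate $\boldsymbol{\Sigma}_t^{-1/2}\mathbf{P}_{\mathcal{T}}^{\boldsymbol{\Sigma}_t,\mathbf{H}_t}\boldsymbol{\Sigma}_t^{1/2}$ is exactly your $\mathbf{P}$, then takes the trace to get $\lidt(\x_0)$ and obtains the ISM value from the Stein identity $\mathcal{L}_{\text{DSM}} = n + 2\mathcal{L}_{\text{ISM}}$, just as you propose. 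Your direct Jacobian check of the ISM terms and your explicit scale-separation condition are extras the paper omits; one wording correction in Step 1: it is the likelihood, not the posterior, that is flat along the tangent directions, so the tangential noise posterior equals its prior $\mathcal{N}(\mathbf{0},\mathbf{P})$, which still has zero mean.
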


\begin{proof}
Let the forward diffusion state be \(\x_t = \mathbf{H}_t\x_0 + \boldsymbol{\Sigma}_t^{\frac{1}{2}} \vareps\), where \(\vareps \sim \mathcal{N}(\mathbf{0}, \mathbf{I})\) and \(\x_0 \in \manifoldt\). By Tweedie's formula applied to the model's induced distribution, the predicted score function \(\mathbf{s}_{\btheta}(\x_t)\) defines the model's posterior mean estimate \(\pmptheta\) via:
\begin{equation}
    \mathbf{s}_{\btheta}(\x_t) = -\boldsymbol{\Sigma}_t^{-1} (\x_t - \mathbf{H}_t\pmptheta).
\end{equation}
The predicted noise is defined as \(\vareps_{\btheta}(\x_t) \triangleq -\boldsymbol{\Sigma}_t^{\frac{1}{2}} \mathbf{s}_{\btheta}(\x_t)\). Substituting the predicted score yields:
\begin{equation}
    \vareps_{\btheta}(\x_t) = \boldsymbol{\Sigma}_t^{-\frac{1}{2}} (\x_t - \mathbf{H}_t\pmptheta).
\end{equation}
To evaluate the DSM expectation \(\expval_{\vareps} \left[ \norm{\vareps - \vareps_{\btheta}(\x_t)}_2^2 \right]\), we express the target forward noise as \(\vareps = \boldsymbol{\Sigma}_t^{-\frac{1}{2}}(\x_t - \mathbf{H}_t\x_0)\). The residual error inside the norm becomes:
\begin{equation}\label{eq:noise_residual}
    \vareps - \vareps_{\btheta}(\x_t) = \boldsymbol{\Sigma}_t^{-\frac{1}{2}} \mathbf{H}_t (\pmptheta - \x_0).
\end{equation}

As \(t \to 0\), the model's posterior mean \(\pmptheta\) acts as the optimal projection of the noisy state back onto the locally flat model-induced manifold \(\manifoldt\). Because the forward process introduces spatial drift \(\mathbf{H}_t\) and covariance \(\boldsymbol{\Sigma}_t\), the projected state minimizes the Mahalanobis distance \(\norm{\x_t - \mathbf{H}_t\x}_{\boldsymbol{\Sigma}_t^{-1}}^2\) for \(\x \in \manifoldt\). 

Let \(\mathcal{T}_{\x_0}\manifoldt\) be the tangent space of the model's manifold at \(\x_0\). The projection onto this mapped tangent space \(\mathbf{H}_t\mathcal{T}_{\x_0}\manifoldt\) under the Mahalanobis inner product \(\langle \mathbf{u}, \mathbf{v} \rangle_{\boldsymbol{\Sigma}_t^{-1}} = \mathbf{u}^\top \boldsymbol{\Sigma}_t^{-1} \mathbf{v}\) is governed by an oblique projection matrix \(\mathbf{P}_{\mathcal{T}}^{\boldsymbol{\Sigma}_t, \mathbf{H}_t}\). Therefore, the deviation of the model's posterior mean mapped through the drift is exactly the oblique projection of the forward perturbation:
\begin{equation}
    \mathbf{H}_t(\pmptheta - \x_0) \approx \mathbf{P}_{\mathcal{T}}^{\boldsymbol{\Sigma}_t, \mathbf{H}_t} (\x_t - \mathbf{H}_t\x_0) = \mathbf{P}_{\mathcal{T}}^{\boldsymbol{\Sigma}_t, \mathbf{H}_t} \boldsymbol{\Sigma}_t^{\frac{1}{2}} \vareps.
\end{equation}

Substituting this geometric relation back into \cref{eq:noise_residual}, the DSM expectation becomes:
\begin{equation}
    \expval_{\vareps}[\mathcal{L}_{\text{DSM}}(\x_0, t, \btheta)] = \expval_{\vareps} \left[ \norm{\boldsymbol{\Sigma}_t^{-\frac{1}{2}} \mathbf{P}_{\mathcal{T}}^{\boldsymbol{\Sigma}_t, \mathbf{H}_t} \boldsymbol{\Sigma}_t^{\frac{1}{2}} \vareps}_2^2 \right].
\end{equation}

Define the transformed operator \(\mathbf{M}_{\mathcal{T}} \triangleq \boldsymbol{\Sigma}_t^{-\frac{1}{2}} \mathbf{P}_{\mathcal{T}}^{\boldsymbol{\Sigma}_t, \mathbf{H}_t} \boldsymbol{\Sigma}_t^{\frac{1}{2}}\). This operator represents the tangent space projection mapped into the whitened isotropic noise space. We prove \(\mathbf{M}_{\mathcal{T}}\) is a standard Euclidean orthogonal projector by checking symmetry. By definition, \(\mathbf{P}_{\mathcal{T}}^{\boldsymbol{\Sigma}_t, \mathbf{H}_t}\) is self-adjoint with respect to the Mahalanobis inner product, meaning \((\mathbf{P}_{\mathcal{T}}^{\boldsymbol{\Sigma}_t, \mathbf{H}_t})^\top \boldsymbol{\Sigma}_t^{-1} = \boldsymbol{\Sigma}_t^{-1} \mathbf{P}_{\mathcal{T}}^{\boldsymbol{\Sigma}_t, \mathbf{H}_t}\). Thus:
\begin{equation}
    \mathbf{M}_{\mathcal{T}}^\top = \boldsymbol{\Sigma}_t^{\frac{1}{2}} (\mathbf{P}_{\mathcal{T}}^{\boldsymbol{\Sigma}_t, \mathbf{H}_t})^\top \boldsymbol{\Sigma}_t^{-\frac{1}{2}} = \boldsymbol{\Sigma}_t^{\frac{1}{2}} (\boldsymbol{\Sigma}_t^{-1} \mathbf{P}_{\mathcal{T}}^{\boldsymbol{\Sigma}_t, \mathbf{H}_t} \boldsymbol{\Sigma}_t) \boldsymbol{\Sigma}_t^{-\frac{1}{2}} = \boldsymbol{\Sigma}_t^{-\frac{1}{2}} \mathbf{P}_{\mathcal{T}}^{\boldsymbol{\Sigma}_t, \mathbf{H}_t} \boldsymbol{\Sigma}_t^{\frac{1}{2}} = \mathbf{M}_{\mathcal{T}}.
\end{equation}

Because \(\mathbf{M}_{\mathcal{T}}\) is symmetric (\(\mathbf{M}_{\mathcal{T}}^\top = \mathbf{M}_{\mathcal{T}}\)) and idempotent (\(\mathbf{M}_{\mathcal{T}}^2 = \mathbf{M}_{\mathcal{T}}\)), it is a valid Euclidean orthogonal projector. Its rank is strictly preserved under similarity transformations and the non-singular linear mapping \(\mathbf{H}_t\), meaning \(\text{rank}(\mathbf{M}_{\mathcal{T}}) = \text{rank}(\mathbf{P}_{\mathcal{T}}^{\boldsymbol{\Sigma}_t, \mathbf{H}_t}) = \lidt(\x_0)\).

Applying the standard identity for the expected squared norm of an orthogonally projected standard Gaussian, we evaluate the trace:
\begin{equation}
    \expval_{\vareps}[\mathcal{L}_{\text{DSM}}(\x_0, t, \btheta)] = \expval_{\vareps} \left[ \vareps^\top \mathbf{M}_{\mathcal{T}}^\top \mathbf{M}_{\mathcal{T}} \vareps \right] = \Tr(\mathbf{M}_{\mathcal{T}}) = \lidt(\x_0).
\end{equation}

Finally, by Stein's Lemma, the generalized score matching objectives satisfy the exact algebraic identity \(\expval_{\vareps}[\mathcal{L}_{\text{DSM}}(\x_0, t, \btheta)] = n + 2\expval_{\vareps}[\mathcal{L}_{\text{ISM}}(\x_0, t, \btheta)]\). Rearranging this relationship and substituting the evaluated DSM expectation directly yields:
\begin{equation}
    \expval_{\vareps}[\mathcal{L}_{\text{ISM}}(\x_0, t, \btheta)] = -\frac{1}{2}(n - \lidt(\x_0)),
\end{equation}
proving geometric and algebraic consistency for arbitrary drift and covariance schedules exclusively within the model-induced geometry.
\end{proof}
\subsection{\Cref{th:lmi_ub_tau}}
We include a more general formulation of \cref{th:lmi_ub_tau} that is not limited to isotropic noise as \(t\rightarrow0\). This allows us to provide theoretical guarantees also for frameworks like SDB.
\begin{theorem}\label{th:lmi_ub_tau}
Let \(\x_0=\generator(\x_1)\). Assume that \(\generator(\x_1)\) is decomposed as \(\generator = \generator^{\leq\tau} \circ \generator^{>\tau}\) for sufficiently small time \(\tau > 0\), where \(\generator^{>\tau}: \x_1 \mapsto \x_\tau\) and \(\generator^{\leq\tau}: \x_\tau \mapsto \x_0\). Let the singular values of the macroscopic flow Jacobian \(\nabla_{\x_1}\generator^{>\tau}(\x_1)\) be monotonically ordered as \(\sigma_1^{>\tau} \geq \sigma_2^{>\tau} \geq \dots \geq \sigma_n^{>\tau} \geq 0\). Let \(K = \norm{\nabla_{\x_\tau}\generator^{\leq\tau}(\x_\tau)}_2^2 \geq 1\) represent the squared spectral norm of the terminal oblique projection. For sufficiently small \(\beta>0\), it is true that
\begin{equation}
    \lmi(\x_1) \lessapprox K \beta^2 \sum_{i=1}^{\left\lfloor\lidt(\hat{\x}_0^{\boldsymbol{\theta}}(\x_\tau))\right\rfloor} (\sigma_i^{>\tau})^2.
\end{equation}
\end{theorem}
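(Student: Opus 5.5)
The plan is to follow the template of the proof of \cref{prop:lmi_ub}, but apply the linearization to the composition $\generator = \generator^{\leq\tau}\circ\generator^{>\tau}$ and use the chain rule. Write $\mathbf{A} = \nabla_{\x_1}\generator^{>\tau}(\x_1)$ and $\mathbf{B} = \nabla_{\x_\tau}\generator^{\leq\tau}(\x_\tau)$, so that $\mathbf{J} = \mathbf{B}\mathbf{A}$. Exactly as in \cref{eq:lmi_frob_approx}, a first-order expansion in $\vareps\sim\mathcal{N}(\mathbf{0},\beta^2\mathbf{I})$ gives
\[
\lmi(\x_1) \approx \beta^2\norm{\mathbf{B}\mathbf{A}}_F^2 = \beta^2\Tr(\mathbf{A}^\top\mathbf{B}^\top\mathbf{B}\mathbf{A}),
\]
with an $\mathcal{O}(\beta^3)$ remainder that is dropped for small $\beta$.

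The key step is to model $\mathbf{B}$ as a near-projector of low effective rank. For $\tau$ small, the terminal map $\generator^{\leq\tau}$ is essentially the denoiser. Its output agrees to first order with the posterior mean $\hat{\x}_0^{\btheta}(\x_\tau)$. Its Jacobian is approximately the (possibly oblique, because of $\mathbf{H}_\tau$ and $\boldsymbol{\Sigma}_\tau$) projection $\mathbf{P}_{\mathcal{T}}^{\boldsymbol{\Sigma}_\tau,\mathbf{H}_\tau}$ onto the tangent space of $\manifoldt$ at $\hat{\x}_0^{\btheta}(\x_\tau)$, possibly composed with $\mathbf{H}_\tau^{-1}$. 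This is the same object used in the proof of \cref{prop:lidt_estimators}, whose rank was shown there to be $d \triangleq \lfloor\lidt(\hat{\x}_0^{\btheta}(\x_\tau))\rfloor$.

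From this I get $\mathbf{B}^\top\mathbf{B} \preceq K\,\mathbf{Q}$, where $\mathbf{Q}$ is the orthogonal projector onto the $d$-dimensional row space of $\mathbf{B}$ and $K=\norm{\mathbf{B}}_2^2$. The assumption $K\geq 1$ simply reflects that an oblique projector has spectral norm at least one. For an orthogonal projector, $K=1$.

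Then
\[
\lmi(\x_1) \lessapprox K\beta^2\Tr(\mathbf{Q}\mathbf{A}\mathbf{A}^\top\mathbf{Q}) = K\beta^2\norm{\mathbf{Q}\mathbf{A}}_F^2.
\]
To conclude, I bound $\norm{\mathbf{Q}\mathbf{A}}_F^2$ by the sum of the top $d$ squared singular values of $\mathbf{A}$. This follows from Ky Fan's maximum principle: $\Tr(\mathbf{Q}\mathbf{M})\leq\sum_{i=1}^d\lambda_i(\mathbf{M})$ for any rank-$d$ orthogonal projector $\mathbf{Q}$ and PSD $\mathbf{M}$, applied with $\mathbf{M}=\mathbf{A}\mathbf{A}^\top$, whose eigenvalues are $(\sigma_i^{>\tau})^2$. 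This yields exactly the stated bound.

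The main obstacle is the second step. Justifying that $\mathbf{B}$ has effective rank $\lfloor\lidt(\hat{\x}_0^{\btheta}(\x_\tau))\rfloor$ requires identifying the residual flow on $[0,\tau]$ with the Tweedie projection, and treating the non-tangent singular values of $\mathbf{B}$ as negligible. This is the same heuristic truncation used in \cref{prop:lmi_ub}, and it is why the result is stated with $\lessapprox$. I would argue it via the small-$\tau$ expansion $\generator^{\leq\tau}(\x_\tau)=\hat{\x}_0^{\btheta}(\x_\tau)+o(1)$ together with the projector characterization from \cref{prop:lidt_estimators}. I would absorb the oblique distortion entirely into $K$, which keeps the rank argument clean.
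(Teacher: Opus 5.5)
Your linear algebra is correct and follows the paper's skeleton. Both arguments use the chain rule $\mathbf{J}=\mathbf{B}\mathbf{A}$, the approximation $\lmi(\x_1)\approx\beta^2\norm{\mathbf{B}\mathbf{A}}_F^2$, and a terminal Jacobian idealized as having exactly $d$ nonzero singular values, each at most $\sqrt{K}$. The paper then applies von Neumann's trace inequality, $\norm{\mathbf{B}\mathbf{A}}_F^2\leq\sum_i(\sigma_i^{\leq\tau})^2(\sigma_i^{>\tau})^2$ with both spectra sorted decreasingly. Only after that does it truncate at $d$ and replace each $(\sigma_i^{\leq\tau})^2$ by $K$. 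You instead use $\mathbf{B}^\top\mathbf{B}\preceq K\mathbf{Q}$ followed by Ky Fan. The two are closely related majorization results and end at the same bound, though the intermediate estimates differ. The paper keeps the individual $\sigma_i^{\leq\tau}$ but assumes worst-case alignment of singular vectors. You keep the actual alignment through $\norm{\mathbf{Q}\mathbf{A}}_F^2$ but flatten the terminal spectrum to $K$. Your version does make it transparent that obliqueness enters only through $K$. Ky Fan holds for any rank-$d$ orthogonal projector, so it does not matter that your $\mathbf{Q}$ (the projector onto the row space of $\mathbf{B}$) is not the tangent-space projector in the oblique case.

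The substantive difference is where the dimension is evaluated. The paper gives $\mathbf{B}$ rank $\lfloor\lidt(\x_0)\rfloor$, read off at the landing point $\x_0\in\manifoldt$, and proves the bound with $\lidt(\x_0)$ first. It then transfers it: $\mathrm{dist}(\hat{\x}_0^{\btheta}(\x_\tau),\manifoldt)\to 0$ as $\tau\to 0$, and $\lidt$ is locally constant on a smooth stratum, so $\lidt(\hat{\x}_0^{\btheta}(\x_\tau))=\lidt(\x_0)$. Your shortcut of reading the rank off a tangent space at $\hat{\x}_0^{\btheta}(\x_\tau)$ is shakier. As the paper notes, that point generally lies off $\manifoldt$ (inside its convex hull when the manifold is curved), so there is no tangent space there. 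The rank statement you borrow from \cref{prop:lidt_estimators} concerns a point on the manifold. Moreover, a value-level expansion $\generator^{\leq\tau}(\x_\tau)=\hat{\x}_0^{\btheta}(\x_\tau)+o(1)$ says nothing by itself about the Jacobian $\mathbf{B}$. Split the step as the paper does. Your expansion then supplies exactly the closeness of points, $\x_0=\hat{\x}_0^{\btheta}(\x_\tau)+o(1)$, that the local-constancy transfer needs.
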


\begin{proof}
Applying the multivariate chain rule to the decomposed generative flow \(\generator = \generator^{\leq\tau} \circ \generator^{>\tau}\), the total generator Jacobian \(\mathbf{J} = \nabla_{\x_1}\generator(\x_1)\) is the matrix product of the stage-wise Jacobians:
\begin{equation}
    \mathbf{J} = \nabla_{\x_\tau}\generator^{\leq\tau}(\x_\tau) \nabla_{\x_1}\generator^{>\tau}(\x_1) = \mathbf{J}_{\leq\tau} \mathbf{J}_{>\tau}.
\end{equation}

Recall from \cref{prop:lmi_ub} the first-order approximation for \(\lmi\) for sufficiently small \(\beta\), which evaluates to the scaled squared Frobenius norm of the total Jacobian: \(\lmi(\x_1) \approx \beta^2 \norm{\mathbf{J}}_F^2\). Substituting the chain rule expansion into this norm yields:
\begin{equation}
    \norm{\mathbf{J}}_F^2 = \norm{\mathbf{J}_{\leq\tau} \mathbf{J}_{>\tau}}_F^2.
\end{equation}

For sufficiently small \(\tau\), the terminal integration step \(\generator^{\leq\tau}\) behaves as a projection onto the induced manifold \(\manifoldt\). In the general case of anisotropic terminal noise (such as the embedded covariance structure in SDB), this operation acts as an \emph{oblique} projector rather than a standard Euclidean orthogonal projector. 

Consequently, its Jacobian \(\mathbf{J}_{\leq\tau}\) possesses exactly \(\left\lfloor\lidt(\x_0)\right\rfloor\) non-zero singular values. We denote these terminal singular values as \(\sigma_i^{\leq\tau}\) and bound them by the squared spectral norm of the oblique projector: \((\sigma_i^{\leq\tau})^2 \leq \norm{\mathbf{J}_{\leq\tau}}_2^2 \triangleq K\). The remaining \(n - \left\lfloor\lidt(\x_0)\right\rfloor\) singular values are strictly equal to \(0\) (annihilating variance along normal directions).

To decouple the two generative regimes, we apply the singular value product inequality (von Neumann trace inequality) to bound the Frobenius norm of the product. The sum of the squared singular values of the product is bounded by the sum of the products of their individual squared singular values:
\begin{equation}
    \norm{\mathbf{J}_{\leq\tau} \mathbf{J}_{>\tau}}_F^2 \leq \sum_{i=1}^n (\sigma_i^{\leq\tau})^2 (\sigma_i^{>\tau})^2.
\end{equation}

Substituting the bounded spectrum of the terminal oblique projector (\(\sigma_i^{\leq\tau} \leq \sqrt{K}\) for \(i \leq \left\lfloor\lidt(\x_0)\right\rfloor\) and \(0\) otherwise) acts as a strict algebraic cutoff for the summation, scaled by the projection constant \(K\):
\begin{equation}
    \sum_{i=1}^n (\sigma_i^{\leq\tau})^2 (\sigma_i^{>\tau})^2 \leq K \sum_{i=1}^{\left\lfloor\lidt(\x_0)\right\rfloor} (\sigma_i^{>\tau})^2.
\end{equation}

Substituting this strict upper bound back into the \(\lmi\) approximation provides an intermediate theoretical guarantee:
\begin{equation}\label{eq:lmi_intermediate_bound}
    \lmi(\x_1) \lessapprox K \beta^2 \sum_{i=1}^{\left\lfloor\lidt(\x_0)\right\rfloor} (\sigma_i^{>\tau})^2.
\end{equation}

Next, we address the topological equivalence of evaluating the intrinsic dimensionality at the unobserved terminal state \(\x_0\) versus the predicted posterior mean \(\hat{\x}_0^{\btheta}(\x_\tau)\). At any intermediate time \(\tau > 0\), the state \(\x_\tau\) is corrupted by forward process variance. Consequently, \(\x_\tau\) has full support in the \(n\)-dimensional ambient space, meaning \(\x_\tau \notin \manifoldt\). 

To evaluate the topological complexity of the target manifold from \(\x_\tau\), we compute the Tweedie estimate derived from the predicted score function, \(\hat{\x}_0^{\btheta}(\x_\tau)\). By the fundamental theorem of estimation, this conditional expectation acts as the Minimum Mean Square Error (MMSE) estimator. Because \(\manifoldt\) generally possesses curvature, the expectation of points on the manifold strictly resides within the convex hull of \(\manifoldt\), meaning \(\hat{\x}_0^{\btheta}(\x_\tau) \notin \manifoldt\) in the general case.

However, we establish asymptotic equivalence between this estimation and the terminal generative mapping. The terminal state is obtained by integrating the Probability Flow ODE from \(\tau\) to \(0\): \(\x_0 = \generator^{\leq\tau}(\x_\tau)\). As the integration interval \(\tau \to 0\), the distribution concentrates locally, the exact ODE integration converges to the analytic score-based projection, and the Euclidean distance to the manifold vanishes:
\begin{equation}
    \lim_{\tau \to 0} \text{dist}(\hat{\x}_0^{\btheta}(\x_\tau), \manifoldt) = 0.
\end{equation}
This implies that for a sufficiently small \(\tau\), the Tweedie estimate \(\hat{\x}_0^{\btheta}(\x_\tau)\) and the true terminal state \(\x_0\) reside within the exact same infinitesimal local neighborhood.

Under the assumption that \(\manifoldt\) is a smooth manifold, the local intrinsic dimensionality \(\lidt(\x)\) is a locally constant topological invariant. Because \(\hat{\x}_0^{\btheta}(\x_\tau)\) approaches \(\x_0\) sufficiently close, their geometric dimensionalities strictly align:
\begin{equation}
    \lidt(\hat{\x}_0^{\btheta}(\x_\tau)) = \lidt(\x_0).
\end{equation}

Substituting this topological equivalence into the intermediate bound (\cref{eq:lmi_intermediate_bound}) yields the completed operational approximation:
\begin{equation}
    \lmi(\x_1) \lessapprox K \beta^2 \sum_{i=1}^{\left\lfloor\lidt(\hat{\x}_0^{\boldsymbol{\theta}}(\x_\tau))\right\rfloor} (\sigma_i^{>\tau})^2.
\end{equation}
\end{proof}
\subsection{\Cref{th:boltzmann}}
\begin{theorem}\label{th:boltzmann}
Let $p_t^{\btheta}(\mathbf{x}_t)$ be the model-induced marginal density at time $t \leq \tau$. Assume the network parameterizes the exact score of its induced distribution, such that $\mathbf{s}_{\btheta}(\mathbf{x}_t) = \nabla_{\mathbf{x}_t} \log p_t^{\btheta}(\mathbf{x}_t)$. We define the ideal geometrically-guided distribution by re-weighting the marginal over the true terminal states: $p_t^{\btheta, \lambda_t}(\mathbf{x}_t) \propto p_t^{\btheta}(\mathbf{x}_t) \mathbb{E}_{p^{\btheta}(\mathbf{x}_0 | \mathbf{x}_t)}[\exp(-\lambda_t \lidt(\mathbf{x}_0))]$. Assume $\tau$ is sufficiently small such that the posterior variance $\sigma_t^2 \to 0$. Under these conditions, guiding the reverse diffusion with the modified score $\tilde{\mathbf{s}}_{\btheta}(\mathbf{x}_t) = \mathbf{s}_{\btheta}(\mathbf{x}_t) - \lambda_t \nabla_{\mathbf{x}_t} \lidt(\pmptheta)$ is mathematically equivalent to sampling from the ideal distribution $p_t^{\btheta, \lambda_t}(\mathbf{x}_t)$, shifting probability mass toward lower-dimensional target strata.
\end{theorem}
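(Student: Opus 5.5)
The plan is to compute the score of the tilted density directly and show that, in the small-$t$ limit, it reduces to the modified score $\tilde{\mathbf{s}}_{\btheta}$. Taking logarithms of the definition of $p_t^{\btheta,\lambda_t}$, the normalizing constant does not depend on $\x_t$. Hence
\begin{equation*}
\nabla_{\x_t}\log p_t^{\btheta,\lambda_t}(\x_t) = \mathbf{s}_{\btheta}(\x_t) + \nabla_{\x_t}\log \expval_{p^{\btheta}(\x_0\mid\x_t)}\big[\exp(-\lambda_t\lidt(\x_0))\big],
\end{equation*}
where the assumption $\mathbf{s}_{\btheta}=\nabla_{\x_t}\log p_t^{\btheta}$ identifies the first term. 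It therefore suffices to show that the second term equals $-\lambda_t\nabla_{\x_t}\lidt(\pmptheta)$ up to terms vanishing as $\sigma_t^2\to0$. This is the same structure as classifier guidance, with the likelihood $\exp(-\lambda_t\lidt)$ playing the role of $p(y\mid\x_0)$.

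\textbf{Step 1: collapse the posterior expectation.} As $\sigma_t^2\to 0$, the posterior $p^{\btheta}(\x_0\mid\x_t)$ concentrates at its mean $\pmptheta$. I would Taylor-expand $g(\x_0)=\exp(-\lambda_t\lidt(\x_0))$ around the posterior mean, using the continuous relaxation of $\lidt$ given by \cref{prop:lidt_estimators}. The first-order term vanishes because it is centred at the mean. The second-order term is controlled by $\Tr(\cov(\x_0\mid\x_t)\nabla^2 g)=\mathcal{O}(\sigma_t^2)$. This gives
\begin{equation*}
\expval_{p^{\btheta}(\x_0\mid\x_t)}[g(\x_0)] = g(\pmptheta)\,(1+\mathcal{O}(\sigma_t^2)).
\end{equation*}
Alternatively, one can note that $\lidt$ is locally constant on the stratum (the argument used in the proof of \cref{th:lmi_ub_tau}). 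Then, in the limit, every $\x_0$ in the posterior support shares the value $\lidt(\pmptheta)$, and the expectation is exactly $g(\pmptheta)$.

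\textbf{Step 2: differentiate.} Taking the logarithm gives $\log\expval[g] = -\lambda_t\lidt(\pmptheta) + \log(1+\mathcal{O}(\sigma_t^2))$. Differentiating in $\x_t$ then yields the claimed correction $-\lambda_t\nabla_{\x_t}\lidt(\pmptheta)$ plus a remainder. Adding the score term gives $\nabla\log p_t^{\btheta,\lambda_t}=\tilde{\mathbf{s}}_{\btheta}$. Substituting this score into \cref{eq:linear_reverse_sde} then means the reverse process samples the tilted marginals, by the standard reverse-SDE/Fokker--Planck correspondence applied to the family $p_t^{\btheta,\lambda_t}$. Finally, because the weight $\exp(-\lambda_t\lidt)$ decreases in $\lidt$, mass shifts toward lower-dimensional strata.

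\textbf{Main obstacle.} The hard part is controlling the gradient of the remainder rather than its value. The derivative $\nabla_{\x_t}$ of an $\mathcal{O}(\sigma_t^2)$ term may scale like $\sigma_t^2\cdot\nabla_{\x_t}\cov(\x_0\mid\x_t)$. By the second-order Tweedie identity, $\nabla_{\x_t}\pmptheta\propto\cov(\x_0\mid\x_t)\,\boldsymbol{\Sigma}_t^{-1}$, and this ratio need not vanish. I would first handle this with the locally-constant argument: on a flat stratum the tilt factor is exactly constant over the posterior support, so its gradient is exactly $-\lambda_t\nabla\lidt(\pmptheta)$ through the dependence of the mean on $\x_t$. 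Only if that proved too restrictive would I fall back on an explicit bound using Tweedie's second-order formula together with a smoothness assumption on the relaxed $\lidt$. Either way, the equivalence is to be understood as holding in the limit $\tau\to0$, consistent with the "sufficiently small" hypothesis.
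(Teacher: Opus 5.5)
Your skeleton matches the paper's proof. You log-differentiate the tilted density, identify $\mathbf{s}_{\btheta}$, collapse $p^{\btheta}(\x_0\mid\x_t)$ onto $\pmptheta$ as $\sigma_t^2\to0$, and substitute. The paper does nothing more: it passes the limit $\log\expval[\exp(-\lambda_t\lidt(\x_0))\mid\x_t]\to-\lambda_t\lidt(\pmptheta)$ straight through $\nabla_{\x_t}$. So your worry that the gradient of the remainder need not vanish is legitimate and goes beyond the paper, since $\nabla_{\x_t}\pmptheta$ carries the factor $\cov(\x_0\mid\x_t)\boldsymbol{\Sigma}_t^{-1}$, which stays of order one.

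The gap is in how you propose to close it. If $\lidt$ is locally constant on strata, then $\x_t\mapsto\lidt(\pmptheta)$ is locally constant too, so $\nabla_{\x_t}\lidt(\pmptheta)=0$ wherever your argument applies. You would establish $\tilde{\mathbf{s}}_{\btheta}=\mathbf{s}_{\btheta}=\nabla_{\x_t}\log p_t^{\btheta,\lambda_t}$ only where the tilt does nothing. With an integer-valued $\lidt$, the tilt's score correction is non-negligible only where the posterior puts mass on strata of different dimension. That is precisely where the posterior is not concentrated at its mean, local constancy fails, and $\lidt(\pmptheta)$ jumps and has no gradient. The paper flags this explicitly (a locally constant integer ``would yield a trivial zero gradient'') and resolves it by committing to the differentiable DSM surrogate. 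You must do the same. That also rules out your locally-constant alternative in Step 1 and makes your fallback the actual proof.

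The fallback can be carried out without differentiating any remainder. For $p(\x_t\mid\x_0)=\mathcal{N}(\mathbf{H}_t\x_0,\boldsymbol{\Sigma}_t)$, Tweedie gives $\nabla_{\x_t}\log p^{\btheta}(\x_0\mid\x_t)=\boldsymbol{\Sigma}_t^{-1}\mathbf{H}_t(\x_0-\pmptheta)$. Hence, for $g=\exp(-\lambda_t\lidt)$, exactly
\begin{equation*}
\nabla_{\x_t}\log\expval[g(\x_0)\mid\x_t]=\frac{\boldsymbol{\Sigma}_t^{-1}\mathbf{H}_t\cov(\x_0,g(\x_0)\mid\x_t)}{\expval[g(\x_0)\mid\x_t]},\qquad\nabla_{\x_t}\lidt(\pmptheta)=\boldsymbol{\Sigma}_t^{-1}\mathbf{H}_t\cov(\x_0\mid\x_t)\nabla\lidt(\pmptheta).
\end{equation*}
Expanding $g$ to first order inside the cross-covariance reproduces $-\lambda_t\nabla_{\x_t}\lidt(\pmptheta)$, up to two corrections. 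One is a relative $\mathcal{O}(\Tr\cov(\x_0\mid\x_t))$ factor from the denominator. The other is an additive error bounded by $\tfrac12\norm{\boldsymbol{\Sigma}_t^{-1}\mathbf{H}_t}_2\sup\norm{\nabla^2 g}_2\,\expval[\norm{\x_0-\pmptheta}_2^3\mid\x_t]/\expval[g(\x_0)\mid\x_t]$. The statement therefore follows under three conditions:
\begin{itemize}
\item the surrogate $\lidt$ is $C^2$ with bounded Hessian where the posterior concentrates;
\item $\boldsymbol{\Sigma}_t^{-1}\mathbf{H}_t\cov(\x_0\mid\x_t)$ stays bounded;
\item the third-moment term vanishes, e.g.\ posterior spread of order $\sigma_t$ against $\boldsymbol{\Sigma}_t$ of order $\sigma_t^2$.
\end{itemize}

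Finally, your Fokker--Planck step needs $\{p_t^{\btheta,\lambda_t}\}_t$ to be the forward-noised marginals of a single density. That holds for a fixed weight $h$, since $p_t^{\btheta}(\x_t)\expval[h(\x_0)\mid\x_t]=\int p(\x_t\mid\x_0)p_0^{\btheta}(\x_0)h(\x_0)\,\diff\x_0$. It fails for a genuinely time-varying $\lambda_t$ or a $t$-dependent surrogate. So state the equivalence for constant $\lambda$, or as a per-time identity of scores; the paper glosses over this as well.
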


\begin{proof}
We define the ideal guided target density by re-weighting the model-induced marginal $p_t^{\btheta}(\mathbf{x}_t)$ with the expected terminal energy:
\begin{equation}
    p_t^{\btheta, \lambda_t}(\mathbf{x}_t) = \frac{1}{Z_t^{\btheta}} p_t^{\btheta}(\mathbf{x}_t) \mathbb{E}_{p^{\btheta}(\mathbf{x}_0 | \mathbf{x}_t)}\left[\exp(-\lambda_t \lidt(\mathbf{x}_0))\right],
\end{equation}
where $Z_t^{\btheta}$ is the partition function. 

Taking the logarithm and differentiating with respect to $\mathbf{x}_t$ yields the ideal modified score:
\begin{equation}
    \nabla_{\mathbf{x}_t} \log p_t^{\btheta, \lambda_t}(\mathbf{x}_t) = \nabla_{\mathbf{x}_t} \log p_t^{\btheta}(\mathbf{x}_t) + \nabla_{\mathbf{x}_t} \log \mathbb{E}_{p^{\btheta}(\mathbf{x}_0 | \mathbf{x}_t)}\left[\exp(-\lambda_t \lidt(\mathbf{x}_0))\right].
\end{equation}

To render this tractable, we analyze the domain $t \leq \tau$. As the noise scale diminishes ($\sigma_t^2 \to 0$), the model's posterior distribution $p^{\btheta}(\mathbf{x}_0 | \mathbf{x}_t)$ concentrates into a Dirac delta distribution centered at the Tweedie estimate $\pmptheta$. Consequently, the log-expectation of the energy converges exactly to the energy of the expectation:
\begin{equation}
    \lim_{\sigma_t^2 \to 0} \log \mathbb{E}_{p^{\btheta}(\mathbf{x}_0 | \mathbf{x}_t)} \left[ \exp(-\lambda_t \lidt(\mathbf{x}_0)) \right] = -\lambda_t \lidt(\pmptheta).
\end{equation}

While the theoretical intrinsic dimension of a smooth manifold is a locally constant integer (which would yield a trivial zero gradient), our operational definition of \(\lidt\) relies on the generalized score-matching estimator \(\mathcal{L}_{\text{DSM}}(\x, t, \btheta)\). Because this estimator evaluates the model's spatial error, it is a continuous, non-linear, and differentiable function of its inputs, parameterized by the neural network \(\btheta\). Therefore, applying the spatial gradient operator yields a well-defined, non-zero vector field:
\begin{equation}
    \nabla_{\mathbf{x}_t} \mathcal{E}(\mathbf{x}_t) = \nabla_{\mathbf{x}_t} \lidt(\pmptheta) \approx \nabla_{\mathbf{x}_t} \mathcal{L}_{\text{DSM}}(\pmptheta, t, \btheta).
\end{equation}

Substituting this limit and the assumed model score function (\(\mathbf{s}_{\btheta}(\mathbf{x}_t) = \nabla_{\mathbf{x}_t} \log p_t^{\btheta}(\mathbf{x}_t)\)) into the target gradient yields the operational guidance rule:
\begin{equation}
    \tilde{\mathbf{s}}_{\btheta}(\mathbf{x}_t) = \mathbf{s}_{\btheta}(\mathbf{x}_t) - \lambda_t \nabla_{\mathbf{x}_t} \lidt(\pmptheta).
\end{equation}

By substituting \(\tilde{\mathbf{s}}_{\btheta}(\mathbf{x}_t)\) into the reverse probability flow ODE, the generated trajectory simulates sampling from the ideal terminal-weighted distribution \(p_t^{\btheta, \lambda_t}(\mathbf{x}_t)\). Because this guided distribution is multiplicatively bounded by the original prior \(p_t^{\btheta}(\mathbf{x}_t)\), the integration strictly preserves the topological validity dictated by the model-induced distribution while explicitly minimizing the local intrinsic dimension of the expected final sample via the continuous gradient signal.
\end{proof}

\section{Extended experiments}\label{sec:extended_experiments}

\subsection{Experimental setup}\label{sec:experimental_setup}

As the domain of hallucination reduction currently lacks a single unified evaluation scheme, we reimplement all of the methods within our codebase. For DG, we train 6 noisy classifiers from scratch, one for each dataset, using ResNets of different sizes adapted to the complexity of the data for image datasets and a 3-layer MLP for \texttt{GaussianGrid}. For AAM, we adapt the temperature gradient descent scheme for all DMs and train a separate PatchCore anomaly detection model \citep{roth2022towards} for each dataset. To find optimal hyperparameters, we perform a large grid search for AAM, DG and RODS on all datasets, except \texttt{11kHands}, \texttt{FFHQ} and \texttt{AFHQV2} for RODS, where we use the originally proposed ones \citep{tian2025rods}. We report the hyperparameters of all methods in \cref{sec:hyperparameters}.

\subsection{User study}\label{sec:user_study}

For each of the five image datasets, the user study consisted of two subsequent phases: the \emph{calibration} phase and the \emph{labeling} phase. The former involved presenting 128 images from the original dataset in batches, aiming to calibrate the user's perception of structures and correlations that appear in the true data distribution. Although this phase might initially seem redundant, \cref{fig:user_study} presents several examples of hand images from the \texttt{11kHands} dataset that contain atypical hand and finger placements, highlighting the diversity and complex relationships that should not be labeled as hallucinations. Hence, this phase ensures that annotators are aware of what a structural hallucination is in each considered case.

The labeling phase involved presenting the annotator with a grid of images, each generated with a different sampling method from the same starting point \(\x_1\) of the diffusion process. The placement of all methods was fixed across grids, and the study was blinded; \ie, the annotators were not given the names of the methods, which were replaced by \texttt{Image n} captions, with \texttt{n} being the method index (see \cref{fig:user_study}).

The user study relied on two independent annotators, both of whom were machine learning researchers. The task was first briefly described to them to ensure that the term \emph{structural hallucinations} was understood correctly. At each iteration of the labeling phase, the annotators first had to indicate the indices of images that contained a structural hallucination (which could also be none) and then (for natural image datasets) the indices of images that were of the highest quality according to their own perception. 

Because the labeling process was highly time-consuming, the study utilized a different number of images for each dataset, chosen a priori based on our visual assessment and the subjective advantage of \(\iq\). This approach ensured that statistically significant results would be obtained (with a high probability) using a minimal number of images to reduce the burden on the annotators, which could have otherwise negatively impacted the study. The numbers of starting points (\ie, the number of grids presented to each annotator) were as follows: 128 for \texttt{11kHands}, 1024 for \texttt{AFHQV2} and \texttt{FFHQ} each, 512 for \texttt{MNIST} and \texttt{SimpleShapes} each. Each annotator spent approximately 8 hours in total (across two sessions) labeling all of the provided samples.

At last, we note that the initial empirical investigation (\cref{sec:empirical_investigation}) uses the labels provided by one of the annotators on the \texttt{11kHands} dataset.

\begin{figure}
    \centering
    \includegraphics[width=0.98\linewidth]{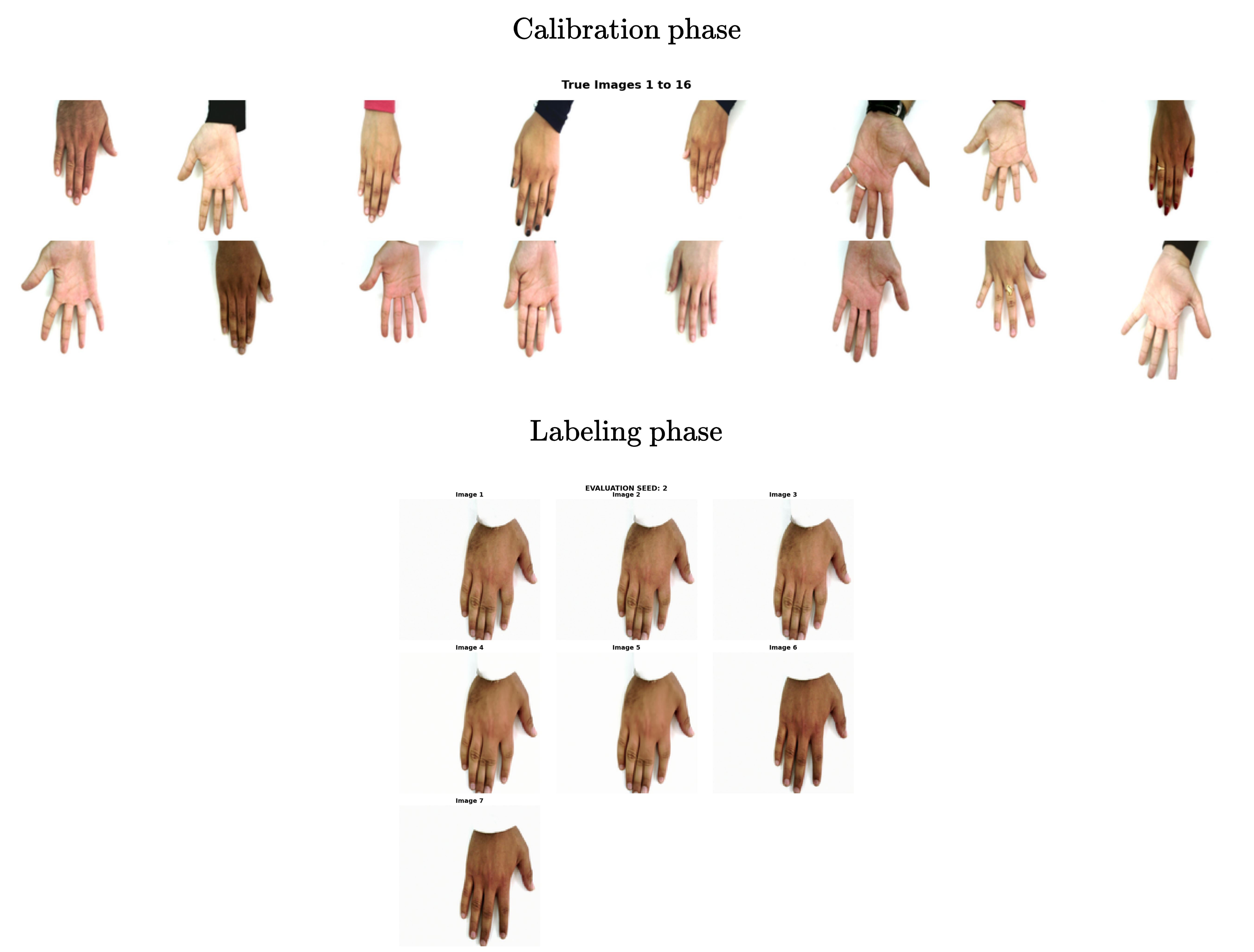}
    \caption{
    Examples of the user study interface from the \texttt{11kHands} dataset, depicting calibration and labeling phases.}
    \label{fig:user_study}
\end{figure}

\subsection{Filter performance}\label{sec:filter_performance}

We compare three filters (TVF, LMI, and \(\lid\)) in terms of separability and classification performance of correct and hallucinated samples in \cref{tab:extended_filter_performance}. For hyperparameters of each filter, see \cref{tab:filter_hyperparameters}. \(\lid\) consistently outperforms both TVF and LMI in almost all cases. LMI outperforms TVF in all cases, except for PR AUC and Cohen's $d$ on \texttt{SimpleShapes}, and outperforms \(\lid\) in some cases on \texttt{AFHQV2} and \texttt{MNIST}.

\begin{table}[h]
  \caption{Extended comparison of three filters (TVF, LMI, and \(\lid\)) in terms of separability and classification performance on natural and synthetic datasets.}
  \label{tab:extended_filter_performance}
  \centering
  
  \setlength{\tabcolsep}{4pt}
  
  % \newsavebox{\tophalftable}
  \sbox{\tophalftable}{%
    \begin{tabular}{l ccc ccc}
      \toprule
      & \multicolumn{3}{c}{\texttt{AFHQV2}} & \multicolumn{3}{c}{\texttt{FFHQ}} \\
      \cmidrule(lr){2-4} \cmidrule(lr){5-7}
      Method & PR AUC $\uparrow$ & ROC AUC $\uparrow$ & Cohen's $d$ $\uparrow$ & PR AUC $\uparrow$ & ROC AUC $\uparrow$ & Cohen's $d$ $\uparrow$ \\
      \midrule
      TVF & 0.12 & 0.55 & 0.18 & 0.16 & 0.53 & 0.17 \\
      LMI & 0.16 & 0.66 & 0.31 & 0.17 & 0.60 & 0.17 \\
      LID & 0.22 & 0.57 & 0.41 & 0.20 & 0.60 & 0.43 \\
      \bottomrule
    \end{tabular}%
  }
  
  \resizebox{\textwidth}{!}{%
    \begin{tabular}{@{}c@{}}
  
    % --- TOP HALF ---
    \usebox{\tophalftable} \\
    
    % --- SPACING BETWEEN HALVES ---
    \\[1ex]
    
    % --- BOTTOM HALF (Expanded to top half's width) ---
    \begin{tabular*}{\wd\tophalftable}{l @{\extracolsep{\fill}} ccc ccc}
      \toprule
      & \multicolumn{3}{c}{\texttt{MNIST}} & \multicolumn{3}{c}{\texttt{SimpleShapes}} \\
      \cmidrule(lr){2-4} \cmidrule(lr){5-7}
      Method & PR AUC $\uparrow$ & ROC AUC $\uparrow$ & Cohen's $d$ $\uparrow$ & PR AUC $\uparrow$ & ROC AUC $\uparrow$ & Cohen's $d$ $\uparrow$ \\
      \midrule
      TVF & 0.58 & 0.60 & 0.40 & 0.67 & 0.58 & 0.34 \\
      LMI & 0.62 & 0.67 & 0.57 & 0.60 & 0.59 & 0.23 \\
      LID & 0.63 & 0.66 & 0.42 & 0.67 & 0.61 & 0.34 \\
      \bottomrule
    \end{tabular*}
    
  \end{tabular}%
  }
\end{table}

\begin{table}[h]
  \caption{Summary of hyperparameters for the three evaluated filters (\(\lid\), LMI, and TVF) across natural and synthetic datasets. Time variables ($t, t_1, t_2$) are normalized to the $[0,1]$ interval as decimal fractions.}
  \label{tab:filter_hyperparameters}
  \centering
  \begin{tabular}{l c c c c}
    \toprule
    \multirow{2}{*}{Dataset} & \(\lid\) & LMI & \multicolumn{2}{c}{TVF} \\
    \cmidrule(lr){2-2} \cmidrule(lr){3-3} \cmidrule(lr){4-5}
    & $t$ & $\beta$ & $t_1$ & $t_2$ \\
    \midrule
    \texttt{11kHands} & 0.05 & 2.0 & 0.05 & 0.125 \\
    \texttt{FFHQ} & 0.00625 & 2.0 & 0.05 & 0.1 \\
    \texttt{AFHQV2} & 0.0125 & 2.0 & 0.9 & 0.95 \\
    \texttt{MNIST} & 0.001 & 0.1 & 0.155 & 0.158 \\
    \texttt{SimpleShapes} & 0.37 & 5.0 & 0.063 & 0.222 \\
    \bottomrule
  \end{tabular}
\end{table}

\subsection{\(\lid\) performance curves}

We visualize how the classification and separability performance of the \(\lid\) filter changes across time on all image datasets in \cref{fig:lid_performance_curves}. Crucially, the optimal timestep is very small in all cases, highlighting the importance of the theoretical assumptions.

\begin{figure}
    \centering
    \includegraphics[width=0.48\linewidth]{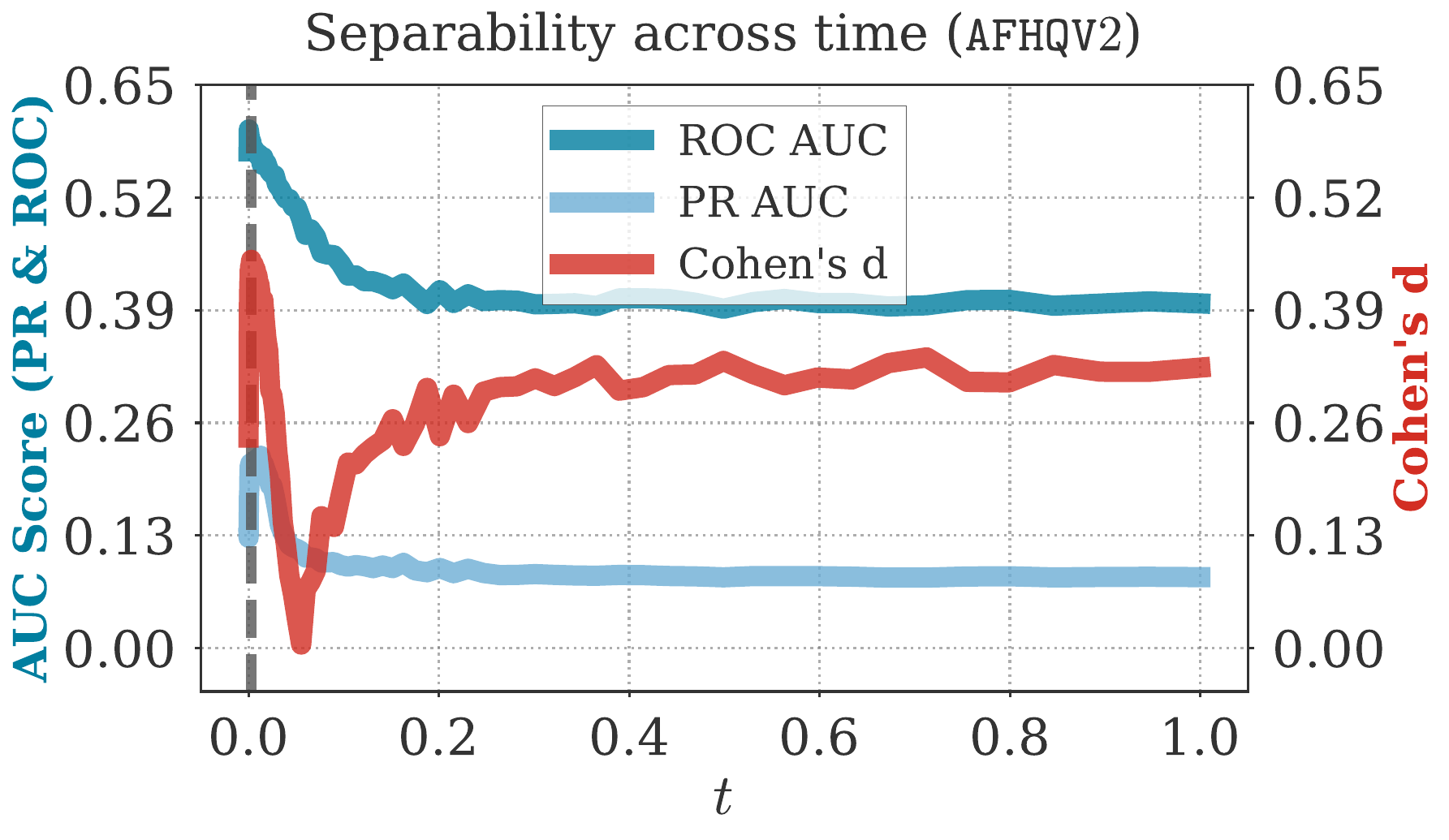}
    \includegraphics[width=0.48\linewidth]{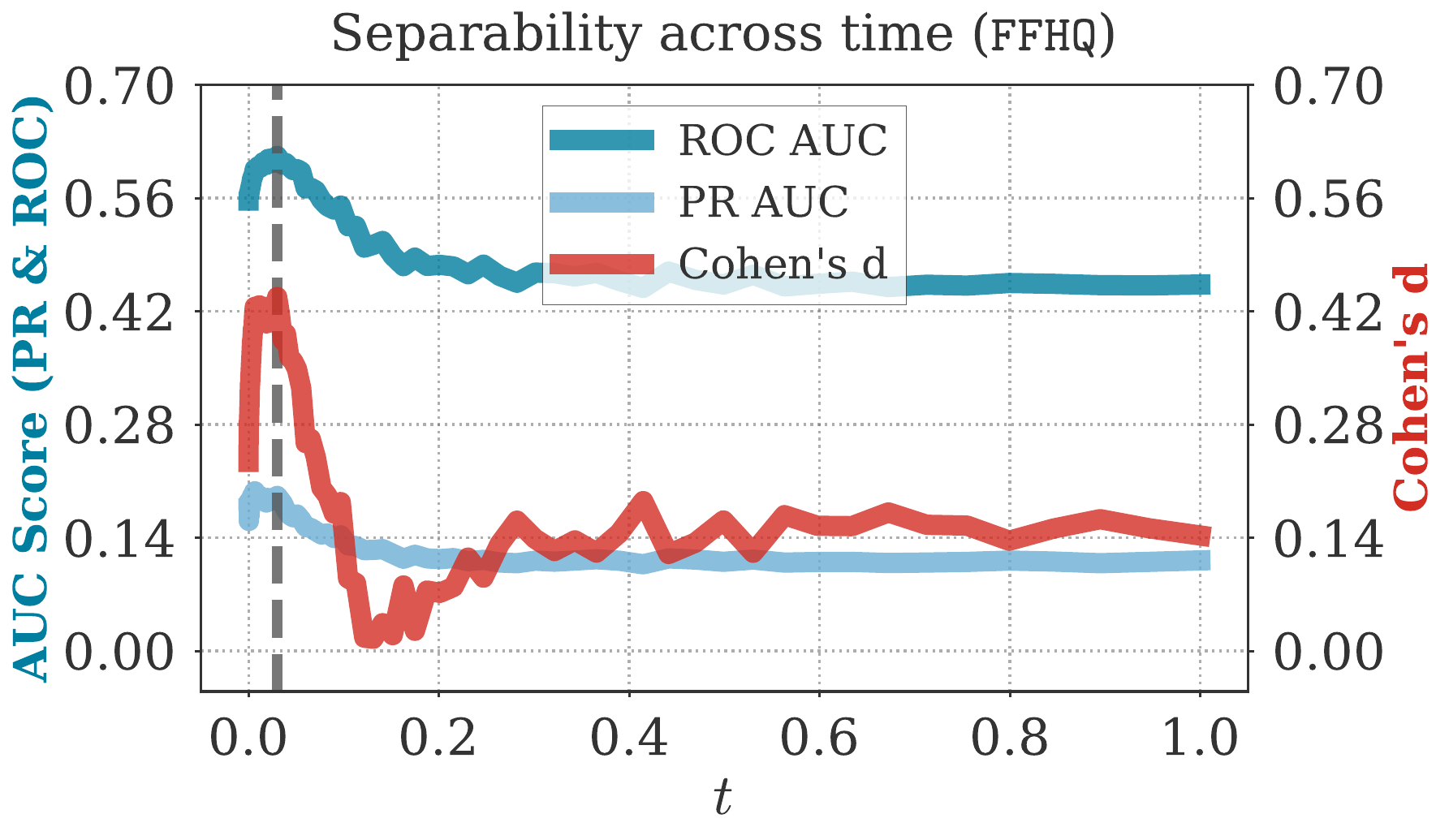}
    \includegraphics[width=0.48\linewidth]{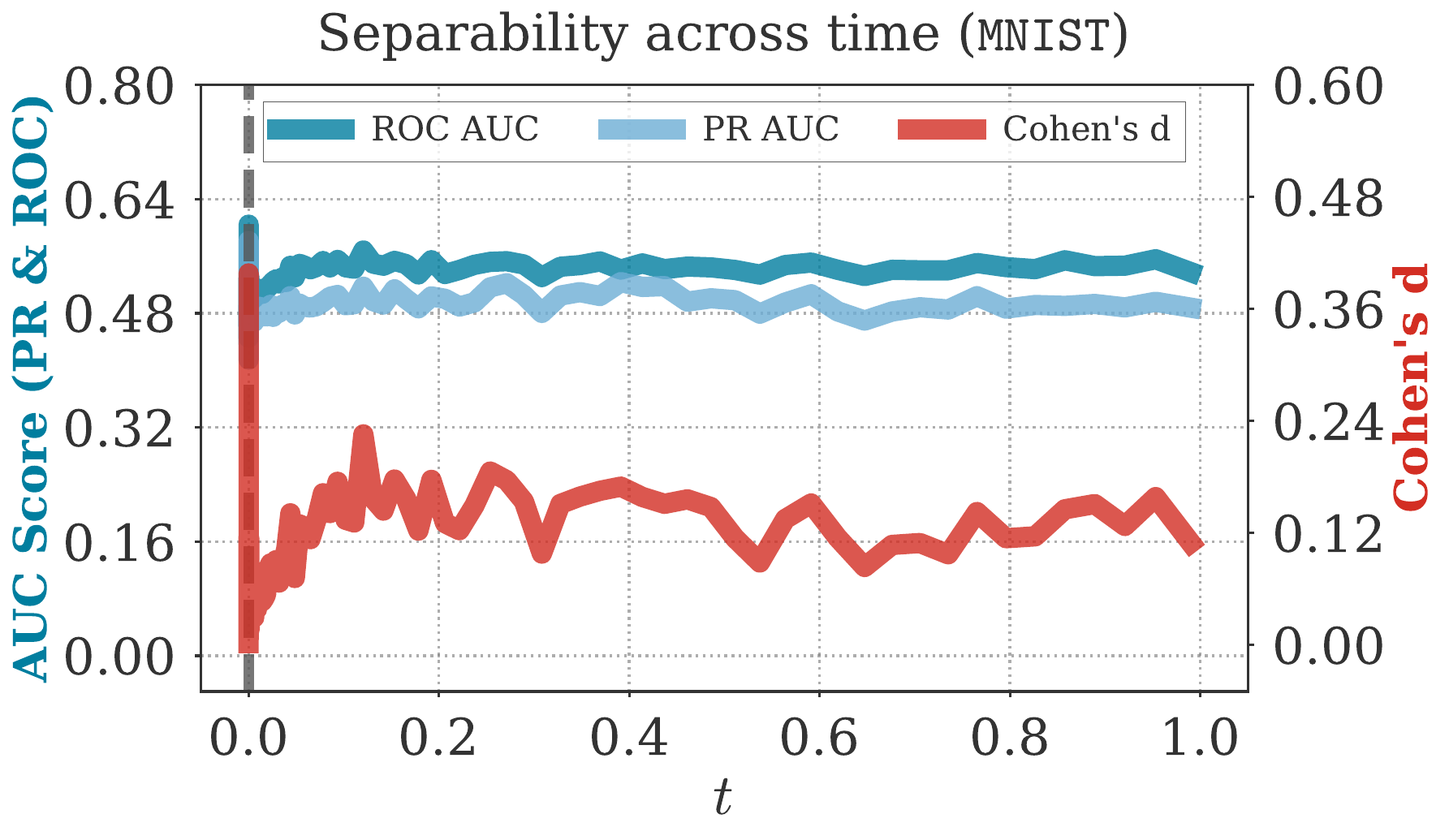}
    \includegraphics[width=0.48\linewidth]{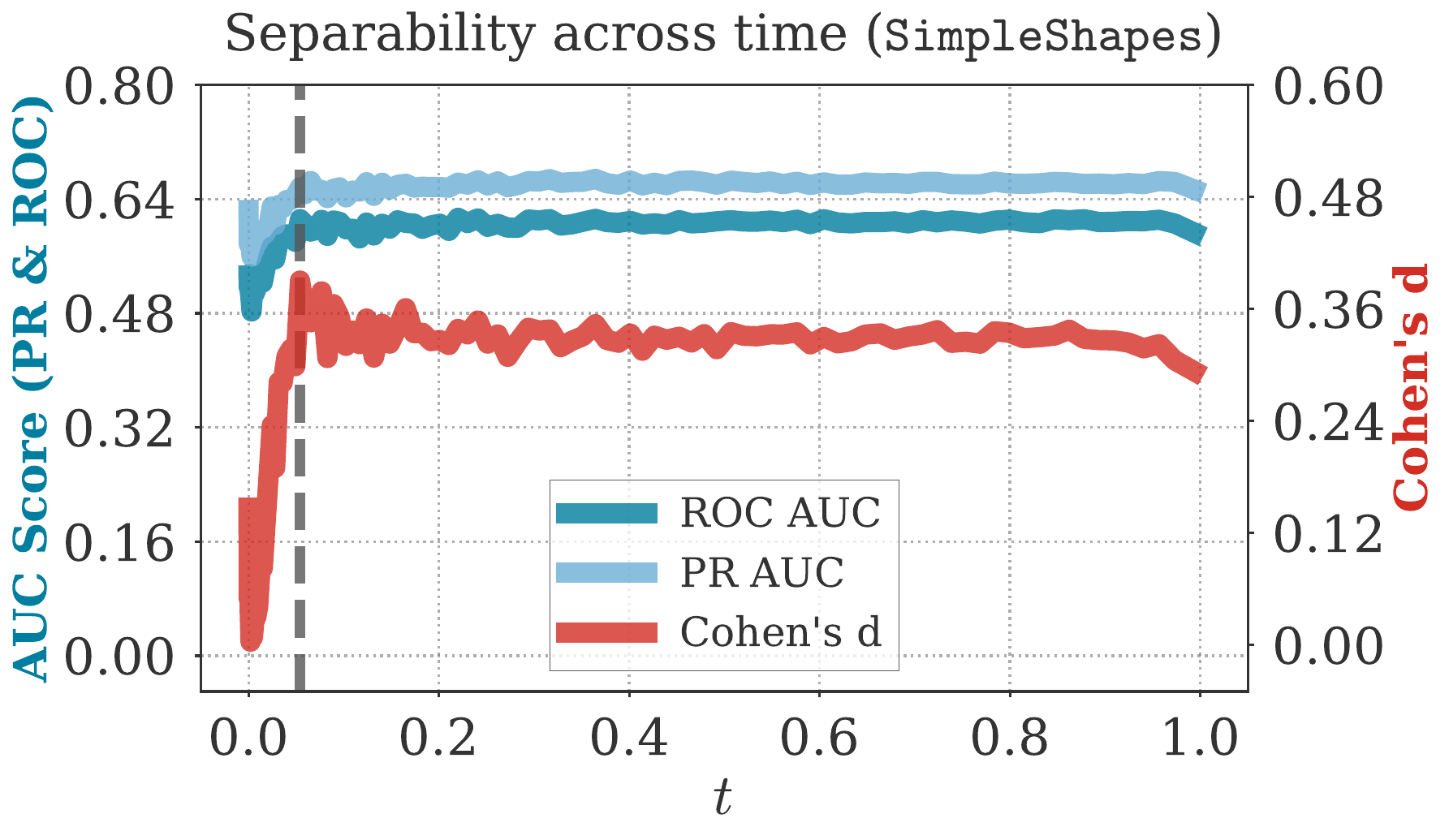}
    \caption{Performance curves of the \(\lid\) filter across diffusion timesteps. The optimal timestep for classification and separability metrics is indicated by a vertical line in each case.}
    \label{fig:lid_performance_curves}
\end{figure}

\subsection{Extended results}

We provide the average HR and UP in all considered cases, together with 95\% CIs, in \cref{tab:appendix_ci_metrics}. These results are based on the annotations of two independent experts (as mentioned in \cref{sec:user_study}) using varying sample sizes for each dataset (for reasons previously described): 128 for \texttt{11kHands}, 1024 for \texttt{AFHQV2} and \texttt{FFHQ} each, and 512 for \texttt{MNIST} and \texttt{SimpleShapes} each. The results for \texttt{GaussianGrid} are based on 16384 samples for each method.

We also note that we generate the \texttt{SimpleShapes} dataset procedurally by using the implementation of \citet{aithal2024understanding}, which divides a black \(64\times64\) plane into three vertical bands of equal width and then independently inserts a triangle (leftmost band), a square (middle band) and a rhomb (rightmost band), each with 50\% probability, into a random position within its band. Due to this procedural approach, we define hallucinations as improper placement of an object (\eg, a triangle in the middle band) or the generation of more than 3 objects.

\begin{table}[h]
  \caption{Detailed 95\% confidence intervals for User Preference (UP) and Hallucination Ratio (HR) evaluated on high-resolution and synthetic datasets. Values are formatted as Mean {\scriptsize [Lower Bound, Upper Bound]}.}
  \label{tab:appendix_ci_metrics}
  \centering
  
  \setlength{\tabcolsep}{4pt}
  
  % \newsavebox{\tophalftable}
  \sbox{\tophalftable}{%
    \begin{tabular}{l cccccc}
      \toprule
      & \multicolumn{2}{c}{\texttt{11kHands}} & \multicolumn{2}{c}{\texttt{FFHQ}} & \multicolumn{2}{c}{\texttt{AFHQV2}} \\
      \cmidrule(lr){2-3} \cmidrule(lr){4-5} \cmidrule(lr){6-7}
      Method & UP (\%) $\uparrow$ & HR (\%) $\downarrow$ & UP (\%) $\uparrow$ & HR (\%) $\downarrow$ & UP (\%) $\uparrow$ & HR (\%) $\downarrow$ \\
      \midrule
      Baseline & 39.8 {\scriptsize [34.8, 44.9]} & 29.3 {\scriptsize [24.1, 35.1]} & 45.3 {\scriptsize [42.8, 47.9]} & 8.2 {\scriptsize [7.0, 9.4]} & 41.8 {\scriptsize [38.5, 45.0]} & 6.9 {\scriptsize [5.9, 8.1]} \\
      DG       & 39.5 {\scriptsize [34.6, 44.3]} & 29.7 {\scriptsize [24.4, 35.6]} & 44.1 {\scriptsize [41.3, 47.0]} & 9.8 {\scriptsize [8.6, 11.1]} & 40.2 {\scriptsize [36.4, 44.0]} & 8.6 {\scriptsize [7.5, 9.9]} \\
      AAM      & 40.6 {\scriptsize [35.7, 45.5]} & 29.3 {\scriptsize [24.1, 35.1]} & 45.7 {\scriptsize [43.2, 48.2]} & 10.2 {\scriptsize [9.0, 11.6]} & 42.2 {\scriptsize [39.0, 45.4]} & 14.3 {\scriptsize [12.9, 15.9]} \\
      RODS-CAS & 40.2 {\scriptsize [35.3, 45.1]} & 25.8 {\scriptsize [20.8, 31.5]} & 45.3 {\scriptsize [42.8, 47.9]} & 7.7 {\scriptsize [6.6, 8.9]} & 42.2 {\scriptsize [39.0, 45.4]} & 6.9 {\scriptsize [5.9, 8.1]} \\
      RODS-SAS & 41.0 {\scriptsize [35.8, 46.2]} & 29.7 {\scriptsize [24.4, 35.6]} & 45.3 {\scriptsize [42.8, 47.9]} & 8.2 {\scriptsize [7.0, 9.4]} & 42.2 {\scriptsize [39.0, 45.4]} & 6.6 {\scriptsize [5.6, 7.7]} \\
      \(\iq\)      & 68.0 {\scriptsize [63.1, 72.8]} & 9.0 {\scriptsize [6.1, 13.1]} & 46.1 {\scriptsize [43.7, 48.4]} & 4.2 {\scriptsize [3.4, 5.1]} & 42.6 {\scriptsize [39.5, 45.7]} & 5.9 {\scriptsize [4.9, 7.0]} \\
      \bottomrule
    \end{tabular}%
  }
  
  \resizebox{\textwidth}{!}{%
    \begin{tabular}{@{}c@{}}
  
    % --- TOP HALF ---
    \usebox{\tophalftable} \\
    
    % --- SPACING BETWEEN HALVES ---
    \\[1ex]
    
    % --- BOTTOM HALF (Expanded to top half's width) ---
    \begin{tabular*}{\wd\tophalftable}{l @{\extracolsep{\fill}} ccc}
      \toprule
      & \multicolumn{1}{c}{\texttt{MNIST}} & \multicolumn{1}{c}{\texttt{SimpleShapes}} & \multicolumn{1}{c}{\texttt{GaussianGrid}} \\
      \cmidrule(lr){2-2} \cmidrule(lr){3-3} \cmidrule(lr){4-4}
      Method & HR (\%) $\downarrow$ & HR (\%) $\downarrow$ & HR (\%) $\downarrow$ \\
      \midrule
      Baseline & 37.3 {\scriptsize [33.2, 41.6]} & 25.8 {\scriptsize [19.0, 34.0]} & 20.2 {\scriptsize [19.5, 20.8]} \\
      DG       & 23.6 {\scriptsize [20.2, 27.5]} & 27.3 {\scriptsize [20.4, 35.6]} & 11.1 {\scriptsize [10.6, 11.6]} \\
      AAM      & 34.8 {\scriptsize [30.8, 39.0]} & 55.5 {\scriptsize [46.8, 63.8]} & - \\
      RODS-CAS & 41.8 {\scriptsize [37.6, 46.1]} & 30.5 {\scriptsize [23.2, 38.9]} & 10.4 {\scriptsize [9.9, 10.9]} \\
      RODS-SAS & 55.3 {\scriptsize [50.9, 59.5]} & 28.9 {\scriptsize [21.8, 37.3]} & 19.4 {\scriptsize [18.8, 20.0]} \\
      \(\iq\)      & 10.2 {\scriptsize [7.8, 13.1]}  & 9.4 {\scriptsize [5.4, 15.7]}   & 8.9 {\scriptsize [8.4, 9.3]} \\
      \bottomrule
    \end{tabular*}
    
  \end{tabular}%
  }
\end{table}

\subsection{Ablation studies}\label{sec:ablation_studies}

In the following subsections, we provide an ablation study of each hyperparameter of \(\iq\) on the \texttt{11kHands} dataset, where hallucinations are easiest to judge subjectively. For each hyperparameter combination, we manually labeled 128 samples, resulting in a total of 4096 labeled cases, and reported feature-based metrics using 2048 samples. Because the original annotators did not take part in these studies, the HR differs from the main results in \cref{tab:main_results}. In each study, we freeze all of the hyperparameters and vary only the one of interest. We use the settings from \cref{tab:params_iq} as the optimal configuration.

We note that while feature-based metrics are not the optimal choice for evaluating image quality when hallucinations are present, they serve as a useful proxy; large collapses or expansions in these metrics indicate an inevitable change in quality. Crucially, in the following experiments, we also observe that diversity metrics (IV and DSV) consistently inflate (suggesting \emph{improvements}) even when FID grows exponentially and HR achieves almost 100\%. This observation further highlights the misalignment between feature-based metrics (for diversity in this case) and human perception. 

\subsubsection{Ablating \(\lambda\)}

\begin{figure}
    \centering
    \includegraphics[width=0.99\linewidth]{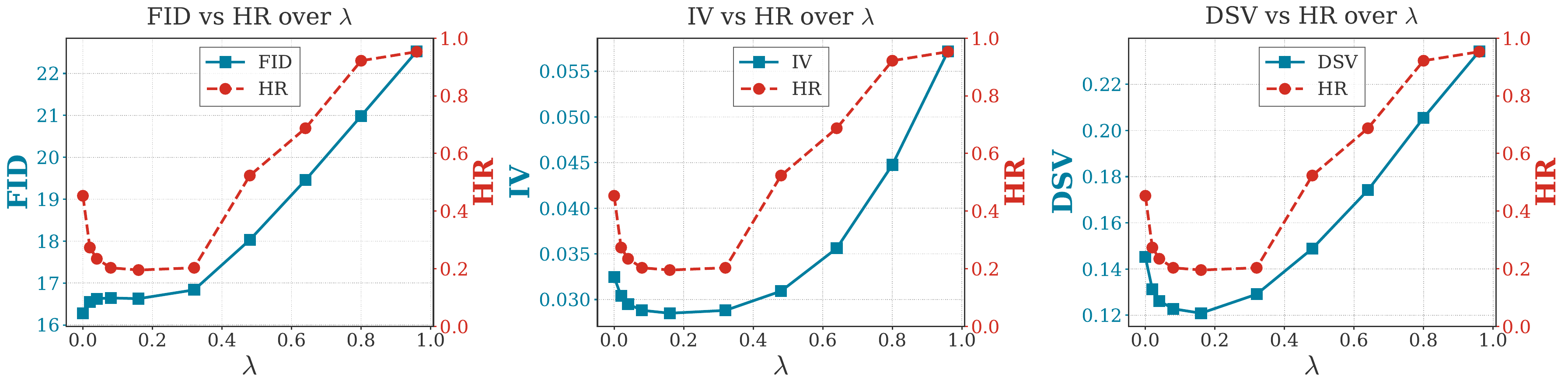}
    \caption{Ablation of the adaptive step size magnitude \(\lambda\) on the \texttt{11kHands} dataset.}
    \label{fig:guidance_ratio}
\end{figure}

\Cref{fig:guidance_ratio} provides the results of varying the adaptive step size magnitude \(\lambda\) across the \([0,1]\) range. Across all metrics, we observe that \([0.05, 0.2]\) is its optimal range, where HR plateaus around 0.3, while other metrics start to grow exponentially.

\subsubsection{Ablating time interval}

To ablate the choice of the time interval for \(\iq\), we pick \(t_1\) from an approximately log-uniform range of the EDM \(\sigma\) scale and choose the corresponding \(t_2\) so that guidance is applied at exactly 4 consecutive timesteps. \Cref{fig:guidance_intervals} reveals a sharp transition from high HR values into a clear, optimal point, after which the hallucinations once again increase, confirming the theoretical results. Moreover, a clear trade-off between feature-based metrics and HR is observed, once again showing that FID, as well as other metrics, improve when hallucinations occur frequently, and start to worsen once they decrease.

\begin{figure}
    \centering
    \includegraphics[width=0.99\linewidth]{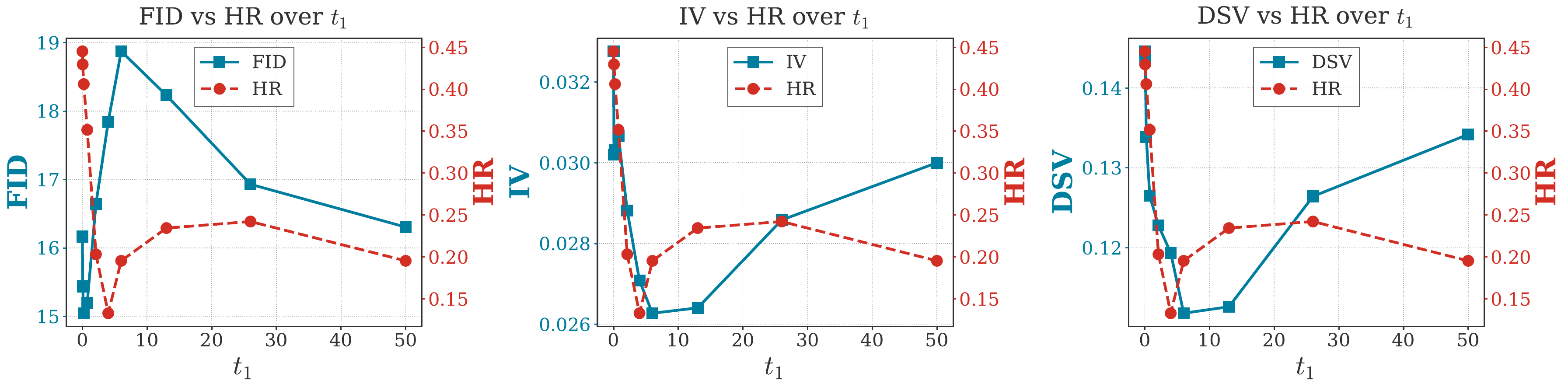}
    \caption{Ablation of the time interval bounds for \(\iq\) on the \texttt{11kHands} dataset. For each \(t_1\), its corresponding \(t_2\) was chosen so that \(\iq\) is applied at exactly 4 consecutive timesteps.}
    \label{fig:guidance_intervals}
\end{figure}

\subsubsection{Ablating filtering}

To ablate the filtering parameter \(q\), we sweep it across the \([0,1]\) range, where \(q=0\) implies that \(\iq\) is applied unconditionally across all samples, while \(q=1\) means that it is never applied. \Cref{fig:discard_percentile} visualizes the resulting dependency between \(q\), feature-based metrics, and HR, leading to two conclusions. First, there is an evident correlation between \(q\) and the metrics (negative for FID, positive for IV and DSV). Second, there appears to be an optimal choice of \(q\) at around 0.4, where the HR achieves satisfactory values while not significantly altering feature-based metrics. This once again provides evidence that \(\iq\) might be trading the inevitable cost in feature-based metrics to decrease hallucinations.

The effect of filtering is also clear when analyzing the \texttt{GaussianGrid} results (\cref{fig:hq_gauss_vis}). With \(q=0\) (depicted as \(\iq^*\)), all of the generated samples squeeze at the Gaussian modes, almost entirely zeroing out the variance of each Gaussian component. By applying filtering (\(\iq\)), this effect is eliminated and the original variance is almost matched.

\begin{figure}
    \centering
    \includegraphics[width=0.99\linewidth]{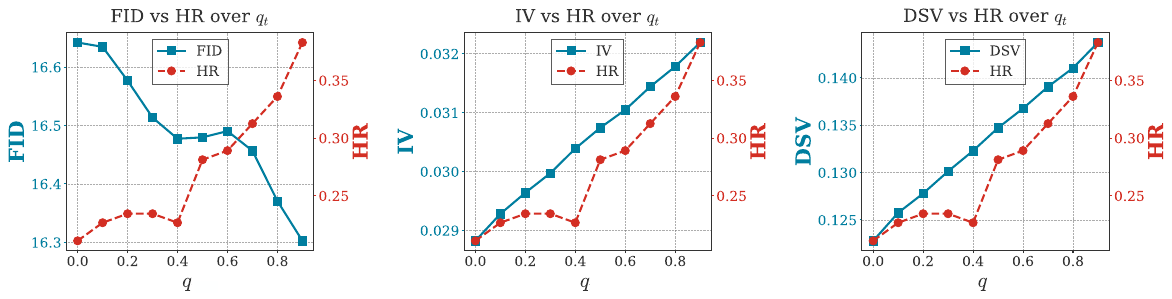}
    \caption{Ablation of the filtering parameter \(q\) on the \texttt{11kHands} dataset.}
    \label{fig:discard_percentile}
\end{figure}

\subsubsection{Ablating constant scaling}

To assess the validity of adaptive step size \(\lambda_t\), we replace it with constant scaling and sweep its values between 0.0 and 10.0 (based on the range of values we observed from the adaptive approach). \Cref{fig:guidance_scale} visualizes the results with an additional dimension for the presence of artifacts in the generated samples, \ie, oversaturated colors and severe structural distortions in more extreme cases. The results confirm the instability of constant scaling, showing that within the optimal range of the trade-off between HR and other metrics, artifacts start to appear. This behavior is not observed when using the adaptive size (\cref{fig:guidance_ratio}), confirming its validity.

\begin{figure}
    \centering
    \includegraphics[width=0.99\linewidth]{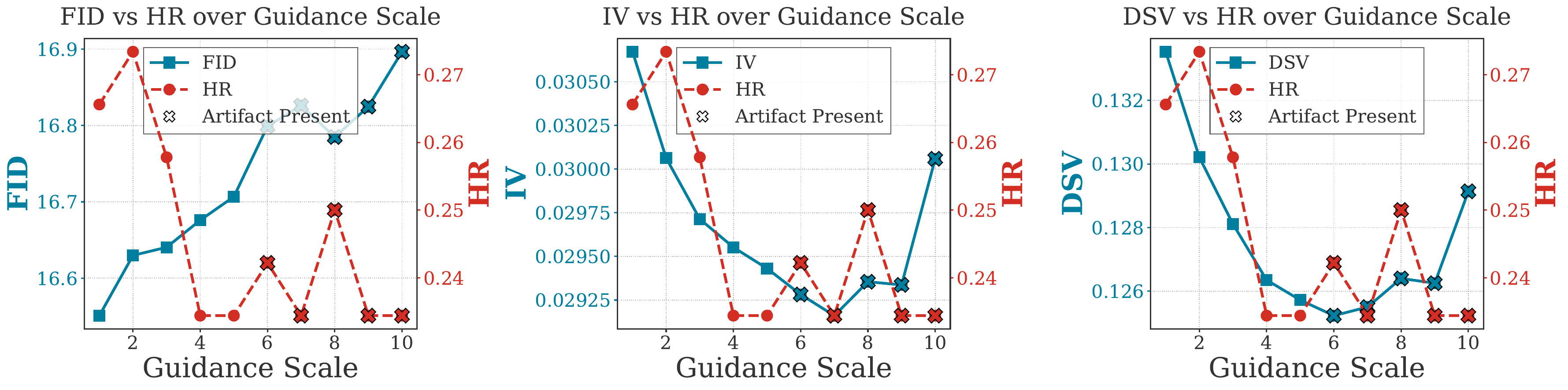}
    \caption{Ablation of constant scaling on the \texttt{11kHands} dataset.}
    \label{fig:guidance_scale}
\end{figure}

\subsection{Runtime comparison}\label{sec:runtime_comparison}

We compare the average sampling runtime of \(\iq\) with default unconditional sampling and both RODS variants as the only unconstrained methods for hallucination reduction. The comparison is performed on the most practical and high-resolution \texttt{RSNA} dataset, with results presented in \cref{tab:runtime_rsna}. While all methods add a significant overhead to baseline sampling, \(\iq\) remains around 20\% faster than both RODS\textsuperscript{CAS} and RODS\textsuperscript{SAS}. These results confirm the superiority of \(\iq\) but also point toward an important research direction of reducing the computational cost while preserving the method's performance.

\begin{table}[h]
  \caption{Runtime comparison of unconstrained methods on the \texttt{RSNA} dataset. Values are reported as the mean sampling loop time (in seconds) over 16 batches of size 8, alongside the 95\% confidence interval.}
  \label{tab:runtime_rsna}
  \centering
  \begin{tabular}{lc}
    \toprule
    Method & Runtime (s) \\
    \midrule
    Baseline & $11.49 \pm 0.63$ \\
    IQ & $49.83 \pm 0.40$ \\
    RODS\textsuperscript{CAS} & $63.86 \pm 2.27$ \\
    RODS\textsuperscript{SAS} & $60.33 \pm 2.37$ \\
    \bottomrule
  \end{tabular}
\end{table}

\subsection{Hyperparameters}\label{sec:hyperparameters}

We include the hyperparameters used for DG in \cref{tab:params_dg}, AAM in \cref{tab:params_aam}, RODS\textsuperscript{CAS} in \cref{tab:params_rods_cas}, RODS\textsuperscript{SAS} in \cref{tab:params_rods_sas}, and \(\iq\) in \cref{tab:params_iq}. These were obtained by running a grid search over approximately 500 configurations across all datasets for each baseline method.

For \(\iq\), we emphasize that \(k\), the number of i.i.d noise samples used to estimate \(\lidt\), is kept constant at \(k=32\) across all datasets. We observed that this value universally ensures the stability of the estimator, but did not experiment further with lowering it. To compute \(q_t\), we utilize 2048 reference samples for each dataset.

For \texttt{RSNA}, \(\iq\) is applied between \(t=0.7\) and \(t=0.75\), which corresponds to low noise levels within the SDB framework that utilizes an inverted U-shape noise schedule. For \texttt{SimpleShapes}, we followed \citet{aithal2024understanding}, which noted that the typical behavior observed in DMs shifts to larger timesteps and hence apply \(\iq\) at relatively higher times. However, for timesteps smaller than the indicated interval, the image is already fully formed and hence guidance has no practical effect.

\begin{table}[h]
  \caption{Hyperparameters for Dynamic Guidance (DG). The names of hyperparameters follow their respective names in our implementation. For EDM datasets, \texttt{dg\_start} and \texttt{dg\_end} are normalized to the $[0, 1]$ interval by dividing the $\sigma$ boundaries by $\sigma_{max}=80$. For DDPM datasets, they are normalized by dividing the timesteps by $T_{max}=1000$. For SDB, timesteps natively reside in the $[0, 1]$ interval.}
  \label{tab:params_dg}
  \small
  \centering
  \begin{tabular}{l ccccccc}
    \toprule
    Parameter & \texttt{11kHands} & \texttt{FFHQ} & \texttt{AFHQV2} & \texttt{MNIST} & \texttt{SimpleShapes} & \texttt{GaussianGrid} &  \texttt{RSNA}  \\
    \midrule
    \texttt{dg\_scale} & 15.0    & 10.0   & 1.0    & 10.0 & 10.0 & 10.0 & $10^5$ \\
    \texttt{dg\_start} & 0.125   & 1.0    & 0.125  & 0.5  & 0.25 & 0.25 & 0.7 \\
    \texttt{dg\_end}   & 0.00125 & 0.0125 & 0.0125 & 0.25 & 0.0  & 0.0  & 0.4 \\
    \bottomrule
  \end{tabular}
\end{table}

\begin{table}[h]
  \caption{Hyperparameters for Adaptive Attention Modulation (AAM). The names of hyperparameters follow their respective names in our implementation.}
  \label{tab:params_aam}
  \centering
  \begin{tabular}{l cccccc}
    \toprule
    Parameter & \texttt{11kHands} & \texttt{FFHQ} & \texttt{AFHQV2} & \texttt{MNIST} & \texttt{SimpleShapes} &  \texttt{RSNA}  \\
    \midrule
    $\gamma$ & 2.5  & 1.5  & 2.5  & 2.5 & 2.5  & 2.0  \\
    $\beta$  & 1.25 & 1.3  & 1.3  & 0.1 & 0.3  & 0.01 \\
    $N$      & 15   & 15   & 15   & 15  & 5    & 15   \\
    $\eta$   & 0.1  & 0.05 & 0.05 & 0.1 & 0.05 & 0.01 \\
    \bottomrule
  \end{tabular}
\end{table}

\begin{table}[h]
\small
  \caption{Hyperparameters for RODS\textsuperscript{CAS}. The names of hyperparameters follow their respective names in our implementation.}
  \label{tab:params_rods_cas}
  \centering
  \begin{tabular}{l ccccccc}
    \toprule
    Parameter & \texttt{11kHands} & \texttt{FFHQ} & \texttt{AFHQV2} & \texttt{MNIST} & \texttt{SimpleShapes} & \texttt{GaussianGrid} &  \texttt{RSNA}  \\
    \midrule
    \texttt{perturb\_scale}  & 30.0  & 8.0  & 1.0 & 5.0   & 1.0   & 0.1    & 0.1   \\
    \texttt{cor\_threshold}  & 0.014 & 0.09 & 0.1 & 0.005 & 0.005 & 0.0001 & 0.002 \\
    \texttt{num\_adv\_steps} & 1     & 1    & 1   & 1     & 1     & 1      & 1     \\
    \bottomrule
  \end{tabular}
\end{table}

\begin{table}[h]
    \small
  \caption{Hyperparameters for RODS\textsuperscript{SAS}. The names of hyperparameters follow their respective names in our implementation.}
  \label{tab:params_rods_sas}
  \centering
  \begin{tabular}{l ccccccc}
    \toprule
    Parameter & \texttt{11kHands} & \texttt{FFHQ} & \texttt{AFHQV2} & \texttt{MNIST} & \texttt{SimpleShapes} & \texttt{GaussianGrid} &  \texttt{RSNA}  \\
    \midrule
    \texttt{perturb\_scale}  & 30.0  & 8.0  & 1.0 & 5.0  & 20.0 & 0.1    & 0.1   \\
    \texttt{cor\_threshold}  & 0.014 & 0.09 & 0.1 & 0.01 & 0.05 & 0.0001 & 0.002 \\
    \texttt{num\_adv\_steps} & 1     & 1    & 1   & 1    & 1    & 1      & 1     \\
    \bottomrule
  \end{tabular}
\end{table}

\begin{table}[h]
  \caption{Hyperparameters for \(\iq\). The names of hyperparameters follow their respective names in our implementation. For EDM datasets, $t_1$ and $t_2$ are normalized to the $[0, 1]$ interval by dividing the guidance $\sigma$ boundaries by $\sigma_{max}=80$. For DDPM datasets, they are normalized by dividing the timesteps by $T_{max}=1000$. For SDB, timesteps natively reside in the $[0, 1]$ interval.}
  \label{tab:params_iq}
  \small
  \centering
  \begin{tabular}{l ccccccc}
    \toprule
    Parameter & \texttt{11kHands} & \texttt{FFHQ} & \texttt{AFHQV2} & \texttt{MNIST} & \texttt{SimpleShapes} & \texttt{GaussianGrid} &  \texttt{RSNA}  \\
    \midrule
    $t_1$     & 0.025  & 0.025  & 0.0088 & 0.24 & 0.48 & 0.0  & 0.70 \\
    $t_2$     & 0.0625 & 0.0375 & 0.025  & 0.40 & 0.96 & 0.72 & 0.75 \\
    $\lambda$ & 0.08   & 0.035  & 0.022  & 0.5  & 0.08 & 0.9  & 0.8  \\
    $q$       & 0.4    & 0.05   & 0.2    & 0.0  & 0.05 & 0.65  & 0.0  \\
    $k$       & 32     & 32     & 32     & 32   & 32   & 32   & 32   \\
    \bottomrule
  \end{tabular}
\end{table}

\subsection{Pseudocode}\label{sec:pseudocode}

We provide detailed pseudocode of \(\iq\) in \cref{alg:iq}.

\begin{algorithm}[htbp]
\caption{\textbf{Intrinsic Quenching (IQ): sampling step.}\\
\small{Note: \texttt{autograd} returns the gradient of the energy function w.r.t the input.}}
\label{alg:iq}
\begin{lstlisting}[language=PseudoPython]
# net: EDM denoiser predicting clean data x_0
# x: noisy input batch at time t
# lambda: target magnitude ratio for guidance
# q: target quantile for filtration (e.g., 0.4)

# Apply Intrinsic Quenching only within the window [t_1, t_2]
if t < t_1 or t > t_2:
    return stopgrad(net(x, t))

x.requires_grad_(True)

# Predict posterior mean x_0_hat and evaluate DSM energy (LID)
x_0_hat = net(x, t)
LID = dsm_loss(x_0_hat, x, t)

# Compute energy gradient w.r.t the input state x
grad = autograd(LID.sum(), x)

# Project guidance into data space (Sigma_t = t^2 * I for EDM)
raw_guidance = (t ** 2) * grad

# Calculate adaptive scale to maintain constant magnitude ratio
nat_update = x_0_hat - x
scale = (lambda * norm(nat_update)) / (norm(raw_guidance) + 1e-8)

# Filtration strategy
# q_t is the q-th quantile of the baseline LID distribution at time t
q_t = quantile(baseline_LID(t), q)
mask = where(LID >= q_t, 1.0, 0.0)

# Apply masked adaptive EDM update
guided_x_0_hat = x_0_hat - (mask * scale * raw_guidance)

return stopgrad(guided_x_0_hat)
\end{lstlisting}
\end{algorithm}

\subsection{Resources}\label{sec:resources}

All of the experiments were performed on NVIDIA A100 40GB and H100 80GB GPUs, with runtime (\cref{tab:runtime_rsna}) reported on the former. The entire project required around 1000 GPU hours in total (this includes any preliminary or grid search experiment). The GPUs were part of NVIDIA DGX nodes.

\subsection{Qualitative examples}\label{sec:qualitative_examples}

We include more qualitative comparisons on \texttt{11kHands} (\cref{fig:11k_hands_vis}), \texttt{AFHQV2} (\cref{fig:afhqv2_vis}), \texttt{FFHQ} (\cref{fig:ffhq_vis}), \texttt{MNIST} (\cref{fig:mnist_vis}), \texttt{SimpleShapes} (\cref{fig:shapes_vis}), high-resolution version of \texttt{GaussianGrid} results (\cref{fig:hq_gauss_vis}), ground truth scans and their reconstructions from  \texttt{RSNA} (\cref{fig:rsna_recon}) and the corresponding qualitative comparison (\cref{fig:rsna_vis}).

\begin{figure}
    \centering
    \includegraphics[width=0.99\linewidth]{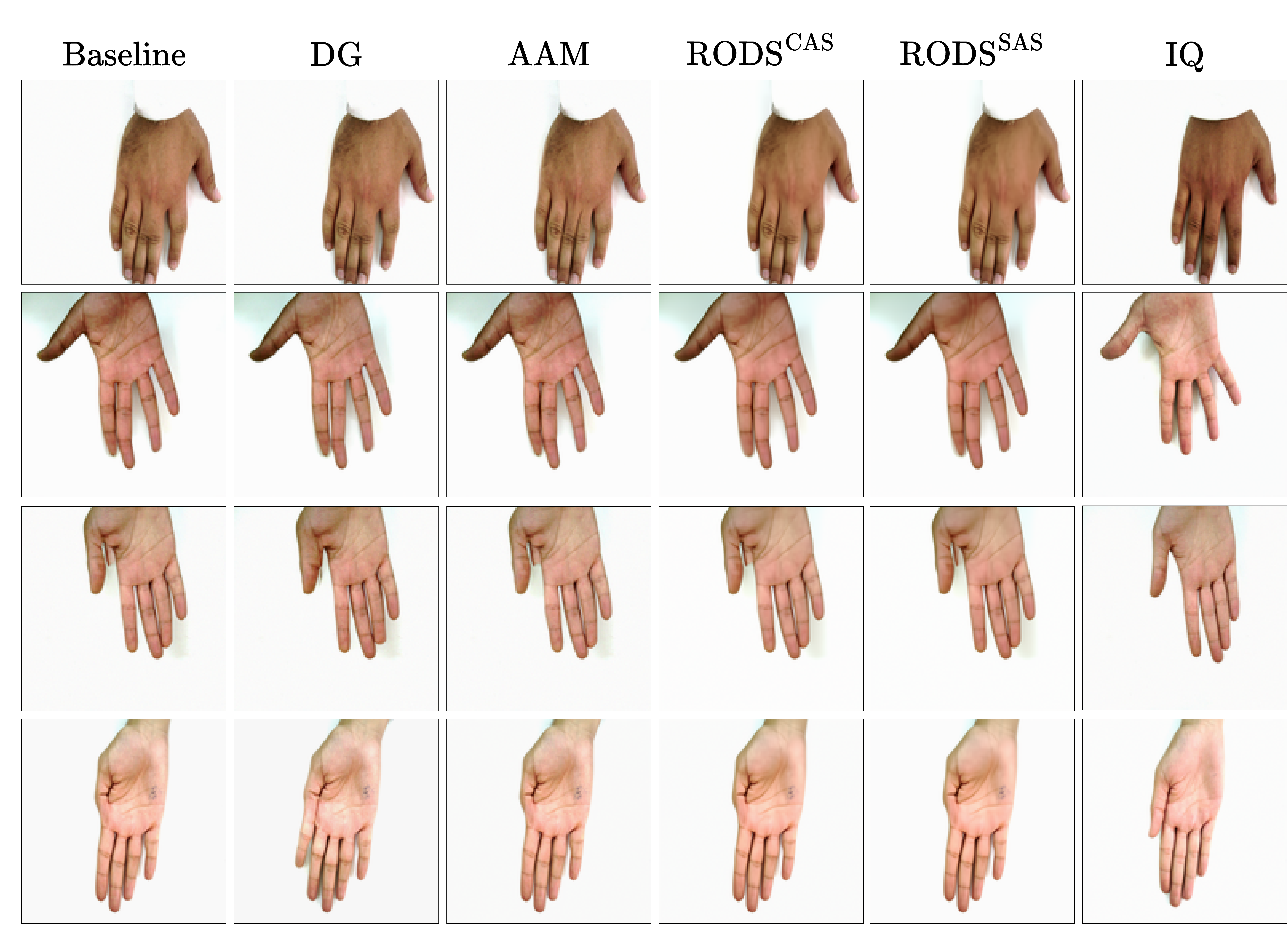}
    \caption{Qualitative comparison of all sampling methods on \texttt{11kHands}.}
    \label{fig:11k_hands_vis}
\end{figure}

\begin{figure}
    \centering
    \includegraphics[width=0.99\linewidth]{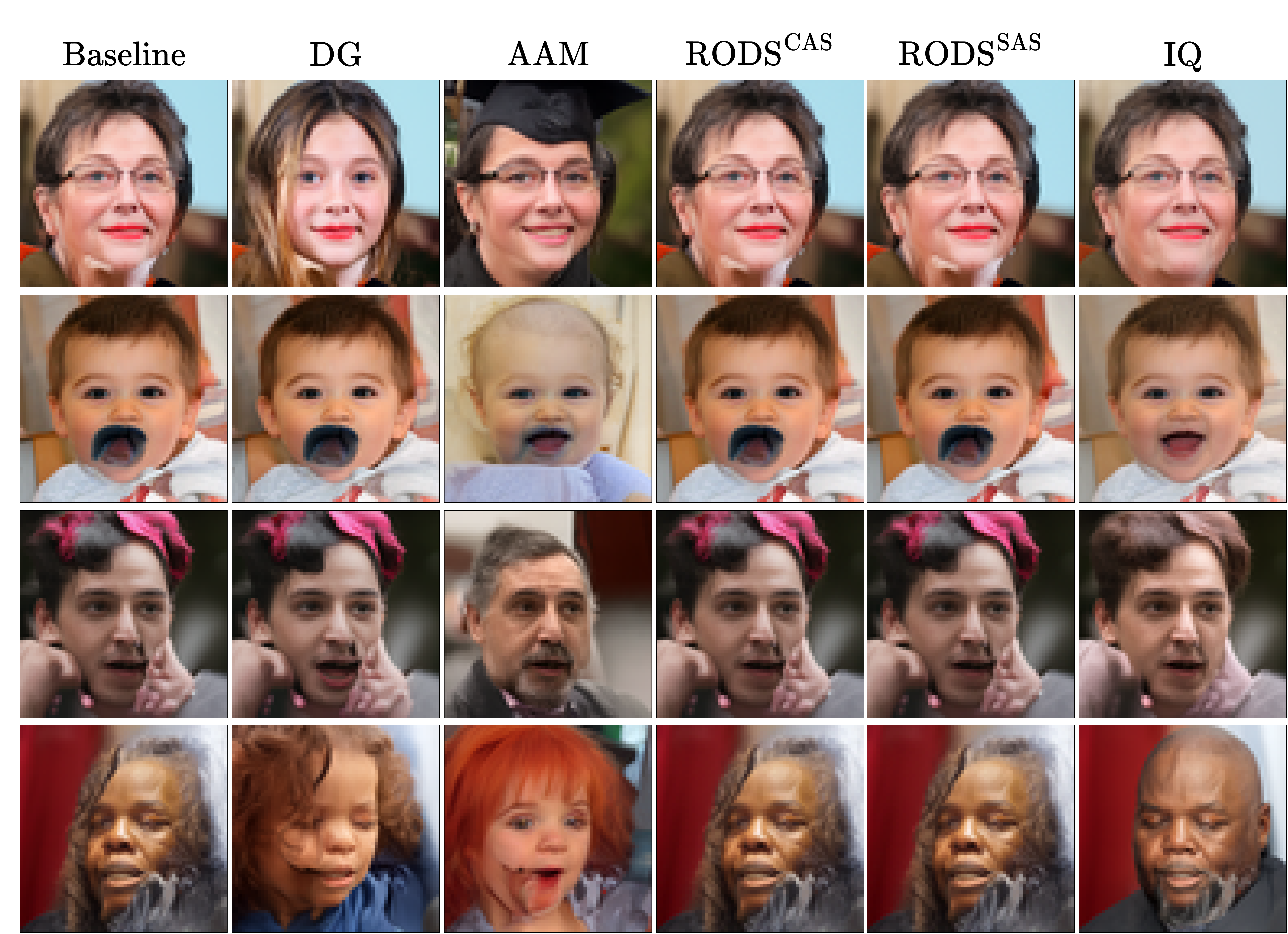}
    \caption{Qualitative comparison of all sampling methods on \texttt{FFHQ}.}
    \label{fig:ffhq_vis}
\end{figure}

\begin{figure}
    \centering
    \includegraphics[width=0.99\linewidth]{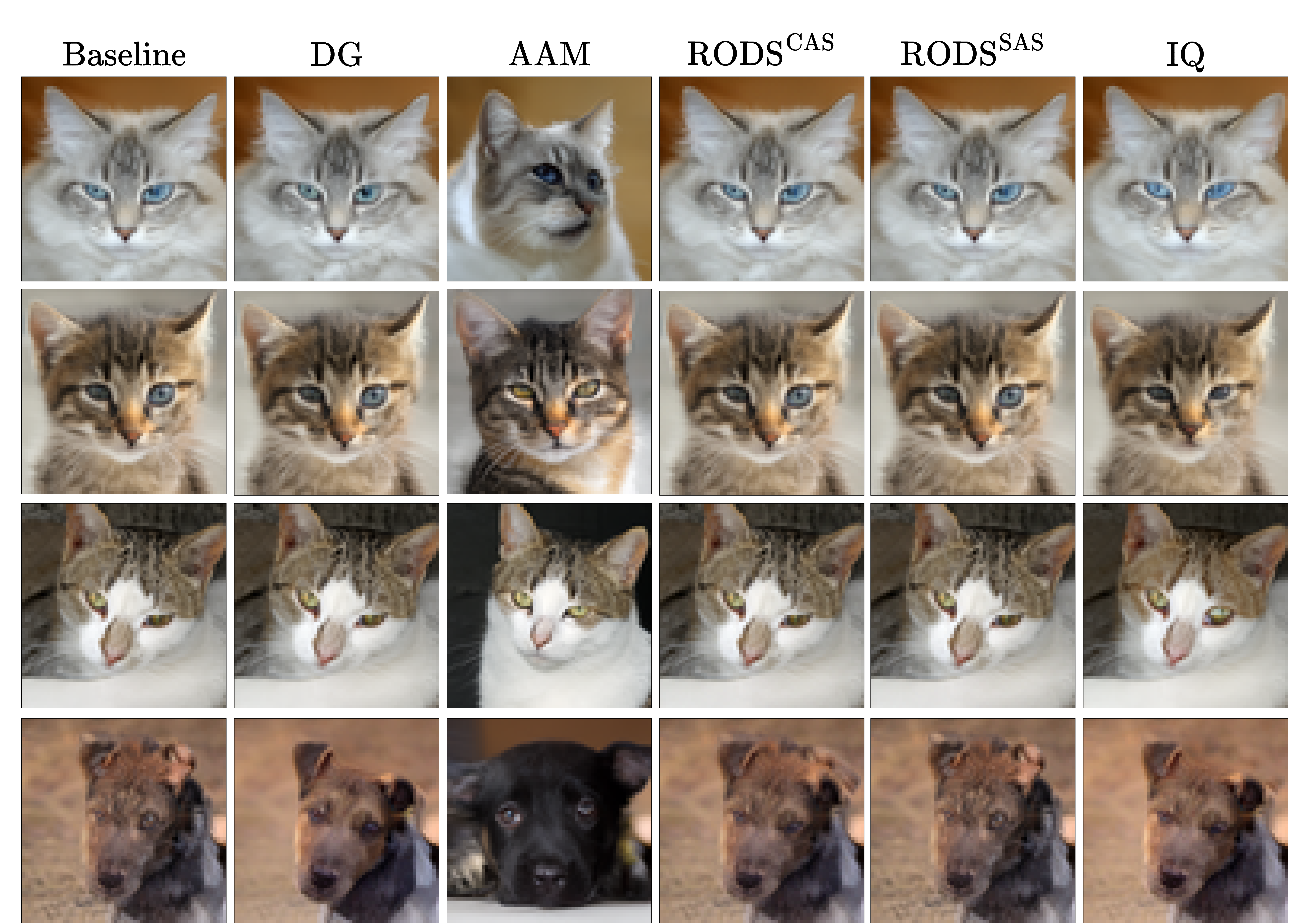}
    \caption{Qualitative comparison of all sampling methods on \texttt{AFHQV2}.}
    \label{fig:afhqv2_vis}
\end{figure}

\begin{figure}
    \centering
    \includegraphics[width=0.99\linewidth]{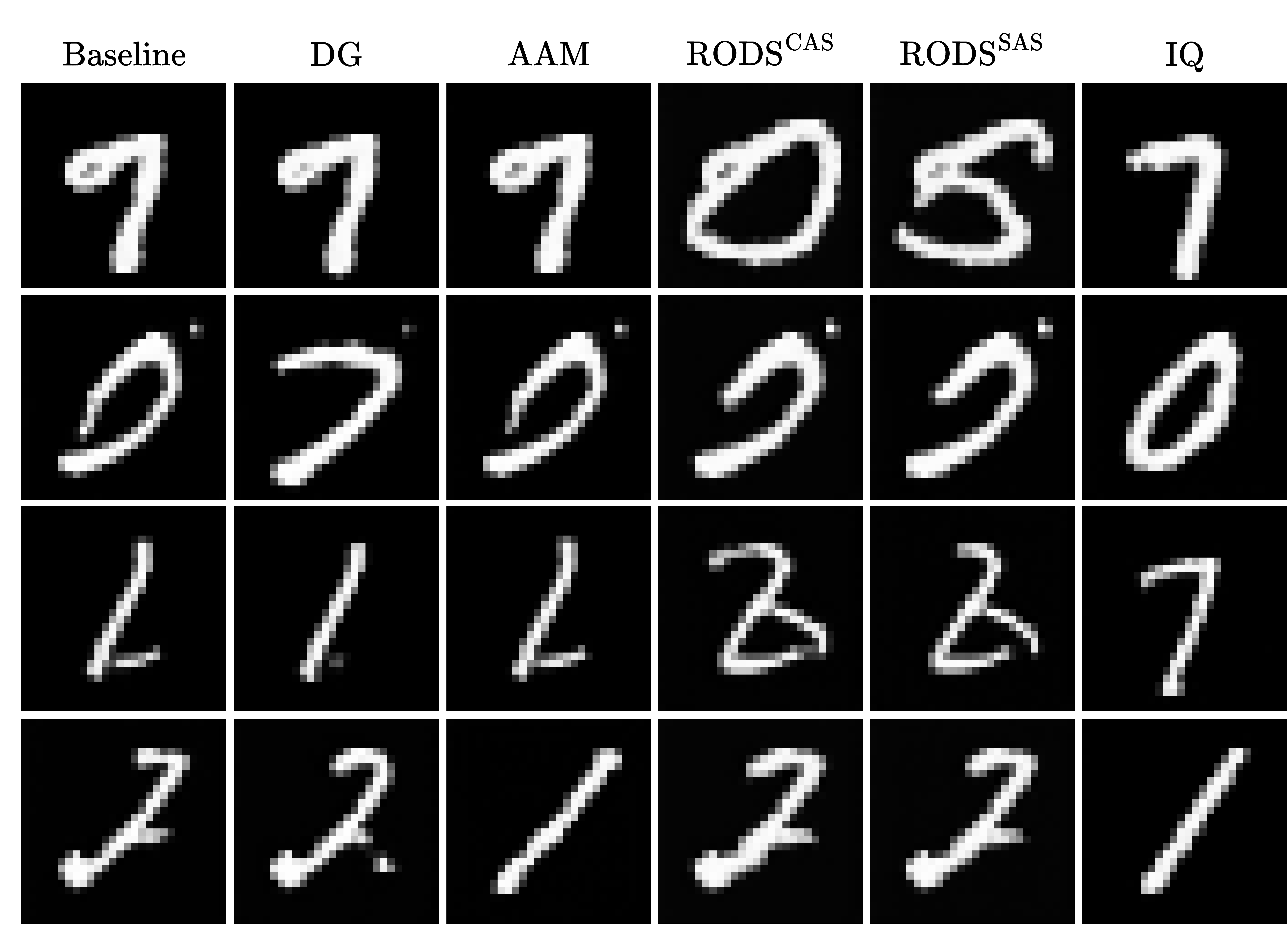}
    \caption{Qualitative comparison of all sampling methods on \texttt{MNIST}.}
    \label{fig:mnist_vis}
\end{figure}

\begin{figure}
    \centering
    \includegraphics[width=0.99\linewidth]{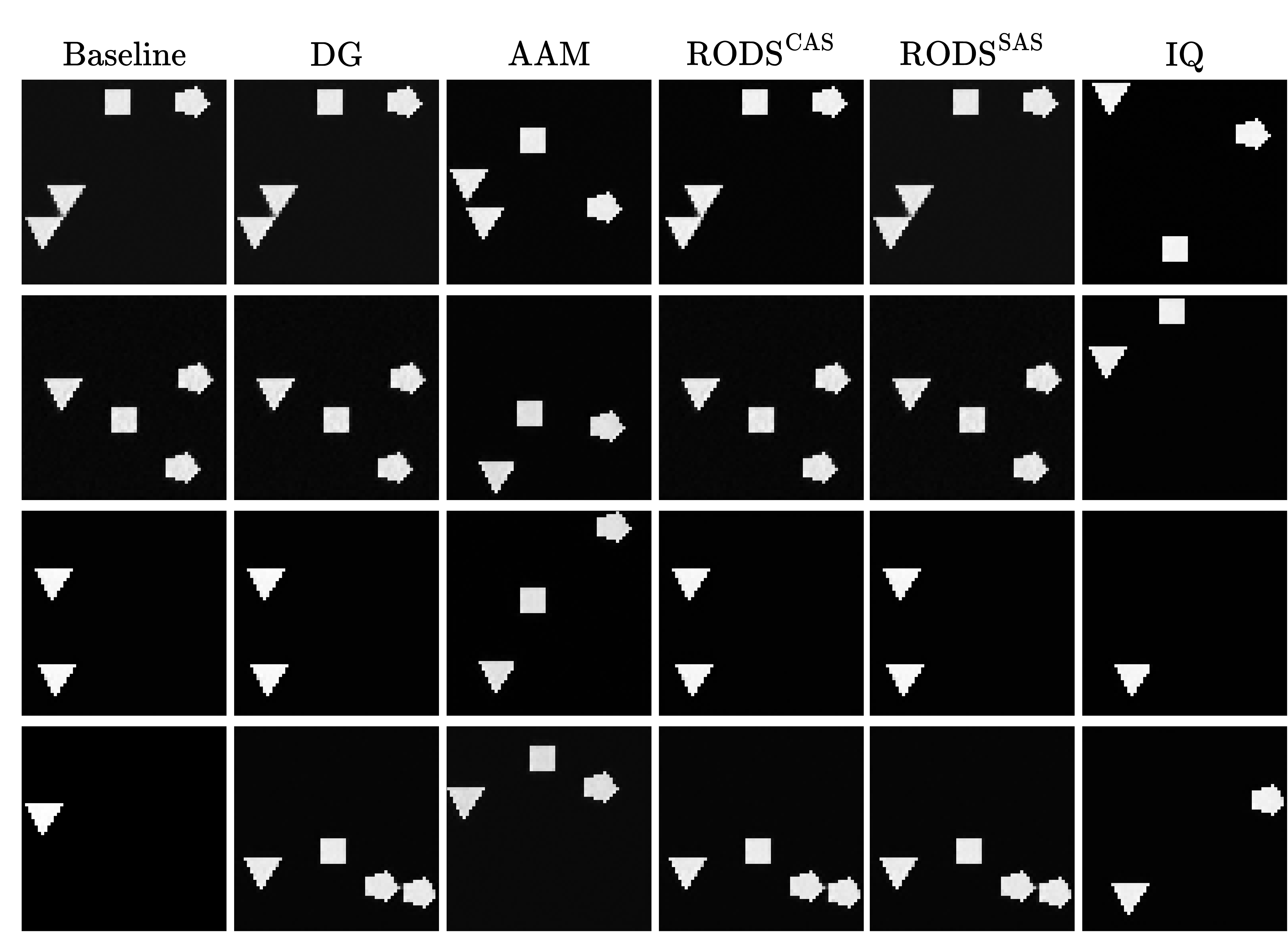}
    \caption{Qualitative comparison of all sampling methods on \texttt{SimpleShapes}.}
    \label{fig:shapes_vis}
\end{figure}

\begin{figure}
    \centering    \includegraphics[width=0.98\linewidth]{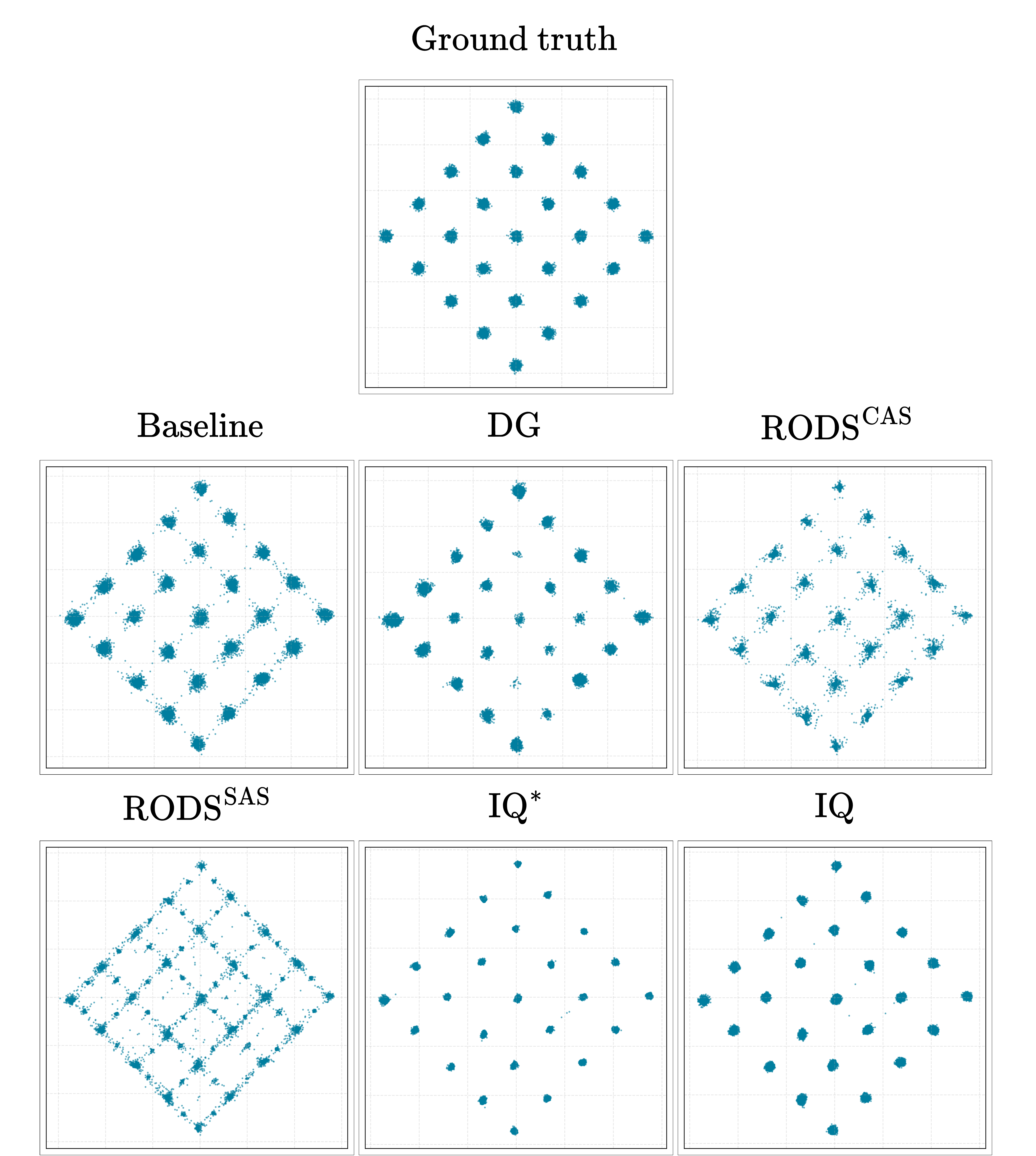}
    \caption{High-resolution visualization of samples generated by each method on the \texttt{GaussianGrid} dataset. Each plot depicts 16384 samples.}
    \label{fig:hq_gauss_vis}
\end{figure}

\begin{figure}
    \centering
    \includegraphics[width=0.99\linewidth]{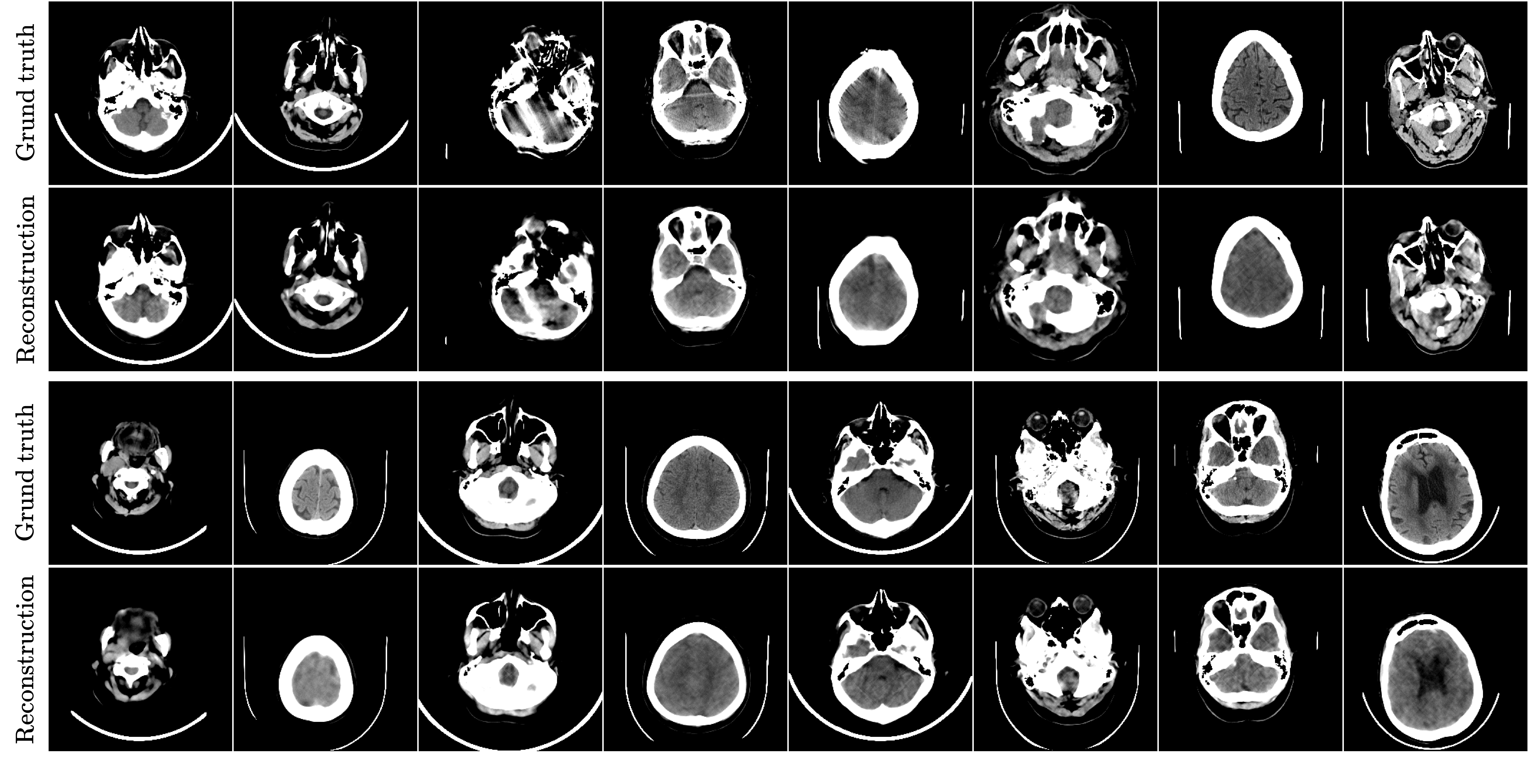}
    \caption{Ground truth brain CT scans from the \texttt{RSNA} dataset and their corresponding reconstructions obtained with SDB using 100-step sampling.}
    \label{fig:rsna_recon}
\end{figure}

\begin{figure}
    \centering
    \includegraphics[width=0.99\linewidth]{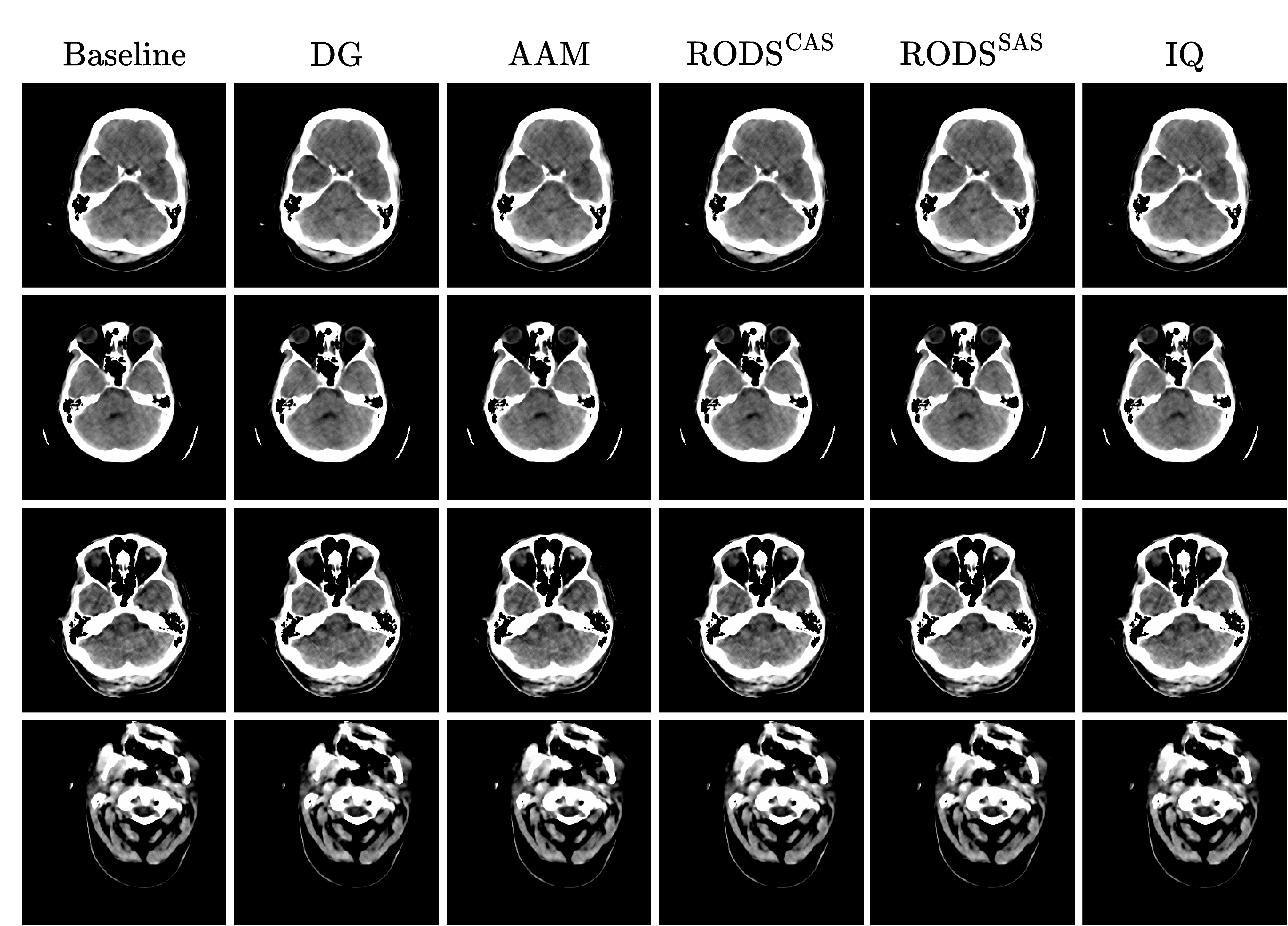}
    \caption{Qualitative comparison of all sampling methods on \texttt{RSNA}.}
    \label{fig:rsna_vis}
\end{figure}
%%%%%%%%%%%%%%%%%%%%%%%%%%%%%%%%%%%%%%%%%%%%%%%%%%%%%%%%%%%%

\clearpage

\end{document}